%File: anonymous-submission-latex-2025.tex
\documentclass[letterpaper]{article} % DO NOT CHANGE THIS
\usepackage{aaai25}  % DO NOT CHANGE THIS
\usepackage{times}  % DO NOT CHANGE THIS
\usepackage{helvet}  % DO NOT CHANGE THIS
\usepackage{courier}  % DO NOT CHANGE THIS
\usepackage[hyphens]{url}  % DO NOT CHANGE THIS
\usepackage{graphicx} % DO NOT CHANGE THIS
\urlstyle{rm} % DO NOT CHANGE THIS
  % DO NOT CHANGE THIS
\usepackage{natbib}  % DO NOT CHANGE THIS AND DO NOT ADD ANY OPTIONS TO IT
\usepackage{caption} % DO NOT CHANGE THIS AND DO NOT ADD ANY OPTIONS TO IT
\frenchspacing  % DO NOT CHANGE THIS
\setlength{\pdfpagewidth}{8.5in} % DO NOT CHANGE THIS
\setlength{\pdfpageheight}{11in} % DO NOT CHANGE THIS
%
% These are recommended to typeset algorithms but not required. See the subsubsection on algorithms. Remove them if you don't have algorithms in your paper.
\usepackage[ruled,vlined]{algorithm2e}
\usepackage{xcolor}
% \usepackage{algorithm}
% \usepackage{algorithmic}

%
% These are are recommended to typeset listings but not required. See the subsubsection on listing. Remove this block if you don't have listings in your paper.
% \usepackage{newfloat}
% \usepackage{listings}
% \DeclareCaptionStyle{ruled}{labelfont=normalfont,labelsep=colon,strut=off} % DO NOT CHANGE THIS
% \lstset{%
% 	basicstyle={\footnotesize\ttfamily},% footnotesize acceptable for monospace
% 	numbers=left,numberstyle=\footnotesize,xleftmargin=2em,% show line numbers, remove this entire line if you don't want the numbers.
% 	aboveskip=0pt,belowskip=0pt,%
% 	showstringspaces=false,tabsize=2,breaklines=true}
% \floatstyle{ruled}
% \newfloat{listing}{tb}{lst}{}
% \floatname{listing}{Listing}
%
% Keep the \pdfinfo as shown here. There's no need
% for you to add the /Title and /Author tags.
\pdfinfo{
/TemplateVersion (2025.1)
}

\usepackage{cite}
\usepackage{amsmath,amssymb,amsfonts}
\usepackage{amsthm}
\usepackage{booktabs}

\newtheorem{definition}{Definition}
\newtheorem{proposition}{Proposition}
\newtheorem{theorem}{Theorem}
\newtheorem{lemma}{Lemma}
\newtheorem{corollary}{Corollary}
\newtheorem{assumption}{Assumption}

\setcounter{secnumdepth}{2} %May be changed to 1 or 2 if section numbers are desired.

% The file aaai25.sty is the style file for AAAI Press
% proceedings, working notes, and technical reports.
%

% Title

% Your title must be in mixed case, not sentence case.
% That means all verbs (including short verbs like be, is, using,and go),
% nouns, adverbs, adjectives should be capitalized, including both words in hyphenated terms, while
% articles, conjunctions, and prepositions are lower case unless they
% directly follow a colon or long dash

\title{
A Communication and Computation Efficient Fully First-order Method for Decentralized Bilevel Optimization}
\author {
    % Authors
     Ming Wen\textsuperscript{\rm 1}\thanks{Email: mwen23@m.fudan.edu.cn}, 
     Chengchang Liu\textsuperscript{\rm 2}, 
     Ahmed M. Abdelmoniem\textsuperscript{\rm 3},
     Yipeng Zhou\textsuperscript{\rm 4},
     Yuedong Xu\textsuperscript{\rm 1}\thanks{Email: ydxu@fudan.edu.cn}
}
\affiliations {
    % Affiliations
    \textsuperscript{\rm 1}Fudan University,\\
    \textsuperscript{\rm 2}The Chinese University of Hong Kong\\
    \textsuperscript{\rm 3}Queen Mary, University of London\\
    \textsuperscript{\rm 4}Macquarie University\\
}

% REMOVE THIS: bibentry
% This is only needed to show inline citations in the guidelines document. You should not need it and can safely delete it.
\usepackage{bibentry}
% END REMOVE bibentry

\begin{document}

\nocopyright
\maketitle

\begin{abstract}
Bilevel optimization, crucial for hyperparameter tuning, meta-learning and reinforcement learning, remains less explored in the decentralized learning paradigm, such as decentralized federated learning (DFL). 
Typically, decentralized bilevel methods rely on both gradients and Hessian matrices to approximate hypergradients of upper-level models. 
However, acquiring and sharing the second-order oracle is compute and communication intensive. % and sharing this information incurs heavy communication overhead. 
To overcome these challenges, this paper introduces a fully first-order decentralized method for decentralized Bilevel optimization, $\text{C}^2$DFB which is both compute- and communicate- efficient.
In  $\text{C}^2$DFB, each learning node optimizes a min-min-max problem to approximate hypergradient by exclusively using gradients information.  
To reduce the traffic load at the inner-loop of solving the lower-level problem, 
 $\text{C}^2$DFB incorporates a lightweight communication protocol for efficiently transmitting compressed residuals of local parameters. % during the inner loops. 
Rigorous theoretical analysis ensures its convergence % of the algorithm,  indicating a first-order oracle calls 
of $\tilde{\mathcal{O}}(\epsilon^{-4})$. 
Experiments on hyperparameter tuning and hyper-representation tasks validate the superiority of $\text{C}^2$DFB across various typologies and heterogeneous data distributions.

\end{abstract}

\section{Introduction}

% {\color{red} Paragraph 1: A description of federated learning and bilevel optimization}

% {\color{red} Paragraph 2: Challenges faced by federated and decentralized optimization: limited computing power and long communication latency}

% {\color{red} Paragraph 3: introduce the problem that we are going to study}

% {\color{red} Paragraph 4: How we solve the problem}

% {\color{red} Paragraph 5: More details}

% {\color{red} Paragraph 6: Summary of contributions}

% {\color{blue}

Bilevel optimization covers a significant category of hierarchical optimization problems.  
Solving such problems involves an upper-level problem that depends on the solution of a lower-level problem. 
This approach has a variety of important applications in machine learning, including hyperparameter optimization \cite{pmlr-v162-gao22j,pmlr-v139-ji21c}, meta-learning  \cite{Qin_2023_CVPR,pmlr-v80-franceschi18a}, and reinforcement learning \cite{doi:10.1137/20M1387341}. 
For instance, when tuning hyperparameters of machine learning models, the model training is considered the lower-level problem, while the selection of hyperparameters is the upper-level problem.

However, solving  bilevel optimization in the decentralized learning paradigm is both compute- and communication-intensive, whereas it receives little research attention from existing works.  
%limits its adoption in resource-constrained nodes. 
In decentralized environment, %At an iteration, 
each learning node is required to compute the inverse of a Hessian matrix for the lower-level optimization per training round, 
incurring a complexity order of $\mathcal{O}(d^3)$, where $d$ is the model size. 
Instead of sending gradients, it needs to share its Hessian matrix, which has an order of $\mathcal{O}(d^2)$ \cite{pmlr-v162-tarzanagh22a} for communication.

Above challenges for solving bilevel optimization are further exacerbated by privacy preservation constraints, which give rise to the emergence of decentralized federated learning (DFL) \cite{10.1145/3494834.3500240}. 
DFL is a serverless variant of federated learning (FL) for coordinating multiple nodes to co-train a global model. 
In DFL, learning nodes perform local model updates and share them directly with one another.
Despite its flexibility, efficiently solving bilevel optimization is especially critical for DFL because \cite{pmlr-v202-shi23d}: 1) the limited computational resources on learning nodes make solving complex bilevel problems challenging; and 2) the heterogeneous data distribution across nodes coupled with the lack of a central server slows down convergence, leading to increased computational and communication overhead.

This paper addresses the problem of 
$m$ clients collaboratively solving a hyper-objective bilevel problem in a decentralized manner \cite{pmlr-v202-chen23n, NEURIPS2022_01db36a6}, as formulated in equation \eqref{opt_problem}:
\begin{equation}
\begin{aligned}
    \min_{x \in \mathbb{R}^{d_x}} \psi(x) &:= \frac{1}{m}\sum_{i=1}^m{f_i(x, y^\ast(x))},\\
    y^\ast(x) = \underset{y \in \mathbb{R}^{d_y}}{\arg\min}\, & g(x,y) := \left\{\frac{1}{m}\sum_{i=1}^m{ g_i(x, y)}\right\},
    \label{opt_problem}
\end{aligned}
\end{equation}
where $m$ is the number of nodes in the decentralized topology.
In this context, each learning node deals with distinct upper-level (UL) and lower-level (LL) problems, $f_i$ and $g_i$ respectively, derived from their various local datasets. 

To tackle the problem above in a resource-friendly manner, we proposes a \textbf{C}ompressed \textbf{C}ommunication-efficient \textbf{D}ecentralized \textbf{F}irst Order \textbf{B}ilvel Optimization method ($\text{C}^2$DFB)
by relying solely on first-order oracles and transmitting compressed residuals of local parameters.
Specifically, our algorithm solves a reorganized min-min-max problem instead of the original bilevel problem, enabling us to use only first-order oracles, which significantly reduces the computational complexity of acquiring the hypergradient compared to the state-of-the-art baselines. %bilevel decentralized algorithms. % We demonstrate that this approximation is a reliable proxy for the original problem, even in a decentralized setting.  
To further enhance communication efficiency, we devise the reference point of consensus values in $\text{C}^2$DFB, through which neighbor nodes can compress residuals of updated models and their reference points with mitigated error accumulation for both biased and unbiased compressors while avoiding extra transmission of error information.
%Additionally, each client maintains a reference point of consensus values, which are updated by its neighbors' compressed residuals of updated models and their reference points. This technique mitigates error accumulation for both biased and unbiased compressors while avoiding extra transmission of error information. 
Furthermore, $\text{C}^2$DFB is flexible in  incorporating strategies to boost compute efficiency,
such as gradient tracking and consensus step size adjustments. 
Theoretical analysis proves the convergence of our algorithm, %ensuring it can 
achieving the $\epsilon$-first order stationary point of the hyper-objective. 

In summary, our contributions are:
\begin{itemize}
    \item We propose % a Compressed Communication-efficient Decentralized First Order Bilevel Optimization method ($\text{C}^2$DFB) 
    the $\text{C}^2$DFB algorithm specifically tailored for decentralized settings. Our algorithm leverages first-order gradient oracles without the  need for second-order information, significantly alleviating computational overhead.
    \item We introduce a reference point-based compression strategy to diminish communication volume by transmitting only compressed residuals of local parameters. This approach ensures that both consensus and compression errors are mitigated during the training process. %, for both biased and unbiased compressors.
    \item We provide the theoretical guarantee of convergence for our $\text{C}^2$DFB algorithm %a decentralized fully first-order bilevel algorithm 
    without additional assumption on gradient or data heterogeneity. Our result demonstrates that it takes $\mathcal{O}(\epsilon^{-4}\log(\epsilon^{-4}))$ gradient calls to reach the first-order stationary point.
    \item We conduct extensive experiments over various topology, heterogeneity and models to evaluate $\text{C}^2$DFB. % the effectiveness of our method. 
    The results indicate that our approach remarkably surpasses existing second-order-based methods and single-loop methods in terms of both convergence rate and communication efficiency.
\end{itemize}
% \paragraph{Paper Organization.}

The remainder of this paper is organized as follows: Section \ref{sec:relatedwork}
reviews related works, highlighting key advancements, and the ongoing challenges in the field. Section \ref{sec:preliminary} introduces the preliminary concepts which are necessary for understanding our approach. 
In Section \ref{algori}, we detail the design of our Compressed Communication-efficient Decentralized First Order Bilevel Optimization (C$^{2}$DFB) algorithm. Section \ref{sec:convergence} provides a rigorous convergence analysis, demonstrating the effectiveness of C$^{2}$DFB. 
Finally, Section \ref{sec:exp} presents experimental evaluations conducted on two distinct tasks across various experimental settings.
\section{Related work}\label{sec:relatedwork}

\subsection{Bilevel problem optimization}

It is non-trivial to approximately derive the  stationary point in \eqref{opt_problem}, particularly when the UL function $\psi(x)$ is nonconvex and may exhibit non-differentiability or discontinuity. A widely accepted approach in the literature \cite{NEURIPS2022_c5cf13bf, NEURIPS2022_1413947e} assumes strong convexity in the LL problem with respect to $y$. Under this condition, %the hyper-gradient $\nabla \psi(x)$ can be expressed as
%\begin{equation}
%\begin{aligned}
 %   \nabla \psi(x) &= \left(\frac{1}{m}\sum_{i=1}^m \nabla_x f_i(x, y^*(x))\right) - \nabla_{xy} g(x, y^*(x)) \\ &\quad \left[ \nabla_y^2 g(x, y^*(x)) \right]^{-1} \left(\frac{1}{m}\sum_{i=1}^m \nabla_y f_i(x, y^*(x))\right).
%\end{aligned}
%\label{hyper_grad}
%\end{equation}
%The formulation necessitates the use of
it is necessary to use the second-order methods due to the involvement of the Hessian matrix.

However, the process of inverting this matrix introduces nonlinearity computation, making bilevel optimization challenging in decentralized environment because %as illustrated below
%\begin{align*}
%     \nabla_{xy} g(x, y^*(x)) & \left[ \nabla_y^2 g(x, y^*(x)) \right]^{-1} \neq
 %    \\&\frac{1}{m}\sum_{i=1}^m \nabla_{xy} g_i(x, y^*(x)) \left[ \nabla_y^2 g_i(x, y^*(x)) \right]^{-1}.
%\end{align*}
%This disparity indicates that 
averaging the local hyper-gradients does not exactly equate to the global gradients. Consequently, there is an imperative need to compute and communicate the Hessian matrix of local objectives. Performing such computations often becomes impractical for resource-limited nodes commonly found in decentralized systems \cite{pmlr-v202-chen23n}.

\subsection{Decentralized bilevel optimization} 
% Federated bilevel optimization has been studied in the literature. 

Recently, bilevel optimization in decentralized systems receives arising attention due to the emergence of FL,
%To scale the method efficiently across clients, 
%approaches have been investigated in designing algorithms specifically for
invoking exploration on federated bilevel optimization problems \cite{pmlr-v162-tarzanagh22a, NEURIPS2023_04bd683d,NEURIPS2023_686a3f32} .

FedNest \cite{pmlr-v162-tarzanagh22a} incorporated variance reduction and inverse Hessian-gradient product approximation techniques  to improve the applicability of federated algorithms in bilevel optimization. FedBiOAcc \cite{NEURIPS2023_04bd683d}  was proposed to efficiently evaluate the hyper-gradient in a distributed setting by formulating it as a quadratic federated optimization problem, which can be accelerated by utilizing momentum-based variance reduction techniques. % to accelerate convergence. 
SimFBO \cite{NEURIPS2023_686a3f32} further addressed challenges posed by extensive matrix-vector product calculations in federated settings. 
This is achieved through updating the UL and LL models in a single loop iteration, aligning with an auxiliary vector for Hessian-inverse vector products via the gradient of a local quadratic function.
% % FedBiOAcc \cite{NEURIPS2023_04bd683d} formulates a federated quadratic optimization problem and utilizes momentum-based variance reduction techniques to accelerate its convergence. 

% % XU: SimFBO\cite {NEURIPS2023_686a3f32} addresses what challenges? and how do they achieved that? (revised)

% \subsection{Decentralized bilevel optimization methods}
% XU: Never using ``several works, several studies, several papers'' (revised)

% XU: First of all, explain the feature of decentralized bilevel optimization and descirbe the Hessian gradient vector cannot be aggregated at once (consensus cannot be reached at once) .(unclear, previous works do not focus on the consensus, they mainly focus on solving the problem of eq(2). I could state this challenge here but  formulas will also be engaged)

DFL, as a serverless variant of FL, has attracted  research efforts to %es also designed various bilevel algorithms to address the above mentioned 
 improve bilevel optimization computational and communication efficiency. \citet{pmlr-v202-chen23n} proposed to use the Hessian-Inverse-Gradient-Product (HIGP) oracle by leveraging a quadratic sub-solver to approximate the product of the inverse Hessian and gradient vectors without computing the full Hessian or Jacobian matrices. % \cite{pmlr-v202-chen23n}. 
 \citet{NEURIPS2022_01db36a6}  introduced a method to estimate this product in a decentralized manner. Specifically, it approximates the Hessian-inverse by recursively computing the Neumman series. However, in some applications \cite{pmlr-v70-finn17a,nichol2018firstordermetalearningalgorithms}, computing Jacobian/Hessian-inverse vector products heavily consumes more computational and memory resources than  obtaining the gradient. This becomes particularly prohibitive in resource-limited decentralized settings.

% XU: What do you mean collaborative estimation? (revised)

% XU: It seems that we might introduce the works using the more expensive Hessian vector first, and then those using gradient vector.(unclear, no decentralized bilevel using gradients oracle only for now)

It is worth mentioning that additional efforts have been dedicated to %, other investigations attempt to 
mitigating communication overhead by averting the inner-loop communications through single-loop algorithm design \cite{dong2024singleloopalgorithmdecentralizedbilevel, pmlr-v206-gao23a}. \citet{dong2024singleloopalgorithmdecentralizedbilevel} proposed a single-loop method with an asymptotic rate of $\mathcal{O}(1/\sqrt{T})$. \citet{pmlr-v206-gao23a} combined local Neumann-type approximations and gradient tracking to design a single-loop method, though it explicitly required the homogeneous data distribution across all devices. Besides, there is a lack of proof for non-asymptotic stage convergence for loopless algorithms, and the reliability remains unknown. %to be investigated.

% XU: Never using ``some''. It seems that the term ``single loop'' suddenly  jumps out.(single loop means no inner loops)

% XU: Why is their reliability unclear?(it means that the theoretical investigation is still insufficient)
Building on existing research, we enhance the computational and communication efficiency of bilevel optimization by computing and sharing only compressed first-order gradients among resource-constrained learning nodes in decentralized systems.

%To the best of our knowledge, decentralized bilevel optimization has yet to be addressed using only first-order gradients, without relying on the computationally intensive Hessian-inverse vector. Furthermore, the application of communication compression techniques to first-order federated bilevel optimization remains unexplored.

% XU: The above statement is rather weak, causing the main difference of your work very vague. (revised)

\section{Preliminary}
\label{sec:preliminary}

% XU: The first sentence needs to mention the decompositin of a bilevel optimization problem into an upper-level and a lower-level subproblem? (mentioned before in the intro)

% XU: Still unclear why the inverse of the matrix cause significant communication overhead. The explanation is indirect. Besides, the fully first order method is not proposed for communication purposes. (revised, adding one formula)

\subsection{Fully-first order bilevel optimization}
The objective of %optimizing the single-machine 
centralized bilevel optimization, as outlined in \eqref{opt_problem}, is to find an $\epsilon$-stationary point of $\psi(x)$.

\begin{definition}
    A point $x$ is called an $\epsilon$-stationary point of a differentiable function $\psi(x)$ if $\|\nabla\psi(x)\| \leq \epsilon.$
\end{definition}

Typically, solving this problem queries the second-order information of the UL%upper-level (UL) 
function $g$ to obtain the hyper-gradient $\nabla \psi(x)$, % via \eqref{hyper_grad}. However, since calls to second-order oracles can be 
which however is computationally expensive. A practical approach is to approximate the hyper-gradient by only using gradients. %information. 
% XU: What do you mean the ``calls to such oracles''? (calls to second order oracles)
To achieve this, \citet{pmlr-v202-kwon23c} reformulates \eqref{opt_problem} as a constrained optimization problem:
\begin{equation}
    \begin{aligned}
        \min_{x \in \mathbb{R}^d_x} \psi_\lambda^\ast(x)
        :&= \min_{y \in \mathbb{R}^{d_y}} f(x,y) + \lambda \bigg( g(x,y) - g^\ast(x) \bigg) , \\
         g^\ast(x) &= \min_{z \in \mathbb{R}^{d_y}} g(x,z).
    \end{aligned}
    \label{reform}
\end{equation}

In this formulation, the Lagrangian includes a multiplier $\lambda$, where $\lambda > 0$. %It is proved to be 
Here, $\nabla \psi^\ast_\lambda(x) $ approximates $\nabla \psi(x)$ with
\begin{equation}
    \left\| \nabla \psi^\ast_\lambda(x) - \nabla \varphi(x) \right\| = \mathcal{O} \left( \frac{\kappa^3}{\lambda} \right).
\end{equation}
Choosing $\lambda$ on the order of $\mathcal{O}(\epsilon^{-1})$ ensures that  the $\epsilon$-stationary point of $\psi_\lambda(x)$ is also an $\epsilon$-stationary point of $\psi(x)$. Denote $y^\ast_{\lambda}(x)$ as
\begin{align*}
    y_{\lambda}^*(x)= \arg \min_{y \in \mathbb{R}^{d_y}} f(x,y) + \lambda \bigg( g(x,y) - g^\ast(x) \bigg).
\end{align*}
Then, the closed form of $\nabla \psi_\lambda(x)$ can be derived using only the first-order gradients of $f$ and $g$ by:
\begin{equation}
    \begin{aligned}
&\nabla \psi^\ast_{\lambda}(x)\\ &= \nabla_x \psi_{\lambda}(x, y_{\lambda}^*(x)) + \nabla_y y_{\lambda}^*(x) \nabla_y \psi_{\lambda}(x, y_{\lambda}^*(x)) \\
&= \nabla_x f(x, y_{\lambda}^*(x)) + \lambda \bigg(\nabla_x g(x, y_{\lambda}^*(x)) - \nabla_x g(x, y^*(x))\bigg).
\end{aligned}
\label{reform_grad}
\end{equation}
%Reformulating the problem in \eqref{reform_grad} makes bilevel optimization feasible for resource-limited decentralized settings.

% XU: It seems that all these sentences are in the form ``this ***''. Please rewrite these sentences. (revised)

\subsection{Contractive compressor}
%To design communication-efficient protocols,
Model compression has been widely used to boost communication efficiency in decentralized learning such as sparsification and quantization, which can be customized to combine with our bilevel optimization solution.  % are widely utilized in decentralized or federated learning. 
A general compression operator, known as the contractive compressor, is defined as below \cite{NEURIPS2018_44feb009}.
\begin{definition}
The contractive compression operator $\mathcal{Q} \colon \mathbb{R}^{m \times d} \rightarrow \mathbb{R}^{m \times d}$ satisfies
    \begin{align*}
        \mathbb{E} \left[ \|\mathcal{Q}(A) - A\|^2 \right] \leq (1 - \delta_c)\|A\|^2
    \end{align*}
    for all $A \in \mathbb{R}^{m \times d}$, where $\delta_c \in (0, 1]$ is a constant determined by the compression operator $\mathcal{Q}$.
\label{def_contractive_compressor}
\end{definition}

It is important to note that the contractive compressor encompasses a wide range of both unbiased and biased compressors. For an unbiased compressor, Definition \ref{def_contractive_compressor} can be extended with the additional condition $\mathbb{E} \left[ \mathcal{Q}(A) \right]=A$. For a biased operator, we should replace $Q$ and $\delta_c $ with  $\mathcal{Q}^\prime = \frac{\mathcal{Q}}{2-\delta_c}$  and $\delta'_c = \frac{1}{2-\delta_c}$ (See proof in the supplementary material), respectively, in Definition \ref{def_contractive_compressor}.
%then the new operator also satisfies Definition \ref{def_contractive_compressor} with 

% XU: I can understand what you are saying. But it takes some efforts to understand how definition 2 can be generalized to the unbiased compression. (yes, i admit. but it is hard to be more intuitive)

\section{Decentralized Billevel Optimization}\label{algori}

\subsection{Parameter notations}
To  facilitate our  discussion, we introduce the following  notations. % for parameters.
% XU: Rewrite the above sentence. 
The parameters on  node $i$ are denoted by $(x_i)^t, (y_i^k)^t$, and $(z_i^k)^t$,  for the $k$-th iteration of the inner loop and the $t$-th iteration of the outer loop. 

We denote the average of the parameters as follows:
\begin{align*}
    \bar{x} = \frac{1}{m} \sum_{i=1}^m x_i, \quad
    \bar{y} = \frac{1}{m} \sum_{i=1}^m y_i, \quad
    \bar{z} = \frac{1}{m} \sum_{i=1}^m z_i.
\end{align*}

A stacked version of the global parameters is defined as:
\begin{align*}
    \mathbf{x} &= \begin{bmatrix}
        x_1,
        x_2,
        \cdots 
        x_m
    \end{bmatrix}^\top \in \mathbb{R}^{m \times d_x}.
    % \mathbf{y} &= \begin{bmatrix}
    %     y_1,
    %     y_2,
    %     \cdots 
    %     y_m
    % \end{bmatrix}^\top \in \mathbb{R}^{m \times d_y}\\
    % \mathbf{z} &= \begin{bmatrix}
    %     z_1,
    %     z_2,
    %     \cdots 
    %     z_m
    % \end{bmatrix}^\top \in \mathbb{R}^{m \times d_y},
\end{align*}

Additionally, let $\mathbf{1} = [1, \cdots, 1]^\top \in \mathbb{R}^m$, and $\|\cdot\|$ denote the 2-norm for vectors and the Frobenius norm for a matrix.

% XU: I cannot understand the above sentence.(revised)

\subsection{System modeling}

% XU: try to avoid using ``in this paper'' in the main body of your paper.(revised)

% XU: Better to add a figure to explain how a node interacts with its neighbours in terms of data transfer. (Revised, see fig.1)

We follow the method to reformulate the bilevel problem in \eqref{reform_grad} to make it feasible for resource-limited decentralized settings.
%The optimization of problem \eqref{opt_problem} over 
Consider a decentralized graph $\mathcal{G}$ consisting of a set of nodes $\mathcal{V} = \{1, 2, \ldots, m\}$ and an edge set $\mathcal{E}$. The neighbors of node $i$ are denoted by the set $\mathcal{N}_i$.
We define the mixing matrix $\mathbf{W}$ as follows and impose common assumptions on the decentralized problem:

\begin{assumption}
    To align with networks in real-world scnearios, the graph $\mathcal{G} = (\mathcal{V}, \mathcal{E})$ is connected and undirected, which can be represented by a mixing matrix $\mathbf{W} \in \mathbb{R}^{m \times m}$. Let $w_{ij}$ be the element in the $i-$th row and $j-$th column of matrix $\mathbf{W}$, the following properties hold:
    \begin{itemize}
        \item[1)] $w_{ij} > 0$ if $(i,j) \in E$, and $w_{ij} = 0$ otherwise.
        \item[2)] $\mathbf{W}$ is doubly stochastic, \emph{i.e.}, $\mathbf{W} = \mathbf{W}^\top$, $\sum_{i=1}^m w_{ij} = 1$, and $\sum_{j=1}^m w_{ij} = 1$.
        \item[3)] The eigenvalues of $\mathbf{W}$ satisfy $\lambda_m \leq \ldots \leq \lambda_2 \leq \lambda_1 = 1$ and $\nu = \max\{|\lambda_2|, |\lambda_m|\} < 1$.
    \end{itemize}
    \label{assmp_graph}
\end{assumption}
Based on $\mathbf{W}$, we define the spectral gap, a well-known measure indicating how well  nodes are connected. 

% XU: In fact, the spectral gap measures how ``dense'' a graph is or how well the nodes are connected. (revised)

\begin{definition}
    For a gossip mixing matrix $\mathbf{W}$ that satisfies Assumption 1, we define the spectral gap as $\rho \equiv 1 - \delta_\rho$, where $\delta_\rho \equiv \max\{|\lambda_2(\mathbf{W})|, |\lambda_m(\mathbf{W})|\}$ is the second largest eigenvalue in the magnitude.
\end{definition}

% XU: never using ``some'' since this world is rather fuzzy. 

We also make the following standard assumptions regarding the smoothness of the UL and LL problems~\cite{chen2023nearoptimalnonconvexstronglyconvexbileveloptimization}.

\begin{assumption}
    Recall the definitions of $f_i$ and $g_i$ in Eq \eqref{opt_problem}. We assume the following:
    \begin{enumerate}
        \item The upper-level function $f_i(x, y)$ is $C_f$-Lipschitz continuous in $y$, has $L_f$-Lipschitz continuous gradients, and $\rho_f$-Lipschitz continuous Hessians.
        \item The lower-level function $g_i(x, y)$ is $\mu$-strongly convex in $y$, has $L_g$-Lipschitz continuous gradients, and $\rho_g$-Lipschitz continuous Hessians.
    \end{enumerate}
    \label{assump_smooth}
\end{assumption}
\begin{definition}
    With Assumption \ref{assump_smooth}, we define $l = \max\{C_f, L_f, L_g, \rho_g\}$, and condition number as $\kappa = l/\mu$. 
\end{definition}
% add L_psi lipstchiz contiuous here as a preposition

\textit{\textbf{Remark}: Note  that our algorithm allows the heterogeneity between functions and the dissimilarity between local and global gradients is not bounded.}

% XU: unclear remark.(this remark means that we make a looser assumption than other works)

Based on Assumptions \ref{assmp_graph} and \ref{assump_smooth}, the reformulated problem in \eqref{reform} can be decomposed into two-level objectives.
% XU: describe the upper and the lower problems again without math, and then tell the objective of each of them.
For the outer level problem, our objective is to estimate $\nabla \psi_\lambda(\bar{x})$, similar to \eqref{reform_grad}. This is challenging because we can only compute local gradients $\nabla_i \psi_\lambda(x)$. However, previous studies have demonstrated that by employing consensus strategies such as gradient tracking, we can achieve the desired $\epsilon$-stationary point of $\nabla \psi_\lambda(\bar{x})$ at a high convergence speed.
\begin{equation}
    \begin{aligned}
        y^\ast &= \arg \min_{z \in \mathbb{R}^{d_y}} \frac{1}{m}\sum_{i=1}^m g_i(x,z), \\
        y_\lambda^\ast &= \arg \min_{y \in \mathbb{R}^{d_y}} \frac{1}{m}\sum_{i=1}^m h_i(x,y) := \left( f_i(x,y) + \lambda g_i(x,y) \right).
    \end{aligned}
    \label{approximation}
\end{equation}

Challenges in achieving these optimal solutions become pronounced in a decentralized systems. Notably, since each node only accesses its local UL %upper-level (UL) 
model $x_i$ rather than the global model $\bar{x}$, the system can only approximate optimal solutions, denoted as $\tilde{y}^\ast$ and $\tilde{y}_\lambda^\ast$.

% In our theoretical analysis, we demonstrate that this deviation introduces a consensus error in the UL model, which can be gradually reduced through iterations.

% XU: Do we need ``In our theoretical analysis, we demonstrate that''?

\begin{algorithm}[t]
\caption{$\text{C}^2$DFB \textbf{C}ompressed \textbf{C}ommunication-efficient \textbf{D}ecentralized \textbf{F}irst Order \textbf{B}ilvel Optimization}
\label{alg:C2DFB2}
  \SetKwData{Left}{left}\SetKwData{This}{this}\SetKwData{Up}{up}
  \SetKwFunction{Union}{Union}\SetKwFunction{FindCompress}{FindCompress}
  \SetKwInOut{Input}{input}\SetKwInOut{Output}{output}\SetKwInOut{Init}{initialize}
  
\Input{Initial local models $x_i^{0}$, $y_i^{0}$; Step size $\eta_{in}$, $\eta_{out}$; Mixing step size $\gamma_{in}$, $\gamma_{out}$; Inner-loop times $K$; Total iterations $T$; Penalty coefficient $\lambda$}
\Init{$z_i^0=y_i^{0}$, $(s_i)^0_x = (u_i)^0_x = \nabla_x f_i(x_i^0, y_i^0) + \lambda \left( \nabla_x g_i(x_i^0, y_i^0) - \nabla_x g_i(x_i^0, z_i^0) \right)$}
\BlankLine
\For{$t=
0,\dots T-1$}{
 \For{$i=1$ to $m$ in parallel}{
    {\footnotesize\textcolor{blue}{Outer Loop Update, communicate $x$}:}\\
    $x_i^{t+1} = x_i^{t} + \gamma\sum_{j \in \mathcal{N}_i} w_{i,j}\{x_j^{t}-x_i^{t}\} -\eta(s_i)^{t}_x$\\

    {\footnotesize\textcolor{blue}{Inner Loop Update}}:\\
    $y_i^{t+1} = {\footnotesize\text{\textbf{IN}}}\big(h (x_i^{t+1},y), y_i^{t}, (\hat{y}_i^K)^t, (s_i^K)_y^t, (\hat{s}_i^K)_y^t\big)$\\
    $z_i^{t+1} = {\footnotesize\text{\textbf{IN}}}\big(g(x_i^{t+1},y), z_i^{t}, (\hat{z}_i^K)^t,   (s_i^K)_z^{t}, (\hat{s}_i^K)_z^t\big)$\\

     {\footnotesize\textcolor{blue}{Local Gradients Computation}}:\\
    $(u_i)^{t+1}_x = \nabla_x f_i(x_i^{t+1}, y_i^{t+1}) + \lambda\left( \nabla_x g_i(x_i^{t+1}, y_i^{t+1}) - \nabla_x g_i(x_i^{t+1}, z_i^{t+1}) \right)$\\

      {\footnotesize \textcolor{blue}{Gradient tracker update, communicate $s_x$}}:\\
     $(s_i)^{t+1}_x = (s_i)^{t}_x + \gamma \sum_{j \in \mathcal{N}_i} w_{i,j}\{(s_j)^{t}_x-(s_i)^{t}_x\} + (u_i)^{t+1}_x - (u_i)^{t}_x$\\
    }
}
\end{algorithm}

\begin{algorithm}[t]
    \caption{Inner loop update on the $i$-th client \\ \quad \textbf{IN} \big($r_i(d)$, $d_i^0$, $\hat{d}_i^0$, $s_i^0$, $\hat{s}_i^0$, $\gamma$, $\eta$, $K$\big)}
    \label{alg:ref_compress_beer}
    \SetKwData{Left}{left}\SetKwData{This}{this}\SetKwData{Up}{up}
    \SetKwFunction{Union}{Union}\SetKwFunction{FindCompress}{FindCompress}
    \SetKwInOut{Input}{Input}\SetKwInOut{Output}{Output}
    
    \Input{Initial parameters $d_i^0$, $s_i^0$; Reference points $\hat{s}_i^0$, $\hat{d}_i^0$; Objective function $r_i(d)$; Step size $\eta$; Mixing step size $\gamma$; Iteration $K$}
    
    \For{$k = 0, 1, \ldots K-1$}{
            $d_i^{k+1} = d_i^k + \gamma \sum_{j \in \mathcal{N}_i} w_{i,j} \{\hat{d}_j^k - \hat{d}_i^k\} - \eta s_i^k$ \\
            {\footnotesize\textcolor{blue}{communicate $\mathcal{Q}(d_i^{k+1} - \hat{d}_i^{k})$}:}\\
            $\hat{d}_i^{k+1} = \hat{d}_i^{k} + \mathcal{Q}(d_i^{k+1} - \hat{d}_i^{k})$\\ 
            $s_i^{k+1} = s_i^k + \gamma \sum_{j \in \mathcal{N}_i} w_{i,j} \{\hat{s}_j^k - \hat{s}_i^k\} + \nabla r_i^{k+1} - \nabla r_i^{k}$\\
            {\footnotesize\textcolor{blue}{communicate $\mathcal{Q}(s_i^{k+1} - \hat{s}_i^{k})$}:}\\
            $\hat{s}_i^{k+1} = \hat{s}_i^{k} + \mathcal{Q}(s_i^{k+1} - \hat{s}_i^{k})$
    }
    \Output{$d_i^K$}
\end{algorithm}

Furthermore, there is a trade-off between the final solution  precision  and communication cost. On the one hand, executing more iterations within the inner loop makes the solution closer to the optimal. On the other hand, enhancing communication across the network reduces disparities between local and global parameters. However, both of these approaches incurs more communication overhead.

% Second, there is a trade-off between the precision of the final solution and the communication cost. This trade-off manifests in two aspects. On one hand, more inner loop iterations yield a result closer to the optimal solution. On the other hand, increased communication across the network reduces the discrepancy between local and global parameters. However, both of these approaches result in higher communication overhead.
% XU: It seems that you asks the right question, but do not solve it. 

To address these challenges, we  specially design compression with reference point to mitigate the communication efficiency concern. This technique largely decreases the inner-loop communication volume while still guarantees the same convergence performance of the inner loop. Additionally, we utilize gradient tracking and mixing step to accelerate global consensus across inner and outer loops. 

% XU: The global consensus error stems from the local aggregation. The above paragraph is too short. It does not clearly deliver your idea and highlight how novel or how important they are. 

\subsection{Algorithm design}
% three strategy, compression, gradient tracking, mixing parameter $gamma$.

% Shall we place a model section before algorithm design? 

We embark to present the $\text{C}^2$DFB (\textbf{C}ompressed \textbf{C}ommunication-efficient \textbf{D}ecentralized \textbf{F}irst Order \textbf{B}ilvel Optimization) method. As described in Algorithm \ref{alg:C2DFB2} - \ref{alg:ref_compress_beer}, the algorithm operates in two main loops. 

In each iteration of the Outer loop, the UL model $x_i$ is updated using the gradient tracker $(s_i)^{t}_x$. This update incorporates a weighted averaging, also known as the mixing step, which blends parameters from neighboring nodes with the node's local parameters. Subsequent to this, the inner loop proceeds to update the LL model $y_i$ and $z_i$. Following  this, the estimation of UL gradients $(u_i)^{t+1}_x$ is computed to update the tracker $(s_i)^{t+1}_x$. For a better clarity, default hyperparameters such as  $\gamma_{in}$ and $\eta_{in}$ are omitted when invoking the \textbf{IN} function. 

\begin{figure}[h]
  \centering
  \includegraphics[width=\linewidth, height=3cm]{./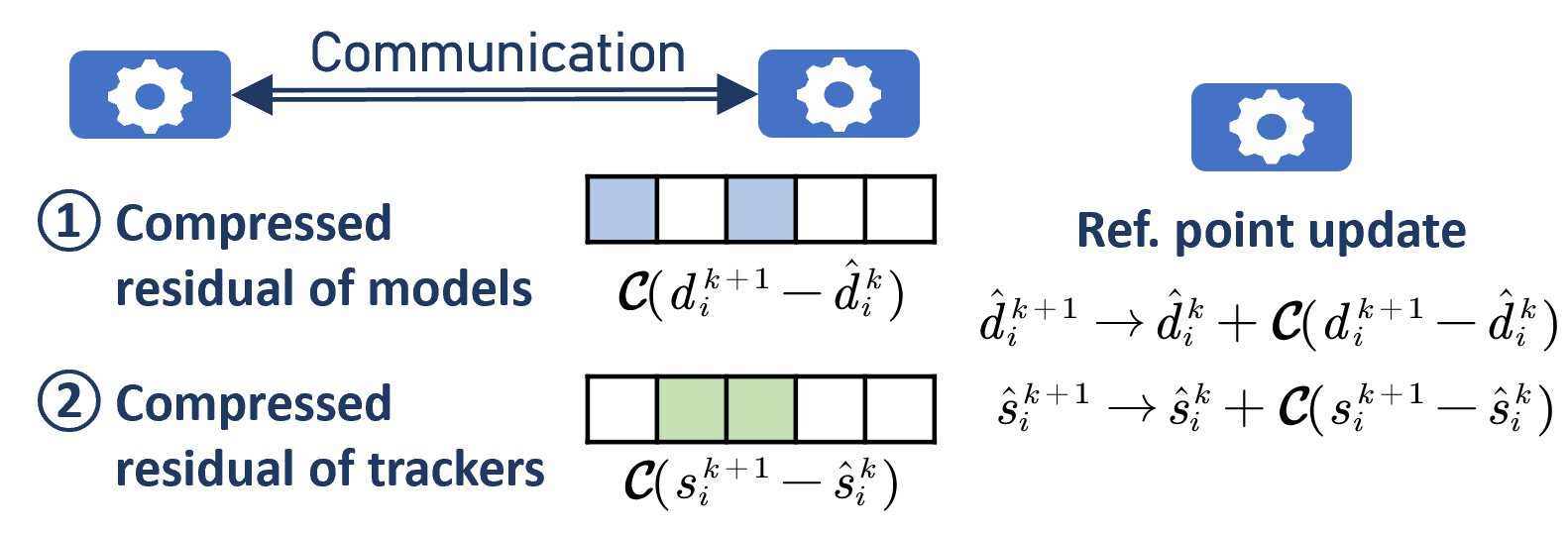}
  \caption{Communication protocol of inner loop for $\text{C}^2$DFB}
  \label{fig:comm_iloop}
\end{figure}
Within the inner loop, a three-step optimization is conducted below. First, each training step begins with updating  model $d_i$(which may represent either $y_i$ or $z_i$) using the gradient tracker. A consensus mechanism refines the update by computing a weighted average of the models from neighboring nodes, controlled by the mixing step size  $\gamma$.  $\gamma$ increases the adaptability of the system to reach consensus while providing stability against aggressive compression strategies, particularly under the heterogeneity of local objectives.

In the second step, a local reference point $\hat{d}_i$ is updated by incorporating a compressed residual. This residual compress the deviation between the current model parameter $d_i$ and the reference point $\hat{d}_i$. As the training advances, $\hat{d}_i$ increasingly aligns with $d_i$, evidenced by the diminishing magnitude of $d_i^{k+1} - \hat{d}_i^k$ \cite{9789732}. Unlike traditional compression methods that merely transmit compressed parameters and accumulate the compression error for subsequent updates, this approach distinguishes itself in two aspects:

1) Implicit error compensation is facilitated through this technique. To illustrate, let the error be defined as $e_i^k = \hat{d}_i^k - d_i^k = \mathcal{Q}(d_i^k - \hat{d}_i^{k-1}) - (d_i^k + \hat{d}_i^{k-1})$. We can reformulate the model update as:
\begin{equation}
    \begin{aligned}
        d_i^{k+1} &= d_i^k + \gamma \sum_{j \in \mathcal{N}_i} w_{i,j} \{d_j^k+e_j^k - d_i^k - e_i^k\} - \eta s_i^k \\
    = & (1-\gamma) d_i^k +  \gamma\sum_{j \in \mathcal{N}_i} w_{i,j} d_j^k + \gamma\sum_{j \in \mathcal{N}_i, j \neq i}w_{i,j} e_j^k - \eta s_i^k.
    \end{aligned}
    \label{err_compensate}
\end{equation}
Here, each iteration inherently transmits the compression error to neighboring clients.

2) This method ensures that the update step remains consistent with a global update strategy that does not involve compression. Once received, the local parameter integrates this error and compresses it for the subsequent iteration. Globally, averaging over $m$ clients, the following result holds based on the fact that $\textbf{1}^\top(\mathbf{W}-\mathbf{I})=0$:
\begin{equation}
\begin{aligned}
    \bar{d}^{k+1} &= \bar{d}^k + \gamma \textbf{1}^\top(\mathbf{W}-\mathbf{I}) (\hat{d}^k) - \eta \bar{s}^k \\
    &= \bar{d}^k-\eta \bar{s}^k.
\end{aligned}
\label{compress_prop2}
\end{equation}
Practically, each client transmits $\mathcal{Q}(d_i^{k+1} - \hat{d}_i^{k})$, while maintaining an additional variable to represent the accumulated reference point from its neighbors, denoted by $(\hat{d}_i^{k+1})_w = (\hat{d}_i^{k})_w + \sum_{j \in \mathcal{N}i} w_{i,j} \mathcal{Q}(d_i^{k+1} - \hat{d}_i^{k})$. This confirms that $(\hat{d}_i^{k+1})_w = \sum_{j \in \mathcal{N}_i} w_{i,j} \hat{d}_i^{k}$.

Lastly, a similar consensus mixing and reference point update mechanism is applied to the gradient tracker. The global gradient is tracked by accumulating the residuals from two consecutive iterations. This method, commonly known as gradient tracking \cite{NEURIPS2021_5f25fbe1}, is prevalent in decentralized systems and ensures that the tracker $\bar{s}^k$ reliably represents the average gradient across $m$ clients. Notably, this approach enables gradient descent to achieve linear convergence without necessitating uniformity in local gradients.

In conclusion, $\text{C}^2$DFB enhances both computational and communication efficiency  through three key mechanisms. Due to its fully first-order nature, it simply queries gradient oracles that are easy to compute and transfer. Within the inner loops, it efficiently maintains aggregated reference points of neighboring clients through the minimal transmission of compressed residuals. Simultaneously, robust performance is guaranteed by employing gradient tracking alongside a mixing consensus step, which ensures the convergence of the optimization process in decentralized systems.

\section{Convergence analysis}\label{sec:convergence}
In this section, we present a value-function-based framework to analyze the convergence of the $\text{C}^2$DFB algorithm. We define recursive error bounds for the outer and inner loops, then construct a value function from a linear combination of these errors. Our analysis shows that the algorithm requires $\mathcal{O}(\epsilon^{-4})$ outer-loop iterations and $\mathcal{O}(\log{1/\epsilon^{4}})$ inner-loop first-order oracle calls to achieve convergence.

% XU: please simplify the above paragraph. Using three sentences for an abstract description. 

\subsection{Value functions framework}
Our convergence analysis is grounded on Lyapunov functions. Given that bilevel problem  comprises two loops, we delibrately design Lyapunov functions for each loop.
For the outer-loop,  we design the following Lyapunov functions: 
\begin{equation}
    \begin{aligned}
    \text{Model consensus error:   } \Omega^t_1& := \|\mathbf{x}^t - \textbf{1}\bar{x}^t\|^2 \\
    \text{Tracker consensus error:   } \Omega^t_2& := \|\mathbf{s}_x^t - \textbf{1} \bar{s}_x^t \|^2\\
    \text{Value function: }
    \Omega^t \triangleq \psi(\bar{x}^t) &+ \frac{1}{m}\Omega^t_1 + \frac{\eta_{out}}{m}\Omega^t_2,
    \end{aligned}
    \label{valuefunc_outerloop}
\end{equation}

As for the inner-loop, our Lyapunov function are delibrately designed as:
\begin{equation}
    \begin{aligned}
        \text{Model compression error:   } \Omega^k_1 & := \|\mathbf{d}^k - \hat{\mathbf{d}}^k \|^2 \\
        \text{Model consensus error:   } \Omega^k_2& := \|\mathbf{d}^k - \textbf{1}\bar{d}^k\|^2 \\
        \text{Model compression error: } 
        \Omega^k_3& :=  \|\mathbf{s}_d^k - \hat{\mathbf{s}}_d^k \|^2\\
        \text{Tracker consensus error:   } \Omega^k_4 & := \|\mathbf{s}^k - \textbf{1} \bar{s}^k \|^2\\
        \Omega^k \triangleq r(\bar{d}^k) + \frac{L_r}{m} \Omega^k_1 + \frac{1}{m L_r}\Omega^k_2 &+ \frac{L_r}{m}\Omega^k_3 + \frac{1}{m L_r}\Omega^k_4,\\
    \end{aligned}
    \label{valuefunc_innerloop}
\end{equation}
To make our presentation concise,  $\mathbf{d}$ can denote $\mathbf{y}$ or $\mathbf{z}$. $r$ may represents the objective function $h$ and $g$. $L_r$ is the corresponding Lipschitz parameter $2\lambda L_g$ and $L_g$ respectively.

The core idea behind these designed functions is that the errors recursively intertwine with each other. Through a value function, we can unify these complex relationships by progressively decreasing the value function, such that the algorithm converges to a stable state within a finite number of iterations. We will briefly outline the proof with key lemmas and highlight the final result.

% inner_loop errors, compression, model consensus, tracker consensus, innerloop value function 
% outer_loop model consensus, tracker consensus, outer loop value function

\subsection{Convergence Results}
We first provide the following lemma which shows that $\psi_\lambda(x)$ is an effective proxy of $\psi(x)$.
\begin{lemma}[\cite{pmlr-v202-kwon23c,chen2023nearoptimalnonconvexstronglyconvexbileveloptimization}]
    \label{first-order}
    Under Assumption \ref{assump_smooth}, if $\lambda \geq 2L_f/\mu$, it holds that 
    \begin{itemize}
        \item[1)] $\|\nabla \psi_\lambda(x)-\nabla \psi(x)\| \leq \mathcal{O}(\frac{l\kappa^3}{\lambda})$ and $\| \psi_\lambda(x)- \psi(x)\| \leq \mathcal{O}(\frac{l\kappa^2}{\lambda})$
        \item[2)] $\psi_\lambda(x)$ has a $\mathcal{O}(\kappa^3)$ gradient.
        \item[3)] $h_i(x,y)$ defined in \eqref{approximation} is $\frac{\lambda \mu}{2}$ strongly-convex in $y$.
    \end{itemize}
\end{lemma}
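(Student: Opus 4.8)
\textbf{Proof proposal for Lemma~\ref{first-order}.}

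The plan is to reduce everything to the single-level smoothness and strong-convexity structure already packaged in Assumption~\ref{assump_smooth}, and then invoke the implicit-function/perturbation machinery that underlies the fully first-order reformulation of~\cite{pmlr-v202-kwon23c}. The three claims are not independent: parts (2) and (3) are the structural ingredients used to prove part (1), so I would establish them in the order (3), (2), (1).

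First, part (3). Since $h_i(x,y) = f_i(x,y) + \lambda g_i(x,y)$, its Hessian in $y$ is $\nabla_{yy}^2 f_i + \lambda \nabla_{yy}^2 g_i$. By Assumption~\ref{assump_smooth}, $g_i$ is $\mu$-strongly convex in $y$, so $\nabla_{yy}^2 g_i \succeq \mu I$, and $f_i$ has $L_f$-Lipschitz gradients, so $\nabla_{yy}^2 f_i \succeq -L_f I$. Hence $\nabla_{yy}^2 h_i \succeq (\lambda\mu - L_f) I$. Using the hypothesis $\lambda \geq 2L_f/\mu$, we get $\lambda\mu - L_f \geq \lambda\mu/2$, which gives the claimed $\tfrac{\lambda\mu}{2}$-strong convexity. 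The same bound applied to the averaged function $h(x,y) = \tfrac1m\sum_i h_i$ shows that $y_\lambda^*(x)$ is well-defined and unique.

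Next, parts (2) and (1). Here I would write $\nabla\psi_\lambda^*(x)$ via the closed form~\eqref{reform_grad}, namely $\nabla_x f(x,y_\lambda^*(x)) + \lambda\big(\nabla_x g(x,y_\lambda^*(x)) - \nabla_x g(x,y^*(x))\big)$, and estimate the three pieces. The first term is bounded by $C_f$ (Lipschitzness of $f$ in $y$ gives a bound on $\nabla_x f$ of order $l$, and composing with the $\kappa$-Lipschitz map $y_\lambda^*$ keeps it $\mathcal{O}(\kappa)$; tracking the powers carefully yields $\mathcal{O}(\kappa^3)$ after accounting for the Jacobian $\nabla_y y_\lambda^*$ which scales like $\kappa$). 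For the $\lambda$-multiplied difference, the key observation — standard in this line of work — is that $\|y_\lambda^*(x) - y^*(x)\| = \mathcal{O}(1/\lambda)$: this follows by comparing the optimality conditions $\nabla_y f + \lambda\nabla_y g = 0$ at $y_\lambda^*$ with $\nabla_y g = 0$ at $y^*$, using $\|\nabla_y f\|\le C_f$ and the $\tfrac{\lambda\mu}{2}$-strong convexity from part (3) to invert. Then $\lambda\|\nabla_x g(x,y_\lambda^*) - \nabla_x g(x,y^*)\| \le \lambda L_g \|y_\lambda^* - y^*\| = \mathcal{O}(l\kappa)$; combined with the Hessian-Lipschitz refinement needed to compare $\nabla\psi_\lambda$ against the true hypergradient $\nabla\psi$, the extra factors of $\kappa$ accumulate to give the $\mathcal{O}(l\kappa^3/\lambda)$ gradient-error bound in (1), and integrating (or a direct value comparison) gives the $\mathcal{O}(l\kappa^2/\lambda)$ function-value bound. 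Part (2) then follows because $\|\nabla\psi_\lambda\| \le \|\nabla\psi\| + \mathcal{O}(l\kappa^3/\lambda)$ and $\|\nabla\psi\|$ is itself $\mathcal{O}(\kappa^3)$ by the classical bilevel smoothness estimates (Lipschitz gradients/Hessians of $f$, strong convexity of $g$).

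The main obstacle is bookkeeping the powers of the condition number $\kappa$ correctly through the chain rule for $y_\lambda^*(x)$ and the second-order perturbation argument — i.e., showing the error is $\mathcal{O}(\kappa^3/\lambda)$ rather than a worse power — since this requires differentiating the optimality condition of the penalized problem and controlling $\nabla_x y_\lambda^*(x)$ and its variation via the Hessian-Lipschitz constants $\rho_f,\rho_g$. Since both cited references already carry out exactly this computation for the identical reformulation, I would state the $\kappa$-dependence and cite~\cite{pmlr-v202-kwon23c,chen2023nearoptimalnonconvexstronglyconvexbileveloptimization} for the detailed constants, presenting only the strong-convexity computation of part (3) in full.
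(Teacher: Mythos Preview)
Your proposal is correct and in fact goes further than the paper itself: the paper does not prove this lemma at all but simply restates it in the appendix as a proposition attributed to \cite{pmlr-v202-kwon23c,chen2023nearoptimalnonconvexstronglyconvexbileveloptimization} with no accompanying argument. Your plan to display the elementary strong-convexity computation for part~(3) and defer the $\kappa$-bookkeeping of parts~(1)--(2) to the cited references is entirely appropriate and already more than what the paper provides.
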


Before presenting the convergence of inner-loop optimization, a key problem lies in, $y^\ast(\bar{x})$ and $y^\ast_\lambda(\bar{x})$ is the optimum based on a consensus $x$. 
However, we could only access the local upper model $x_i$ to achieve an approximate optimum, $\tilde{y}^\ast$ and $\tilde{y}^\ast_\lambda$. 
Such deviation is further bounded by the consensus error of upper model in the following lemma, which performs as the bridge between the double loop.
\begin{lemma}
    \label{lipstchiz}
    Under Assumptions \ref{assmp_graph} and \ref{assump_smooth}, the following statements hold,
    \begin{align*}
        \|\tilde{y}_\lambda^\ast-y_\lambda^\ast(\bar{x}^t)\|^2 &\leq \frac{16\kappa^2}{m}\|\mathbf{x}^t - \textbf{1}\bar{x}^t\|^2 \\
         \|\tilde{y}^\ast-y^\ast(\bar{x}^t)\|^2 &\leq \frac{\kappa^2}{m}\|\mathbf{x}^t - \textbf{1}\bar{x}^t\|^2
    \end{align*}
\end{lemma}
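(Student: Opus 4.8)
\textbf{Proof proposal for Lemma \ref{lipstchiz}.}

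The plan is to bound each deviation by exploiting the strong convexity of the relevant lower-level objective together with the Lipschitz continuity of its gradient in the $x$-argument, and then to average over the $m$ nodes. Consider the first inequality. Recall from Eq.~\eqref{approximation} that $y_\lambda^\ast(x)$ minimizes $\frac{1}{m}\sum_i h_i(x,y)$, while $\tilde y_\lambda^\ast$ denotes the (node-wise) approximate optimum obtained when each node $i$ uses its own local model $x_i$ in place of the consensus point $\bar x^t$. First I would make precise what $\tilde y_\lambda^\ast$ is: it should be the point defined through the same first-order optimality condition but with the gradients evaluated at the local $x_i$'s, i.e.\ $\frac{1}{m}\sum_i \nabla_y h_i(x_i^t, \tilde y_\lambda^\ast)=0$, equivalently the minimizer of $\frac{1}{m}\sum_i h_i(x_i^t, y)$ up to the mismatch in the $x$-argument. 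By Lemma \ref{first-order}(3), $\frac{1}{m}\sum_i h_i(\cdot,y)$ is $\frac{\lambda\mu}{2}$-strongly convex in $y$, so it suffices to control $\bigl\|\frac{1}{m}\sum_i \nabla_y h_i(x_i^t, y_\lambda^\ast(\bar x^t)) - \frac{1}{m}\sum_i \nabla_y h_i(\bar x^t, y_\lambda^\ast(\bar x^t))\bigr\|$, which is the residual of the first-order condition for the perturbed problem evaluated at the true optimum.

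Concretely, the key steps are: (i) write $\frac{\lambda\mu}{2}\|\tilde y_\lambda^\ast - y_\lambda^\ast(\bar x^t)\| \le \bigl\|\frac{1}{m}\sum_i \bigl(\nabla_y h_i(x_i^t, y_\lambda^\ast(\bar x^t)) - \nabla_y h_i(\bar x^t, y_\lambda^\ast(\bar x^t))\bigr)\bigr\|$ using strong monotonicity of $\nabla_y$ of the averaged objective and the fact that $\tilde y_\lambda^\ast$ zeroes the perturbed gradient while $y_\lambda^\ast(\bar x^t)$ zeroes the unperturbed one; (ii) apply Jensen / triangle inequality to pull the sum outside, giving $\le \frac{1}{m}\sum_i \|\nabla_y h_i(x_i^t,\cdot) - \nabla_y h_i(\bar x^t,\cdot)\|$; (iii) bound each term by the joint Lipschitz constant of $\nabla h_i$ in $x$, which from Assumption \ref{assump_smooth} and the definition $h_i = f_i + \lambda g_i$ is $L_f + \lambda L_g \le 2\lambda L_g$ (for $\lambda$ large, or simply $O(\lambda l)$), times $\|x_i^t - \bar x^t\|$; (iv) square, use $(\frac1m\sum a_i)^2 \le \frac1m\sum a_i^2$, and divide by $(\lambda\mu/2)^2$, so the $\lambda^2$ from the Lipschitz constant cancels the $\lambda^2$ in the denominator, leaving a bound of the form $\frac{c\, L_g^2/\mu^2}{m}\|\mathbf{x}^t - \mathbf 1\bar x^t\|^2 = \frac{c\kappa^2}{m}\|\mathbf{x}^t-\mathbf 1\bar x^t\|^2$; tracking the constant through steps (iii)--(iv) should produce the stated $16\kappa^2$. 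The second inequality is the same argument applied to $g$ alone: $\frac{1}{m}\sum_i g_i$ is $\mu$-strongly convex in $y$ (no factor $1/2$, no $\lambda$), its gradient is $L_g$-Lipschitz in $x$, so the bound becomes $\frac{L_g^2/\mu^2}{m}\|\mathbf x^t - \mathbf 1\bar x^t\|^2 = \frac{\kappa^2}{m}\|\mathbf x^t-\mathbf 1\bar x^t\|^2$, matching the claim exactly.

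The main obstacle I anticipate is step (i): making the comparison between $\tilde y_\lambda^\ast$ and $y_\lambda^\ast(\bar x^t)$ rigorous, since the two points are defined as minimizers of \emph{different} functions (one involving the local $x_i$'s, the other the consensus $\bar x^t$), not as a perturbation of a single function at a single point. The clean way is the standard stability-of-minimizers estimate: if $F$ is $\mu_F$-strongly convex with minimizer $y^\star$ and $\tilde y$ satisfies $\|\nabla F(\tilde y)\| \le \epsilon$, then $\|\tilde y - y^\star\| \le \epsilon/\mu_F$; here $F = \frac1m\sum_i h_i(\bar x^t,\cdot)$ and $\epsilon = \|\nabla F(\tilde y_\lambda^\ast)\| = \|\frac1m\sum_i(\nabla_y h_i(\bar x^t,\tilde y_\lambda^\ast) - \nabla_y h_i(x_i^t,\tilde y_\lambda^\ast))\| \le \frac1m\sum_i (L_f+\lambda L_g)\|x_i^t-\bar x^t\|$. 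A secondary subtlety is verifying that $\nabla h_i$ is indeed jointly Lipschitz in $(x,y)$ with the claimed constant — this follows from $L_f$- and $L_g$-Lipschitz gradients of $f_i,g_i$ in Assumption \ref{assump_smooth} — and bookkeeping the numerical constant so that it lands at $16$ rather than some other absolute constant; the latter is routine once the $\lambda$-dependence is seen to cancel.
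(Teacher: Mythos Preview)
Your proposal is correct and follows essentially the same route as the paper: use the first-order optimality conditions at both minimizers, lower-bound $\bigl\|\tfrac{1}{m}\sum_i \nabla_y h_i(\bar x^t,\tilde y_\lambda^\ast)\bigr\|$ via $\tfrac{\lambda\mu}{2}$-strong convexity, upper-bound the same quantity via the $2\lambda L_g$-Lipschitzness of $\nabla h_i$ in $x$ together with Jensen, and combine to get the $16\kappa^2/m$ constant (and analogously $\kappa^2/m$ for $g$). The only cosmetic difference is that the paper evaluates the ``wrong'' gradient at $\tilde y_\lambda^\ast$ rather than at $y_\lambda^\ast(\bar x^t)$ as in your step (i), but you already note this symmetric variant in your obstacle paragraph, and either choice yields the identical bound.
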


Then, we offer a linear convergence guarantee for inner loop approximation.
\begin{theorem}
    \label{inner_loop_convergence}
    If Assumptions \ref{assmp_graph} and \ref{assump_smooth} hold, for the inner loop of K steps, at iteration $t$, there exist $\gamma_{in} \propto \delta_c \rho, \eta_{in} \propto \frac{\delta_c \rho^2}{\kappa \lambda L_g}$ such that
    \begin{align*}
        &\|(\mathbf{y}^K)^t - \mathbf{1}\tilde{y}_\lambda^\ast\|^2 \leq C_y \exp(-\frac{\mu K}{8L_g})~~~\text{and}\\
        &\|(\mathbf{z}^K)^t - \mathbf{1}\tilde{y}^\ast\|^2 \leq C_z \exp(-\frac{\mu K}{2L_g}),
    \end{align*}
    where $C_y$ and $C_z$ are positive constants. 
\end{theorem}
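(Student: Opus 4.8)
The plan is to establish linear convergence of the inner loop by tracking the coupled recursion of the four inner-loop Lyapunov quantities in \eqref{valuefunc_innerloop} together with the optimality gap $r(\bar d^k) - r(\tilde d^\ast)$, where $\tilde d^\ast$ denotes $\tilde y_\lambda^\ast$ or $\tilde y^\ast$. First I would invoke Lemma \ref{first-order}(3) to note that the inner objective $r$ (either $h$ with parameter $L_r = 2\lambda L_g$, or $g$ with $L_r = L_g$) is strongly convex with modulus on the order of $\lambda\mu$ (resp. $\mu$) and $L_r$-smooth; in particular $\bar d^{k+1} = \bar d^k - \eta \bar s^k$ from \eqref{compress_prop2}, and $\bar s^k$ equals the exact global gradient $\tfrac1m\sum_i \nabla r_i(x_i, d_i^k)$ by the gradient-tracking invariant $\mathbf 1^\top(\mathbf W - \mathbf I) = 0$. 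So the averaged iterate performs an inexact gradient step whose error is governed entirely by the consensus error $\|\mathbf d^k - \mathbf 1\bar d^k\|$ (since $\nabla r$ is Lipschitz) — this gives the descent term $r(\bar d^{k+1}) - r(\tilde d^\ast) \le (1 - \Theta(\eta\mu\lambda))(r(\bar d^k) - r(\tilde d^\ast)) + O(\eta/m)\,\Omega_2^k$, modulo the bias from using local $x_i$ instead of $\bar x$, which Lemma \ref{lipstchiz} converts into an additive term proportional to $\|\mathbf x^t - \mathbf 1\bar x^t\|^2$ that is simply absorbed into the constants $C_y, C_z$.

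Next I would derive one-step contraction inequalities for each of $\Omega_1^k$ (model compression error), $\Omega_2^k$ (model consensus error), $\Omega_3^k$ (tracker compression error), $\Omega_4^k$ (tracker consensus error). For the compression errors, the reference-point update $\hat{\mathbf d}^{k+1} = \hat{\mathbf d}^k + \mathcal Q(\mathbf d^{k+1} - \hat{\mathbf d}^k)$ combined with Definition \ref{def_contractive_compressor} yields $\mathbb E\|\mathbf d^{k+1} - \hat{\mathbf d}^{k+1}\|^2 \le (1-\delta_c)\|\mathbf d^{k+1} - \hat{\mathbf d}^k\|^2$, and then $\|\mathbf d^{k+1} - \hat{\mathbf d}^k\|^2$ is expanded via the update rule into a multiple of $\Omega_1^k$ plus the step displacement $\|\mathbf d^{k+1} - \mathbf d^k\|^2$, which in turn is controlled by $\gamma^2\|\mathbf W - \mathbf I\|^2$ applied to consensus/compression quantities plus $\eta^2\|\mathbf s^k\|^2$. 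For the consensus errors I would use that $(\mathbf W - \mathbf I)$ acting on the centered part contracts by the spectral gap $\rho$, using Young's inequality to split the compression-error perturbation $\gamma (\mathbf W - \mathbf I)(\hat{\mathbf d}^k - \mathbf d^k)$ off with a factor tuned so the net contraction is $(1 - \Theta(\gamma\rho))$. The tracker quantities $\Omega_3^k, \Omega_4^k$ satisfy analogous recursions but with an extra driving term $\|\nabla \mathbf r^{k+1} - \nabla \mathbf r^k\|^2 \le L_r^2\|\mathbf d^{k+1} - \mathbf d^k\|^2$, i.e. they are fed by the model displacement, closing the cycle.

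Then I would assemble the linear combination $\Omega^k$ with the stated weights ($L_r/m$ on the compression errors, $1/(mL_r)$ on the consensus errors) and choose $\gamma_{in} \propto \delta_c\rho$ and $\eta_{in} \propto \delta_c\rho^2/(\kappa\lambda L_g)$ so that in the aggregated inequality every cross term is dominated: the $\gamma\rho$ contraction of the consensus terms beats the $\gamma^2$ and $\eta^2$ feedback from the displacement, the $\delta_c$ contraction of the compression terms beats their $\gamma^2$-order coupling, and the residual $\eta$-order coupling into $r(\bar d^k) - r(\tilde d^\ast)$ is small because $\eta = O(\delta_c\rho^2/(\kappa\lambda L_g))$. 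This produces $\Omega^{k+1} \le (1 - c\,\eta\mu\lambda)\,\Omega^k \le (1 - \mu/(8L_g))\,\Omega^k$ for $r = h$ (and the rate $\mu/(2L_g)$ for $r = g$, which has no $\lambda$ and a milder coupling), whence unrolling gives $\Omega^K \le \Omega^0 \exp(-\mu K/(8L_g))$. Finally $\|(\mathbf y^K)^t - \mathbf 1\tilde y_\lambda^\ast\|^2 \le 2\|(\mathbf y^K)^t - \mathbf 1\bar y^K\|^2 + 2m\|\bar y^K - \tilde y_\lambda^\ast\|^2 \le 2\Omega_2^K + (4m/(\lambda\mu))(r(\bar y^K) - r(\tilde y_\lambda^\ast)) \le C_y\exp(-\mu K/(8L_g))$ by strong convexity, and similarly for $z$; the constants $C_y, C_z$ collect $\Omega^0$, the $\kappa^2$ factors from Lemma \ref{lipstchiz}, and the problem-dependent Lipschitz constants.

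\textbf{Main obstacle.} The delicate part is the simultaneous step-size calibration: the displacement $\|\mathbf d^{k+1} - \mathbf d^k\|^2$ couples all five quantities, and one must verify that a single pair $(\gamma_{in}, \eta_{in})$ with the claimed scaling in $\delta_c, \rho, \kappa, \lambda, L_g$ makes the $5\times 5$ (or $6\times 6$, counting the optimality gap) iteration matrix have spectral radius $\le 1 - \mu/(8L_g)$ — in particular ensuring the compression-induced perturbations do not overwhelm the spectral-gap contraction, which is exactly why $\gamma_{in}$ must carry a factor $\delta_c$ and $\eta_{in}$ a factor $\delta_c\rho^2$. Handling the biased-compressor case via the substitution $\mathcal Q' = \mathcal Q/(2-\delta_c)$ noted after Definition \ref{def_contractive_compressor} is a routine addendum once the unbiased case is done.
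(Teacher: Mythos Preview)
Your proposal is correct and follows essentially the same route as the paper: the paper sets up the same five coupled recursions (its Lemma on inner-loop errors gives $\Omega_0^k = \|\bar d^k - d^\ast\|^2$ together with $\Omega_1^k,\dots,\Omega_4^k$) and then invokes a matrix/Perron-type argument---finding a weight vector $\boldsymbol\epsilon$ with $\mathbf A\boldsymbol\epsilon \le (1-\eta\mu)\boldsymbol\epsilon$ for the coefficient matrix $\mathbf A$---which is exactly your weighted-Lyapunov calibration stated in dual form. One minor simplification: since the theorem's target is $\tilde y_\lambda^\ast$ (the optimum for the local-$x_i$ objective), there is no bias term from ``local $x_i$ vs.\ $\bar x$'' to absorb in the inner loop itself, so you can drop that step.
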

%emark: Theorem 1 shows a linear convergence rate of
%inner loop, which is aligned with the result for a strongly-
%convex decentralized algorithm along with gradient track-
%ing.(cite)
\begin{corollary}
For any positive constant $\epsilon \ge 0$, $ \|(\mathbf{y}^K)^t - \mathbf{1}\tilde{y}_\lambda^\ast\|^2$ and $\|(\mathbf{z}^K)^t - \mathbf{1}\tilde{y}^\ast\|^2$ can reach an order of $\mathcal{O}(\epsilon^{-2})$ if inner loop steps $K = \mathcal{O}(\log{(1/\epsilon^{4})})$.
\end{corollary}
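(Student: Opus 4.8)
The plan is to invoke Theorem~\ref{inner_loop_convergence} directly and simply solve the resulting exponential inequalities for $K$. From the theorem, with the stated choices $\gamma_{in}\propto\delta_c\rho$ and $\eta_{in}\propto\delta_c\rho^2/(\kappa\lambda L_g)$, we have $\|(\mathbf{y}^K)^t-\mathbf{1}\tilde{y}_\lambda^\ast\|^2\le C_y\exp(-\mu K/(8L_g))$ and $\|(\mathbf{z}^K)^t-\mathbf{1}\tilde{y}^\ast\|^2\le C_z\exp(-\mu K/(2L_g))$. To make each right-hand side no larger than a target accuracy $\tau$ (a fixed power of $\epsilon$, matching the order claimed in the statement), it suffices to require $K\ge(8L_g/\mu)\log(C_y/\tau)$ for the $y$-iterate and $K\ge(2L_g/\mu)\log(C_z/\tau)$ for the $z$-iterate; taking $K$ to be the maximum of the two and recalling that $L_g/\mu\le\kappa$ is an absolute constant, this gives $K=\mathcal{O}\big(\log(1/\tau)+\log\max\{C_y,C_z\}\big)$. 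Since the $z$-recursion contracts at the faster rate $\mu/(2L_g)$ while the $y$-recursion contracts at $\mu/(8L_g)$, the binding constraint is the $y$-iterate, so it is in fact enough to take $K\ge 8\kappa\log(C_y/\tau)$ and the same $K$ automatically handles the $z$-iterate.

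Next I would argue that the additive $\log\max\{C_y,C_z\}$ term does not change the order. Inspecting the proof of Theorem~\ref{inner_loop_convergence}, the constants $C_y$ and $C_z$ are assembled from the initial value of the inner-loop Lyapunov function in \eqref{valuefunc_innerloop}, together with $\kappa$, $L_g$, $\lambda$, $\rho$, $\delta_c$ and the consensus bounds of Lemma~\ref{lipstchiz}; hence they are polynomial in these problem quantities. Because the penalty coefficient is set on the order $\lambda=\mathcal{O}(\epsilon^{-1})$ (so that $\psi_\lambda$ is an $\epsilon$-accurate proxy of $\psi$ by Lemma~\ref{first-order}), we obtain $\log\max\{C_y,C_z\}=\mathcal{O}(\log(1/\epsilon))$. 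Combining this with $\log(1/\tau)=\Theta(\log(1/\epsilon))=\Theta(\log(1/\epsilon^{4}))$ yields $K=\mathcal{O}(\log(1/\epsilon^{4}))$, which is exactly the claimed iteration count, and the two inner-loop errors then attain the stated order.

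The only real subtlety — and the step I would be most careful about — is making the dependence of $C_y$ and $C_z$ on $\lambda$ (equivalently on $\epsilon^{-1}$) explicit enough to be certain it is at most polynomial, so that it is absorbed inside the logarithm; once that bookkeeping is pinned down, the corollary is immediate from the monotonicity of $\exp(\cdot)$. Everything else is routine: take logarithms of the two bounds in Theorem~\ref{inner_loop_convergence}, solve the linear-in-$K$ inequalities, and use $L_g/\mu=\kappa=\mathcal{O}(1)$ to collapse the prefactors into constants.
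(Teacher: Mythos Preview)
Your proposal is correct and matches the paper's approach: the corollary is presented in the paper as an immediate consequence of Theorem~\ref{inner_loop_convergence}, and your argument---take logarithms in the exponential bounds and solve for $K$---is exactly the intended one. Your additional care in verifying that $\log\max\{C_y,C_z\}=\mathcal{O}(\log(1/\epsilon))$ (since $C_y,C_z$ depend polynomially on $\lambda\propto\epsilon^{-1}$) is in fact more thorough than what the paper spells out.
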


\begin{lemma}
    \label{outer_loop_error}
    Based on the definition \eqref{valuefunc_outerloop}, the following error could be recursively bounded by,
    \begin{align*}
       \Omega^{t+1}_1 & \leq (1-\frac{\gamma\rho}{2}) \Omega^t_1 + \frac{6\eta_{out}^2}{\gamma\rho} \Omega^t_2 \\
     \Omega^{t+1}_2 &\leq (1-\frac{\gamma\rho}{2}+\frac{\tau(\lambda L_g)^2}{\gamma \rho}\eta^2)\Omega^t_2 + \frac{\tau(\lambda L_g)^2}{\gamma \rho}\gamma^2\rho^\prime \Omega^t_1 \\ & \quad+ \frac{\tau(\lambda L_g)^2}{\gamma \rho}\eta^2m \|\bar{s}^t \|^2 + 5(\lambda L_g)^2 C_{yz} \exp(-\frac{\mu K}{8L_g}),
    \end{align*}
\end{lemma}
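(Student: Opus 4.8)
\textbf{Proof plan for Lemma \ref{outer_loop_error}.}

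The plan is to bound the two consensus quantities $\Omega_1^{t+1}=\|\mathbf{x}^{t+1}-\mathbf{1}\bar{x}^{t+1}\|^2$ and $\Omega_2^{t+1}=\|\mathbf{s}_x^{t+1}-\mathbf{1}\bar{s}_x^{t+1}\|^2$ by expanding the outer-loop updates of Algorithm \ref{alg:C2DFB2} and exploiting the contraction of the mixing matrix $\mathbf{W}$ on the subspace orthogonal to $\mathbf{1}$. Write $\mathbf{J}=\frac{1}{m}\mathbf{1}\mathbf{1}^\top$ and recall from Assumption \ref{assmp_graph} that $\|(\mathbf{W}-\mathbf{J})\mathbf{v}\|\le(1-\rho)\|\mathbf{v}\|$ for any $\mathbf{v}$, so that the mixing step $\mathbf{x}+\gamma(\mathbf{W}-\mathbf{I})\mathbf{x}$ acting on $(\mathbf{I}-\mathbf{J})$ behaves as a contraction by roughly $(1-\gamma\rho)$. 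First I would establish the $\Omega_1$ recursion: subtract $\mathbf{1}\bar{x}^{t+1}$ from the $x$-update, use $(\mathbf{I}-\mathbf{J})(\mathbf{W}-\mathbf{I})=(\mathbf{W}-\mathbf{J})-(\mathbf{I}-\mathbf{J})$ to see that $\mathbf{x}^{t+1}-\mathbf{1}\bar{x}^{t+1}=\big((1-\gamma)\mathbf{I}+\gamma(\mathbf{W}-\mathbf{J})\big)(\mathbf{x}^t-\mathbf{1}\bar{x}^t)-\eta_{out}(\mathbf{I}-\mathbf{J})\mathbf{s}_x^t$, then apply Young's inequality $\|a+b\|^2\le(1+c)\|a\|^2+(1+1/c)\|b\|^2$ with $c\asymp\gamma\rho$; the first term contributes $(1-\gamma\rho)^2(1+c)\le(1-\gamma\rho/2)$ and the cross term with $\mathbf{s}_x^t$, after noting $\|(\mathbf{I}-\mathbf{J})\mathbf{s}_x^t\|=\|\mathbf{s}_x^t-\mathbf{1}\bar{s}_x^t\|$, yields the $\frac{6\eta_{out}^2}{\gamma\rho}\Omega_2^t$ term.

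Next I would handle the $\Omega_2$ recursion for the gradient tracker. Subtracting $\mathbf{1}\bar{s}_x^{t+1}$ from the tracker update and using the same decomposition gives $\mathbf{s}_x^{t+1}-\mathbf{1}\bar{s}_x^{t+1}=\big((1-\gamma)\mathbf{I}+\gamma(\mathbf{W}-\mathbf{J})\big)(\mathbf{s}_x^t-\mathbf{1}\bar{s}_x^t)+(\mathbf{I}-\mathbf{J})(\mathbf{u}_x^{t+1}-\mathbf{u}_x^t)$. Again apply Young's inequality to split off the contraction factor $(1-\gamma\rho/2)$, leaving a term proportional to $\frac{1}{\gamma\rho}\|\mathbf{u}_x^{t+1}-\mathbf{u}_x^t\|^2$. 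The crux is then bounding the one-step drift of the local hypergradient estimators $\mathbf{u}_x$. Using the $L_f,L_g$-smoothness from Assumption \ref{assump_smooth} (so each coordinate of $\mathbf{u}_x$ is $\mathcal{O}(\lambda L_g)$-Lipschitz jointly in $(x,y,z)$), $\|\mathbf{u}_x^{t+1}-\mathbf{u}_x^t\|^2 \lesssim (\lambda L_g)^2\big(\|\mathbf{x}^{t+1}-\mathbf{x}^t\|^2+\|\mathbf{y}^{t+1}-\mathbf{y}^t\|^2+\|\mathbf{z}^{t+1}-\mathbf{z}^t\|^2\big)$. The $x$-increment is $\|\mathbf{x}^{t+1}-\mathbf{x}^t\|^2\le 2\gamma^2\|(\mathbf{W}-\mathbf{I})(\mathbf{x}^t-\mathbf{1}\bar{x}^t)\|^2+2\eta_{out}^2\|\mathbf{s}_x^t\|^2\lesssim \gamma^2\rho'\Omega_1^t+\eta_{out}^2(\Omega_2^t+m\|\bar{s}^t\|^2)$ after splitting $\|\mathbf{s}_x^t\|^2\le 2\Omega_2^t+2m\|\bar{s}_x^t\|^2$; collecting these reproduces the $\frac{\tau(\lambda L_g)^2}{\gamma\rho}\eta^2\Omega_2^t$, $\frac{\tau(\lambda L_g)^2}{\gamma\rho}\gamma^2\rho'\Omega_1^t$ and $\frac{\tau(\lambda L_g)^2}{\gamma\rho}\eta^2 m\|\bar{s}^t\|^2$ terms.

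The main obstacle will be controlling the inner-loop increments $\|\mathbf{y}^{t+1}-\mathbf{y}^t\|^2$ and $\|\mathbf{z}^{t+1}-\mathbf{z}^t\|^2$ that feed into the $\mathbf{u}_x$-drift, since these are the outputs of the \textbf{IN} subroutine after $K$ steps and are not directly given by a clean closed form. I would bound them by inserting $\pm\mathbf{1}\tilde{y}_\lambda^\ast$ (resp. $\pm\mathbf{1}\tilde{y}^\ast$) at both iterations, using Theorem \ref{inner_loop_convergence} to control $\|\mathbf{y}^K-\mathbf{1}\tilde y_\lambda^\ast\|^2$ and $\|\mathbf{z}^K-\mathbf{1}\tilde y^\ast\|^2$ by $C_y\exp(-\mu K/8L_g)$ and $C_z\exp(-\mu K/2L_g)$, together with Lemma \ref{lipstchiz} to convert the change in $\tilde y_\lambda^\ast,\tilde y^\ast$ between rounds into $\mathcal{O}(\kappa^2)$ times a mix of $\Omega_1^t,\Omega_1^{t+1}$, and finally Lipschitzness of $y_\lambda^\ast(\cdot),y^\ast(\cdot)$ to absorb $\|\bar x^{t+1}-\bar x^t\|^2\lesssim\eta_{out}^2\|\bar s^t\|^2$. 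This is where the residual term $5(\lambda L_g)^2 C_{yz}\exp(-\mu K/8L_g)$ originates, with $C_{yz}$ aggregating $C_y,C_z$ and the $\kappa^2$ factors; the remaining $\Omega_1$- and $\|\bar s^t\|^2$-dependent pieces get folded into the already-present coefficients by choosing the constant $\tau$ large enough and the step sizes $\gamma_{in},\eta_{in}$ as in Theorem \ref{inner_loop_convergence} so that all the $\mathcal{O}(\gamma^2),\mathcal{O}(\eta^2)$ cross-contaminations stay subdominant.
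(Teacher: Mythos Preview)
Your plan matches the paper's proof almost exactly: the $\Omega_1$ recursion via the mixing contraction and Young's inequality, the $\Omega_2$ recursion via the same contraction plus the drift $\|\mathbf{u}_x^{t+1}-\mathbf{u}_x^t\|^2$ bounded by $(\lambda L_g)^2$ times the $(x,y,z)$-increments, and the $x$-increment split into $\gamma^2\rho'\Omega_1^t+\eta^2\Omega_2^t+\eta^2 m\|\bar s^t\|^2$ are all precisely what the paper does.

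The one place you diverge is in controlling $\|\mathbf{y}^{t+1}-\mathbf{y}^t\|^2$ (and the $z$ analogue). The paper does \emph{not} route through Lemma~\ref{lipstchiz} and the centralized minimizer $y_\lambda^\ast(\bar x)$; instead it proves a separate lemma that directly gives $\|(\tilde y_\lambda^\ast)^{t+1}-(\tilde y_\lambda^\ast)^t\|^2\le \tfrac{16\kappa^2}{m}\|\mathbf{x}^{t+1}-\mathbf{x}^t\|^2$ by the same strong-convexity/Lipschitz trick as Lemma~\ref{lipstchiz}, but comparing $\mathbf{x}^{t+1}$ to $\mathbf{x}^t$ rather than $\mathbf{x}^t$ to $\mathbf{1}\bar x^t$. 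This feeds the $y,z$-drift straight back into $\|\mathbf{x}^{t+1}-\mathbf{x}^t\|^2$, and after substituting the already-computed $x$-increment bound one lands exactly on the coefficients $\tfrac{\tau(\lambda L_g)^2}{\gamma\rho}\gamma^2\rho'\,\Omega_1^t$, $\tfrac{\tau(\lambda L_g)^2}{\gamma\rho}\eta^2\,\Omega_2^t$, $\tfrac{\tau(\lambda L_g)^2}{\gamma\rho}\eta^2 m\|\bar s^t\|^2$ stated in the lemma. Your detour via $y_\lambda^\ast(\bar x)$ is also valid, but it produces contributions of order $\tfrac{(\lambda L_g)^2\kappa^2}{\gamma\rho}(\Omega_1^t+\Omega_1^{t+1})$ without the $\gamma^2\rho'$ prefactor; these cannot literally be ``folded into the already-present coefficients by choosing $\tau$ large enough'' since $\kappa^2/(\gamma^2\rho')$ is not bounded. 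You would still obtain a usable recursion, just not the exact form asserted. The paper's direct comparison is both cleaner and what actually yields the stated constants.
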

where $C_{yz} = \max\{C_y, C_z\}$ and $\tau$ refers to bounded constants.
Lemma \ref{outer_loop_error} demonstrates that, with proper set of $\eta$ and $\gamma$, the model and tracker consensus errors are guaranteed to decrease. This provides the necessary conditions to ensure that the value function $\Omega^t$ progressively decreases over iterations, contributing to the final result of our analysis.

\begin{theorem}
    \label{outer_loop_convergence}
    Under Assumptions \ref{assmp_graph} and \ref{assump_smooth}, if we set 
    \begin{align*}
         \lambda\propto \mathcal{O}(l\kappa^3\epsilon^{-1}) \quad \eta_{out} \propto  \mathcal{O}(\gamma l^{-4} \kappa^{-6}\epsilon^{2}) \quad \gamma_{out} \propto  \mathcal{O}(\rho^2),
    \end{align*}
    Algorithm\ref{alg:C2DFB2} needs $O(l^{4}\kappa^{6}\rho^{-2}\epsilon^{-4}log(\epsilon^{-4}))$ first-order orcale calls to reach the $\epsilon$-stationary point.
\end{theorem}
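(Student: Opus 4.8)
\textbf{Proof proposal for Theorem \ref{outer_loop_convergence}.}

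The plan is to assemble a single Lyapunov descent inequality for the outer-loop value function $\Omega^t$ defined in \eqref{valuefunc_outerloop}, and then unroll it over $T$ iterations. First I would establish a descent estimate for the hyper-objective term $\psi(\bar{x}^t)$. Since $\psi_\lambda$ has $\mathcal{O}(\kappa^3)$-Lipschitz gradient by Lemma \ref{first-order}, and the average update is $\bar{x}^{t+1} = \bar{x}^t - \eta_{out}\bar{s}_x^t$ (the mixing term vanishes under $\textbf{1}^\top(\mathbf{W}-\mathbf{I})=0$, exactly as in \eqref{compress_prop2}), a standard smoothness expansion gives $\psi_\lambda(\bar{x}^{t+1}) \le \psi_\lambda(\bar{x}^t) - \frac{\eta_{out}}{2}\|\nabla\psi_\lambda(\bar{x}^t)\|^2 - (\frac{\eta_{out}}{2} - \mathcal{O}(\kappa^3)\eta_{out}^2)\|\bar{s}_x^t\|^2 + \eta_{out}\|\bar{s}_x^t - \nabla\psi_\lambda(\bar{x}^t)\|^2$. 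The key is then to control the gradient-tracking bias $\|\bar{s}_x^t - \nabla\psi_\lambda(\bar{x}^t)\|^2$: because $\bar{s}_x^t = \frac{1}{m}\sum_i (u_i)_x^t$ by the tracking property, this bias splits into (i) the model consensus error $\Omega_1^t$ (the gap between local $x_i$ and $\bar{x}$), and (ii) the inner-loop inexactness, i.e.\ how far $(y_i^K)^t, (z_i^K)^t$ are from $\tilde y_\lambda^\ast, \tilde y^\ast$, plus how far those are from the true $y_\lambda^\ast(\bar x), y^\ast(\bar x)$. Part (ii) is bounded by Theorem \ref{inner_loop_convergence} (the $\exp(-\mu K/8L_g)$ terms) composed with Lemma \ref{lipstchiz} (which re-expresses the $\tilde y$-vs-$y^\ast(\bar x)$ deviation in terms of $\Omega_1^t$ again). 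So the bias is $\lesssim (\lambda L_g)^2(\frac{1}{m}\Omega_1^t + C_{yz}e^{-\mu K/8L_g})$.

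Next I would combine this with the recursive consensus bounds from Lemma \ref{outer_loop_error}. Forming $\Omega^{t+1} - \Omega^t$ with the weights $\frac{1}{m}$ and $\frac{\eta_{out}}{m}$ already baked into the value function, the $\frac{1}{m}\Omega_1^t$ coefficient coming from the gradient-tracking bias must be dominated by the $-\frac{\gamma\rho}{2m}\Omega_1^t$ contraction from the $\Omega_1$ recursion; this is where the scaling $\eta_{out} \propto \gamma l^{-4}\kappa^{-6}\epsilon^2$ is forced — $\eta_{out}$ must be small enough (relative to $\gamma\rho$ and to $(\lambda L_g)^{-2} \sim l^{-2}\kappa^{-2}\lambda^{-2}$, and with $\lambda \sim l\kappa^3\epsilon^{-1}$ this is where the $\kappa^6$ and $\epsilon^2$ come in) that the cross terms are absorbed. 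Similarly the $\eta_{out}^2 m\|\bar s^t\|^2$ term in the $\Omega_2$ recursion must be swallowed by the negative $\|\bar s_x^t\|^2$ coefficient from the descent step, again constraining $\eta_{out}$. After all cancellations the telescoped inequality reads
\begin{equation*}
\frac{\eta_{out}}{2}\sum_{t=0}^{T-1}\|\nabla\psi_\lambda(\bar x^t)\|^2 \le \Omega^0 - \Omega^T + T\cdot \mathcal{O}\big((\lambda L_g)^2 \eta_{out} C_{yz} e^{-\mu K/8L_g}\big).
\end{equation*}
Dividing by $\frac{\eta_{out}T}{2}$ and choosing $K = \mathcal{O}(\log(1/\epsilon^4))$ (per the Corollary) makes the residual inner-loop term $\mathcal{O}(\epsilon^2)$, so $\min_t\|\nabla\psi_\lambda(\bar x^t)\|^2 \le \frac{2(\Omega^0-\psi^\ast)}{\eta_{out}T} + \mathcal{O}(\epsilon^2)$. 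Setting $\eta_{out}T \gtrsim \epsilon^{-2}$, i.e.\ $T = \mathcal{O}(\eta_{out}^{-1}\epsilon^{-2}) = \mathcal{O}(l^4\kappa^6\rho^{-2}\epsilon^{-4})$ (using $\gamma\propto\rho^2$, so $\eta_{out}\propto\rho^2 l^{-4}\kappa^{-6}\epsilon^2$), yields $\|\nabla\psi_\lambda(\bar x^t)\|\le \mathcal{O}(\epsilon)$ for some $t$. Finally, Lemma \ref{first-order}(1) with $\lambda\propto l\kappa^3\epsilon^{-1}$ converts this into $\|\nabla\psi(\bar x^t)\|\le\mathcal{O}(\epsilon)$, and multiplying $T$ by the per-iteration inner-loop cost $K=\mathcal{O}(\log(\epsilon^{-4}))$ gives the claimed $\mathcal{O}(l^4\kappa^6\rho^{-2}\epsilon^{-4}\log(\epsilon^{-4}))$ oracle complexity.

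The main obstacle I anticipate is the bookkeeping in the coupled recursion: one must simultaneously choose $\gamma$, $\eta_{out}$, $\lambda$, and the Lyapunov weights so that \emph{every} cross term (the $\Omega_1\!\leftrightarrow\!\Omega_2$ coupling in Lemma \ref{outer_loop_error}, the tracking-bias $\Omega_1$ term, and the $\|\bar s^t\|^2$ term) is dominated by a single negative term, with no circularity — this is a constrained feasibility check on the constants rather than a deep idea, but it is delicate, and getting the exact powers of $\kappa$, $l$, $\rho$ requires carefully propagating the $\mathcal{O}(\kappa^3)$ smoothness of $\psi_\lambda$, the $(\lambda L_g)^2 = \Theta(l^2\kappa^2\lambda^2)$ factors, and the $\kappa^2$ from Lemma \ref{lipstchiz} through the whole chain. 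A secondary subtlety is ensuring the inner-loop warm-start constants $C_y, C_z$ (hence $C_{yz}$) do not themselves blow up with $t$ — they depend on the initial inner-loop gap at round $t$, which is controlled because $x_i^{t+1}-x_i^t$ is $\mathcal{O}(\eta_{out})$-small, so the previous round's solution is a good initialization; this should be folded into the statement of Theorem \ref{inner_loop_convergence} or argued alongside it.
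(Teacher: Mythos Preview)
Your proposal is correct and follows essentially the same Lyapunov approach as the paper: a descent inequality for the hyper-objective along $\bar x^{t+1}=\bar x^t-\eta_{out}\bar s_x^t$, decomposition of the tracking bias via Lemma~\ref{lipstchiz} and Theorem~\ref{inner_loop_convergence}, combination with the consensus recursions of Lemma~\ref{outer_loop_error} inside the value function $\Omega^t$, and then telescoping with the stated parameter choices. The only cosmetic difference is that the paper applies the descent lemma directly to $\psi$ (using its $\mathcal{O}(l\kappa^3)$ smoothness) and carries the bias $\|\nabla\psi-\nabla\psi_\lambda\|^2=\mathcal{O}(l^2\kappa^6\lambda^{-2})$ as an additive $\mathcal{O}(\eta_{out}\epsilon^2)$ term inside each step of the recursion, whereas you run the argument for $\psi_\lambda$ and invoke Lemma~\ref{first-order}(1) only at the end; both orderings give the same complexity.
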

 Based on above results, we can obtain the final communication complexities of our algorithms $O(l^{4}\kappa^{6}\rho^{-2}\epsilon^{-4}log(\epsilon^{-4}))$.

\begin{figure*}[t]
  \centering
  \includegraphics[width=\linewidth, height=6.5cm]{./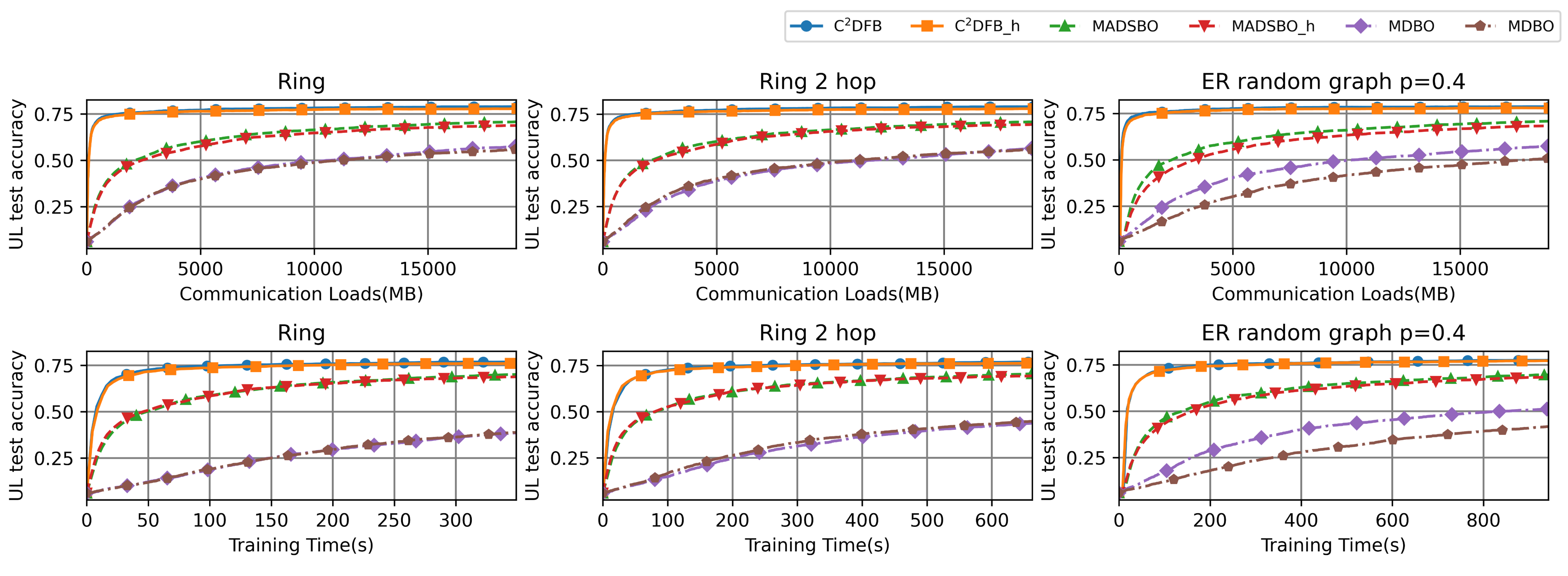}
  \caption{Comparison of upper-level test accuracy versus communication loads and training times for $\text{C}^2$DFB, MADSBO and MDBO under three topology on Coefficient Tuning task. The 'h' notation represents a heterogeneous data distribution across 10 clients, with a heterogeneity level set to 0.8 in the experiment.}
  \label{fig:heter}
\end{figure*}

% \begin{figure*}[t]
%   \centering
%   \includegraphics[width=\linewidth, height=4.5cm]{./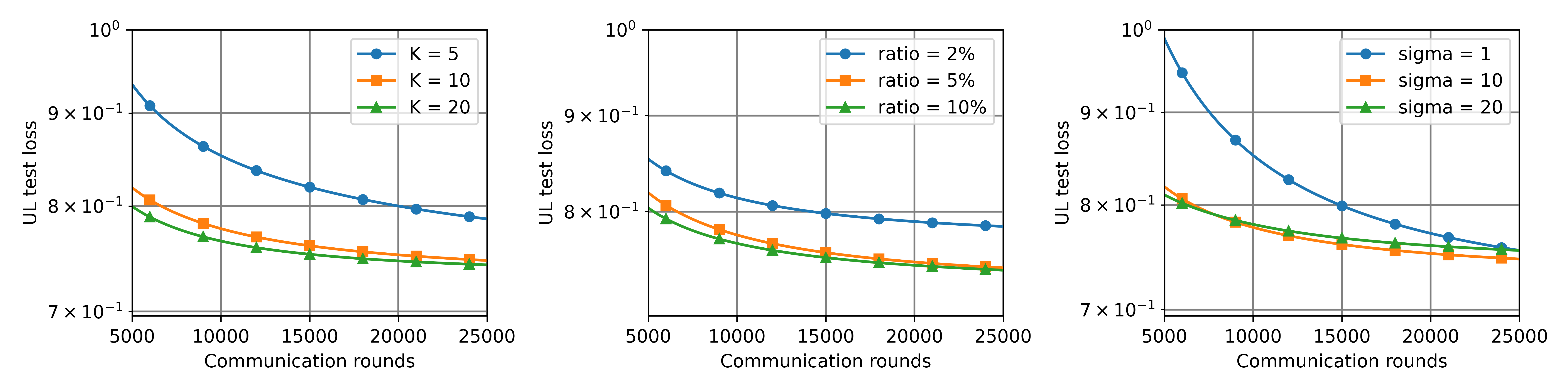}
%   \caption{Sensitive studies of $\text{C}^2$DFB, (1) varying the number of inner loops \( K \) (left), (2) varying the compression ratio (middle), and (3) varying the multiplier $\sigma$(right).}
%   \label{fig:sensitive}
% \end{figure*}
% since remaining space is limited, I will put this in the supplementary material

\begin{figure*}[t]
  \centering
  \includegraphics[width=\linewidth, height=4cm]{./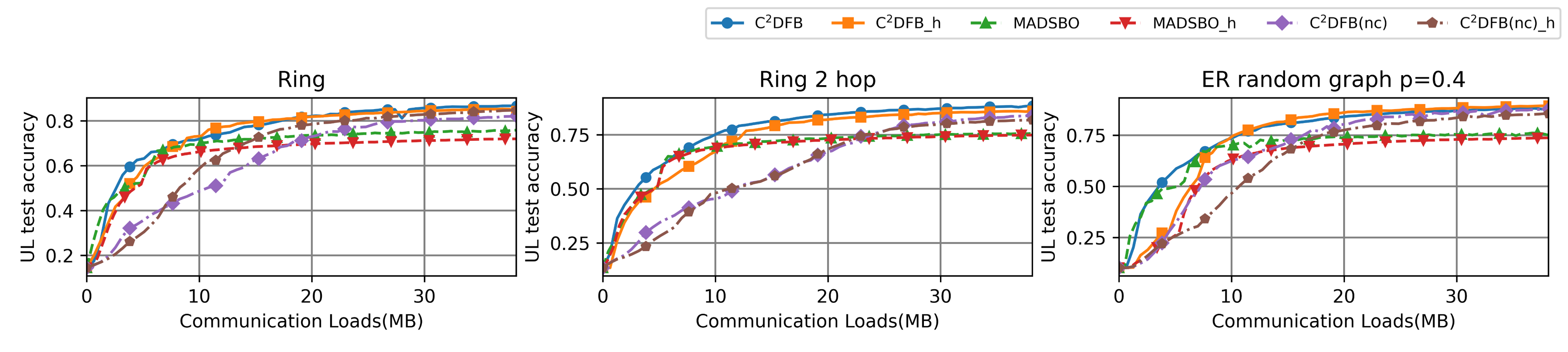}
  \caption{Upper-level test loss comparison versus communication loads for $\text{C}^2$DFB, MADSBO and a naive compression version of $\text{C}^2$DFB under three topology on Hyper Representation task. The 'h' notation represents a heterogeneous data distribution across 10 clients, with a heterogeneity level set to 0.8 in the experiment.}
  \label{fig:heter2}
\end{figure*}

\section{Experiments}\label{sec:exp}
\subsection{Coefficient Tunning on 20 Newsgroup}
We evaluate our algorithm by processing a hyperparameter tuning task using the 20 Newsgroups dataset, following the methodology outlined in \citet{NEURIPS2022_6dddcff5, kong2024decentralizedbileveloptimizationgraphs}. This dataset contains 101,631 features, making the computation and storage of the Jacobian/Hessian matrix impractical. In this task, %the upper-level (UL) and lower-level (LL) 
UL and LL functions are defined as follows:

%The dataset has a feature dimension of 101631 such that computing and storing Jacobian/Hessian matrix is impractical. In this task, the UL and LL function takes the form of
\begin{align*}
    f_i(x, y) &:= \frac{1}{|\mathcal{D}_{\text{val}}|} \sum_{(a_i, b_i) \in \mathcal{D}_{\text{val}}} \ell(\langle a_i, y \rangle, b_i),\\
    g_i(x, y) &:= \frac{1}{|\mathcal{D}_{\text{tr}}|} \sum_{(a_i, b_i) \in \mathcal{D}_{\text{tr}}} \!\!\!\!\!\! \ell(\langle a_i, y \rangle, b_i) + y^\top diag(\exp(x)) y,
\end{align*}
where $\ell(\cdot, \cdot)$ denotes the cross-entropy loss. $\mathcal{D}{\text{val}}$ and $\mathcal{D}{\text{tr}}$ represent the validation and training datasets, respectively. In this context, we employ a linear classifier as the model, with the objective of optimizing its coefficients through bilevel optimization.

For this experiment, we set the learning rates for both the outer and inner loops to 1, the consensus step to 0.5, and $\sigma$ to 10. The experiment was conducted on a network of 10 nodes using PyTorch multiprocessing. We evaluate three network topologies: a ring topology with each node linked to its two immediate neighbors; a 2-hop topology, connecting nodes to their neighbors' neighbors; and an Erdos–Renyi (ER) topology, randomly forming edges between nodes with a probability of $p=0.4$. Our results, depicted in Figure \ref{fig:heter}
, show that C$^{2}$DFB achieves faster convergence than these baselines across all topologies. Additionally, we assess performance in a data-heterogeneous setting where 80\%$(h)$ of each class's data is allocated to a specific client, with the remaining distributed among others.

% Figure \ref{fig:heter} compares our algorithm to the double-loop algorithm MADSBO \cite{pmlr-v202-chen23n} and the single-loop bilevel algorithm MDBO \cite{pmlr-v206-gao23a} across three different topologies. The results demonstrate that our algorithm converges more rapidly than baseline methods in all tested topologies. Additionally, we evaluate the performance under a data heterogeneous setting controlled by a parameter $h$. In this setting, a portion $h=0.8$ of the data from each class is allocated to a specific client, while the remainder is distributed among the other clients.

\begin{table}[!ht]
    \centering
    \caption{Comparison of Communication Volume and Training Time for Various Algorithms on the 20 Newsgroups Dataset over ring topology with heterogeneous data distribution, achieving 70\% Test Accuracy.}
    \begin{tabular}{lll}
    \hline\hline
        \textbf{Algo.} & \textbf{Comm. Vol. (MB)} & \textbf{Train. Time (s)} \\ \hline
        \textbf{C}$^{2}$\textbf{DFB} & 378.20 & 96.10 \\ 
        MADSBO & 24466.75 & 830.35 \\ 
        MDBO & 98463.75 & 9810.91  \\ 
        \hline\hline
    \end{tabular}
    \label{comm&comp}
\end{table}

% \textbf{C}$^{2}$\textbf{DFB} & 330.93 & 80.80 \\ 
% MADSBO & 16078.15 & 622.31 \\ 
% MDBO & 78890.35 & 8156.84 \\ 

% \textbf{C}$^{2}$\textbf{DFB\_h} & 378.20 & 96.10 \\ 
% MADSBO\_h & 24466.75 & 830.35 \\ 
% MDBO\_h & 98463.75 & 9810.91 \\ 
  
Table \ref{comm&comp} compares the communication volume and training time of the C$^{2}$DFB algorithm with baselines (MADSBO and MDBO) on the 20 Newsgroups dataset for achieving 70\% test accuracy over a ring topology under heterogeneous distribution. C$^{2}$DFB significantly outperforms the baselines, requiring only 387 MB of communication—about 260 times less than MDBO's 98,464 MB. Additionally, its training time is markedly reduced at 96 seconds, compared to MDBO’s 7,831 seconds. These results highlight the efficiency of C$^{2}$DFB in both communication and computation.

\subsection{Hyper-Representation Learning}
We further evaluate our algorithm on a hyper-representation learning task, as described in \citet{pmlr-v162-tarzanagh22a}. This task aims to enhance the model's feature representation and improve its performance on downstream tasks. The outer objective focuses on refining the model backbone, while the inner objective optimizes a smaller, task-specific head responsible for classification on the training data. We implement this task using a three-layer multilayer perceptron on the MNIST dataset, where the outer optimization targets the hidden units with 81,902 parameters, and the inner optimization focuses on the classification head, comprising approximately 640 parameters.

To demonstrate the impact of our reference point compression technique, we introduce a baseline variant, C$^{2}$DFB(nc), which simply compresses the transmitted parameters and compensates for accumulated compression errors from previous communication rounds.

Our results, illustrated in Fig. \ref{fig:heter2}, highlight the effectiveness and stability of our algorithm across three topologies, under both homogeneous and heterogeneous data settings. Compared to MADSBO, our algorithm exhibits faster convergence and higher final accuracy over 2,000 rounds. Moreover, unlike the naive variant, our algorithm demonstrates greater stability and speed during the training process. This improvement is likely due to the fact that our algorithm aligns closely with the centralized version of the Fully First-Order Bilevel method from a global perspective.

% We also compare our $\text{C}^2$DFB with MADSBO and a naive-compression version of our algorithm that transfers the compressed parameter across networks with error compensation on a hyper-representation learning task\cite{pmlr-v162-tarzanagh22a}. Specifically, this task focuses on improving the model's feature representation and its performance on downstream tasks. The outer objective refines the model's backbone, and the inner one is then concerned with optimizing a smaller, task-specific head that is responsible for performing classification on training data.

% We implement this task using a three layer multilayer perceptron on MNIST dataset, where the outer optimization targets at the hidden unit with 81902 parameters and the inner optimization is focused on the classification head comprising around 640 parameters. Our results, as depicted in \ref{fig:heter2}, underscore the effectiveness and stability of our algorithm, both under i.i.d. and non i.i.d. setting. Additionally, we include a naive-compress version which simply compresses the parameter with error compression, demonstrating an inferior performance in convergence and stability than our proposed referecence point strategy. 

\section{Conclusion}
This paper presents the C$^{2}$DFB algorithm, a novel first-order gradient-based method developed for decentralized bilevel optimization. It addresses both computational and communication challenges by employing a reference point-based compression strategy that substantially lowers communication requirements. Our theoretical analysis validates the convergence efficiency of C$^{2}$DFB. Extensive experimental evaluations on hyperparameter tuning and hyper-representation learning tasks across diverse topologies and data distributions show that C$^{2}$DFB outperforms existing approaches.

\begingroup
\small  % \footnotesize, \scriptsize, \small
\bibliography{ref}
\endgroup

\newpage
\onecolumn
\section*{Supplementary Material}
\appendix

\renewcommand{\thesection}{\Alph{section}}

This supplementary material is organized as follows: Appendix \ref{app:not}  provides a summary of the notations used throughout the appendix and reviews the key objectives addressed in this paper. Appendix \ref{app:proof} presents the convergence proofs for both the inner and outer loops. Appendix \ref{app:exp} provides additional experimental details, including a comprehensive description of the experimental setup and sensitivity analyses on various hyperparameters.

\section{Notations}\label{app:not}
\subsection{Problem Recap}
In this paper, we tackle the following optimization problem:

\begin{equation}
\min_{x \in \mathbb{R}^d_x} \psi(x) := \min_{y \in \mathbb{R}^{d_y}} \left\{\frac{1}{m}\sum_{i=1}^m{f_i(x, y)}\right\},
\label{eq2}
\end{equation}

where

\begin{align*}
y^\ast(x) &= \underset{y \in \mathbb{R}^{d_y}}{\arg\min} \left\{\frac{1}{m}\sum_{i=1}^m{ g_i(x, y)}\right\}.
\end{align*}

To achieve a stationary point of equation \ref{eq2} without resorting to second-order information, we reformulate the problem as follows:
\begin{equation}
\begin{aligned}
\min_{x \in \mathbb{R}^d_x} \psi_\lambda^\ast(x)
&:= \min_{y \in \mathbb{R}^{d_y}} \frac{1}{m}\sum_{i=1}^m f_i(x,y) + \lambda \bigg( g_i(x,y) - g^\ast(x) \bigg), \\
g^\ast(x) &= \min_{z \in \mathbb{R}^{d_y}} \frac{1}{m}\sum_{i=1}^m g_i(x,z).
\end{aligned}
\label{reform}
\end{equation}

To solve this problem in a centralized manner, in the inner loop, we approximate $y_{\lambda}^*(\bar{x})$ and $y^*(\bar{x})$, defined as,
\begin{align*}
y_\lambda^\ast(\bar{x}^t) &= \arg \min_{y \in \mathbb{R}^{d_y}} \left\{\frac{1}{m}\sum_{i=1}^m f_i(\bar{x}^t, y) + \lambda g_i(\bar{x}^t, y)\right\}, \\
y^\ast(\bar{x}^t) &= \arg \min_{y \in \mathbb{R}^{d_y}} \frac{1}{m}\sum_{i=1}^m g_i(\bar{x}, y).
\end{align*}

The gradient of equation \ref{reform} is given by:

\begin{equation}
\begin{aligned}
\nabla \psi^\ast_{\lambda}(x) &= \nabla_x \psi_{\lambda}(x, y_{\lambda}^\ast(\bar{x})) + \nabla_y y_{\lambda}^\ast(\bar{x}) \nabla_y \psi_{\lambda}(x, y_{\lambda}^\ast(\bar{x})) \\
&= \frac{1}{m}\sum_{i=1}^m \nabla_x  f_i(x, y_{\lambda}^\ast(x)) + \lambda \bigg(\nabla_x g_i(x, y_{\lambda}^\ast(\bar{x})) - \nabla_x g_i(x, y^\ast(\bar{x}))\bigg).
\end{aligned}
\end{equation}

In the decentralized setting, the optimization shifts based on the local outer parameters, with the optimal solutions denoted as:

\begin{align*}
\tilde{y}_\lambda^\ast &= \arg \min_{y \in \mathbb{R}^{d_y}} \left\{\frac{1}{m}\sum_{i=1}^m f_i(x_i^t, y) + \lambda g_i(x_i^t, y)\right\}, \
\tilde{y}^\ast &= \arg \min_{y \in \mathbb{R}^{d_y}} \frac{1}{m}\sum_{i=1}^m g_i(x_i^t, y).
\end{align*}

This is achieved by performing $K$ steps of gradient descent, resulting in the updates $\mathbf{y}_t^K$ and  $\mathbf{z}_t^K$. 

Upon completing the inner loop, the outer loop updates the parameters through the following expression:

\begin{align*}
\widetilde{\hat{\nabla}_x\psi_i(x_i^t)} &= \nabla_x f_i(x_i,y_i^K)+ \lambda\left(\nabla_x g_i(x_i, y_i^K)- \nabla_x g_i(x_i, z_i^K)\right).
\end{align*}

\subsection{Notations involved}
\textbf{\noindent Models}
\\
We denote the average of the parameters as follows:
\begin{align*}
   \text{Outer parameter } \bar{x} = \frac{1}{m} \sum_{i=1}^m x_i, \quad
   \text{Inner parameter } \bar{y} = \frac{1}{m} \sum_{i=1}^m y_i, \quad
   \bar{z} = \frac{1}{m} \sum_{i=1}^m z_i.
\end{align*}

A stacked version of the global parameters is defined as:
\begin{align*}
    \mathbf{x} &= \begin{bmatrix}
        x_1,\\
        x_2,\\
        \cdots\\ 
        x_m
    \end{bmatrix}^\top \in \mathbb{R}^{m \times d_x}.
    \quad
    \mathbf{y} &= \begin{bmatrix}
        y_1,\\
        y_2,\\
        \cdots\\ 
        y_m
    \end{bmatrix}^\top \in \mathbb{R}^{m \times d_y}
    \quad
    \mathbf{z} &= \begin{bmatrix}
        z_1,\\
        z_2,\\
        \cdots\\ 
        z_m
    \end{bmatrix}^\top \in \mathbb{R}^{m \times d_y},
\end{align*}

Additionally, let $\mathbf{1} = [1, \cdots, 1]^\top \in \mathbb{R}^m$, and $\|\cdot\|$ denote the 2-norm for vectors and the Frobenius norm for a matrix.

In this paper, the tracker is typically denoted as $s_{\cdot}$, and the reference point as $\hat{\cdot}$, where $\cdot$ represents any of the parameters. For the inner loop, the parameter on the $i$-th client at iteration $t$ of inner loop $k$ is represented as $(\cdot^k)^t$. Across the proofs, we omit the outer loop iterations when the analysis is confined to a single iteration.

\noindent \textbf{Functions}
\\
This paper involves several key functions, denoted as follows:
\begin{itemize}
\item $f_i$ and $g_i$ represent the upper-level and lower-level functions, respectively, in the bilevel optimization problem.
\item $\psi(x)$ denotes the initial hyper-objective function with the optimal $y$.
\item $\psi_\lambda^\ast(x)$ refers to the Lagrangian reformulation with the optimal $y$ and $z$.
\item $h_i(x,y) := \left( f_i(x,y) + \lambda g_i(x,y) \right)$ represents the objective function used to determine the upper-level (UL) parameter in the Lagrangian reformulation.
\item $\nabla_x \psi_\lambda^\ast(x)$ denotes the hypergradient of the reformulated problem with the optimal $y$ and $z$.
\item $\widetilde{\hat{\nabla}_x\psi_i(x_i^t)}$ represents the approximate hypergradient obtained after $K$ steps of local model optimization, where the models solve $\min_{y \in \mathbb{R}^{d_y}}\frac{1}{m}\sum_{i=1}^m h(x_i, y)$ and $\min_{z \in \mathbb{R}^{d_y}}\frac{1}{m}\sum_{i=1}^m g(x_i, z)$.
\end{itemize}
\noindent \textbf{Constants}
\\
This paper introduces several hyperparameters based on assumptions and the proposed algorithm. Their meanings are as follows:
\begin{itemize}
\item For the algorithm, the step size is denoted by $\eta$, the mixing constants by $\gamma$, and the Lagrange multipliers by $\lambda$. The goal is to recursively update the model until the gradient norm of $\psi(x)$ is reduced to a sufficiently small positive number $\epsilon$.
\item In terms of assumptions related to the smoothness of the objective function, $C$ denotes the Lipschitz continuity of the objective function, $l$ represents the Lipschitz constant for the gradients, and $\rho$ refers to the Lipschitz continuity of the Hessian. The condition number is denoted by $\kappa$.
\item For assumptions related to compression, any compressor $\mathcal{Q}$ is associated with a compression parameter $\delta_c$.
\item Regarding network-related assumptions, $\rho$ denotes the spectral gap of the connecting matrix $\mathbf{W}$.
\end{itemize}

\section{Proofs}\label{app:proof}

\subsection{Assumptions}

\begin{assumption}
    \label{assump:smooth}
    Consider the upper-level (UL) and lower-level (LL) problems in the bilevel optimization framework. We assume the following conditions:
    \begin{enumerate}
        \item The upper-level function $f_i(x, y)$ is $C_f$-Lipschitz continuous in $y$, with $L_f$-Lipschitz continuous gradients and $\rho_f$-Lipschitz continuous Hessians.
        \item The lower-level function $g_i(x, y)$ is $\mu$-strongly convex in $y$, with $L_g$-Lipschitz continuous gradients and $\rho_g$-Lipschitz continuous Hessians.
    \end{enumerate}
\end{assumption}

\begin{definition}
    Given Assumption \ref{assump:smooth}, we define $l = \max\{C_f, L_f, L_g, \rho_g\}$ and the condition number as $\kappa = l/\mu$. 
\end{definition}

\begin{assumption}
    \label{assump:graph}
    To model real-world network scenarios, we assume that the graph $\mathcal{G} = (\mathcal{V}, \mathcal{E})$ is connected and undirected, represented by a mixing matrix $\mathbf{W} \in \mathbb{R}^{m \times m}$. The matrix $\mathbf{W}$ satisfies the following properties:
    \begin{itemize}
        \item[1)] $w_{ij} > 0$ if $(i,j) \in \mathcal{E}$, and $w_{ij} = 0$ otherwise.
        \item[2)] $\mathbf{W}$ is doubly stochastic, i.e., $\mathbf{W} = \mathbf{W}^\top$, $\sum_{i=1}^m w_{ij} = 1$, and $\sum_{j=1}^m w_{ij} = 1$.
        \item[3)] The eigenvalues of $\mathbf{W}$ satisfy $\lambda_m \leq \ldots \leq \lambda_2 \leq \lambda_1 = 1$ and $\nu = \max\{|\lambda_2|, |\lambda_m|\} < 1$.
    \end{itemize}
\end{assumption}

\begin{definition}
    For a gossip mixing matrix $\mathbf{W}$ satisfying Assumption \ref{assump:graph}, the spectral gap is defined as $\rho \equiv 1 - \delta_\rho$, where $\delta_\rho \equiv \max\{|\lambda_2(\mathbf{W})|, |\lambda_m(\mathbf{W})|\}$ is the second-largest eigenvalue in magnitude.
\end{definition}

\subsection{Propositions}

\begin{proposition}
\label{prop:biased_compressor}
If $\mathcal{Q}$ is an unbiased contractive compressor satisfying $\mathbb{E} \left[ \|\mathcal{Q}(A) - A\|^2 \right] \leq (1 - \delta_c)\|A\|^2$, then the biased compressor $\mathcal{Q}^{\prime}(A) = \frac{\mathcal{Q}(A)}{2-\delta_c}$ is also contractive with a contraction factor $\delta_c^\prime = \frac{1}{2-\delta_c}$.
\end{proposition}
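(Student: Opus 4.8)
The plan is to verify the contractive inequality in Definition~\ref{def_contractive_compressor} directly for $\mathcal{Q}^\prime(A) = \frac{\mathcal{Q}(A)}{2-\delta_c}$, exploiting both the unbiasedness $\mathbb{E}[\mathcal{Q}(A)] = A$ and the variance bound $\mathbb{E}\|\mathcal{Q}(A) - A\|^2 \le (1-\delta_c)\|A\|^2$. First I would expand
\begin{align*}
\mathbb{E}\left\|\mathcal{Q}^\prime(A) - A\right\|^2
&= \mathbb{E}\left\|\tfrac{1}{2-\delta_c}\mathcal{Q}(A) - A\right\|^2
= \tfrac{1}{(2-\delta_c)^2}\,\mathbb{E}\left\|\mathcal{Q}(A) - (2-\delta_c)A\right\|^2.
\end{align*}
Then I would write $\mathcal{Q}(A) - (2-\delta_c)A = \big(\mathcal{Q}(A) - A\big) - (1-\delta_c)A$ and use the bias-variance split: since $\mathbb{E}[\mathcal{Q}(A) - A] = 0$, the cross term vanishes in expectation, so
\begin{align*}
\mathbb{E}\left\|\mathcal{Q}(A) - (2-\delta_c)A\right\|^2
&= \mathbb{E}\left\|\mathcal{Q}(A) - A\right\|^2 + (1-\delta_c)^2\|A\|^2
\le (1-\delta_c)\|A\|^2 + (1-\delta_c)^2\|A\|^2.
\end{align*}

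Combining these, $\mathbb{E}\|\mathcal{Q}^\prime(A) - A\|^2 \le \frac{(1-\delta_c) + (1-\delta_c)^2}{(2-\delta_c)^2}\|A\|^2 = \frac{(1-\delta_c)(2-\delta_c)}{(2-\delta_c)^2}\|A\|^2 = \frac{1-\delta_c}{2-\delta_c}\|A\|^2$. The final step is to observe that $\frac{1-\delta_c}{2-\delta_c} = 1 - \frac{1}{2-\delta_c} = 1 - \delta_c^\prime$, which is exactly the contractive bound with factor $\delta_c^\prime = \frac{1}{2-\delta_c}$. One should also check $\delta_c^\prime \in (0,1]$: since $\delta_c \in (0,1]$, we have $2-\delta_c \in [1,2)$, so $\delta_c^\prime = \frac{1}{2-\delta_c} \in (\tfrac12, 1]$, which is a valid contraction factor.

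I do not anticipate a genuine obstacle here — the argument is a short computation. The only point requiring a little care is the algebraic factorization $(1-\delta_c) + (1-\delta_c)^2 = (1-\delta_c)(2-\delta_c)$, which makes the bound collapse cleanly; getting the constant $2-\delta_c$ in the denominator is precisely what forces this choice of rescaling, so it is worth presenting the motivation (minimizing the resulting contraction constant over rescalings $\mathcal{Q}/c$) rather than pulling $c = 2-\delta_c$ out of thin air. If a motivation is desired, I would note that for $\mathcal{Q}^\prime = \mathcal{Q}/c$ the same computation gives contraction constant $\frac{(1-\delta_c) + (c-1)^2}{c^2}$, and differentiating in $c$ yields the optimal $c = 2-\delta_c$.
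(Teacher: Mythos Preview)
Your proposal is correct and follows essentially the same approach as the paper: both expand $\mathbb{E}\|\mathcal{Q}'(A)-A\|^2$ via the decomposition $\mathcal{Q}(A)-(2-\delta_c)A = (\mathcal{Q}(A)-A)-(1-\delta_c)A$, use unbiasedness to kill the cross term, and simplify $(1-\delta_c)+(1-\delta_c)^2=(1-\delta_c)(2-\delta_c)$. Your additional remarks on the range of $\delta_c'$ and the optimality of the rescaling $c=2-\delta_c$ go slightly beyond the paper's proof but are correct and helpful.
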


\begin{proof}
    Let $\mathcal{Q}$ be an unbiased contractive compressor, such that
    \begin{equation}
        \mathbb{E} \left[ \|\mathcal{Q}(A) - A\|^2 \right] \leq (1 - \delta_c)\|A\|^2 \quad \text{and} \quad \mathbb{E} \left[ \mathcal{Q}(A) \right] = A.
        \label{unbias2bias}
    \end{equation}
    Construct a new compressor $\mathcal{Q}^{\prime}(A) = \frac{\mathcal{Q}(A)}{2-\delta_c}$. Since $\mathbb{E}\left[\mathcal{Q}^{\prime}(A)\right] = \frac{A}{2-\delta_c}$, $\mathcal{Q}^{\prime}$ is a biased compressor. The contractive property is given by:
    \begin{align*}
        \mathbb{E} \left[ \|\mathcal{Q}^{\prime}(A) - A\|^2 \right] 
        &= \mathbb{E} \left[ \left\|\frac{\mathcal{Q}(A) - A}{2-\delta_c} - \frac{(1-\delta_c)A}{2-\delta_c}\right\|^2 \right] \\
        &= \frac{1}{(2-\delta_c)^2} \mathbb{E} \left[\|\mathcal{Q}(A)- A \|^2\right] \\
        &\quad - \frac{2}{2-\delta_c} \mathbb{E} \left[ \langle \mathcal{Q}(A)- A, (1-\delta_c)A \rangle \right] \\
        &\quad + \frac{(1-\delta_c)^2}{(2-\delta_c)^2}\|A\|^2 \\
        &\leq \frac{1-\delta_c}{(2-\delta_c)^2}\|A\|^2 \\
        &\quad - \frac{2}{2-\delta_c}\langle \mathbb{E}\left[\mathcal{Q}(A)\right]- A, (1-\delta_c)A \rangle \\
        &\quad + \frac{(1-\delta_c)^2}{(2-\delta_c)^2}\|A\|^2 \\
        &= \left(1 - \frac{1}{2-\delta_c}\right)\|A\|^2,
    \end{align*}
    where the last two lines use \eqref{unbias2bias}. This indicates that the biased compressor $\mathcal{Q}^{\prime}$ remains contractive with a contraction factor of $\delta_c^\prime= \frac{1}{2-\delta_c}$.
\end{proof}

\noindent\textbf{Smoothness of Various Objectives}
\begin{proposition}\cite{ghadimi2018approximationmethodsbilevelprogramming}
\label{prop:liptschiz of psi}
Under Assumption \ref{assump:smooth}, the hypergradient, uniquely defined as
\begin{align*}
   \nabla \psi(x) &= \left(\frac{1}{m}\sum_{i=1}^m \nabla_x f_i(x, y^*(x))\right) \\
   &\quad - \nabla_{xy} g(x, y^*(x)) \left[ \nabla_y^2 g(x, y^*(x)) \right]^{-1} \left(\frac{1}{m}\sum_{i=1}^m \nabla_y f_i(x, y^*(x))\right),
\end{align*}
and the hyper-objective $\varphi(x)$ is $L_{\varphi}$-Lipschitz continuous with respect to the gradient, where $L_{\varphi} = \mathcal{O}(\ell \kappa^3)$.
\end{proposition}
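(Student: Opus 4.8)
\textbf{Proof plan for Proposition \ref{prop:liptschiz of psi}.}

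The plan is to verify the implicit-function formula for $\nabla\psi(x)$ first, and then to establish the Lipschitz bound on this gradient by a term-by-term perturbation argument. For the formula itself, I would start from the stationarity condition $\nabla_y g(x,y^*(x)) = 0$ that characterizes the lower-level minimizer (valid since $g$ is $\mu$-strongly convex in $y$, so $y^*(x)$ is unique and the Hessian $\nabla_y^2 g$ is invertible). Differentiating this identity in $x$ via the implicit function theorem gives $\nabla_{xy} g(x,y^*(x)) + \nabla_y y^*(x)\,\nabla_y^2 g(x,y^*(x)) = 0$, hence $\nabla_y y^*(x) = -\nabla_{xy} g\,[\nabla_y^2 g]^{-1}$. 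Substituting this into the total derivative $\nabla\psi(x) = \tfrac1m\sum_i \nabla_x f_i(x,y^*(x)) + \nabla_y y^*(x)\cdot\tfrac1m\sum_i \nabla_y f_i(x,y^*(x))$ yields the displayed expression; uniqueness follows from uniqueness of $y^*(x)$.

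For the Lipschitz estimate, the key preliminary is that $y^*$ is itself Lipschitz: from $\nabla_y g(x_1,y^*(x_1)) = \nabla_y g(x_2,y^*(x_2)) = 0$, strong convexity and $L_g$-smoothness give $\|y^*(x_1) - y^*(x_2)\| \le (L_g/\mu)\|x_1 - x_2\| = \kappa\|x_1-x_2\|$. Then I would bound $\|\nabla\psi(x_1) - \nabla\psi(x_2)\|$ by splitting into the difference of the direct terms $\nabla_x f_i$ and the difference of the product terms $\nabla_{xy}g\,[\nabla_y^2 g]^{-1}\nabla_y f_i$. The direct term contributes $\mathcal{O}(L_f(1+\kappa))$ using $L_f$-smoothness of $f_i$ and the Lipschitzness of $y^*$. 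The product term is handled by the standard "telescoping" trick: write the difference of three products as a sum of three terms, in each of which only one factor changes, using the uniform bounds $\|\nabla_{xy}g\|\le L_g$, $\|[\nabla_y^2 g]^{-1}\|\le 1/\mu$, $\|\nabla_y f_i\|\le C_f$, together with the Lipschitz continuity of each factor in $(x,y)$ — here the $\rho_g$-Lipschitz Hessian of $g$ controls both $\nabla_{xy}g$ and $\nabla_y^2 g$ (and hence its inverse, via $\|A^{-1}-B^{-1}\| \le \|A^{-1}\|\|B^{-1}\|\|A-B\|$), and $L_f$-smoothness controls $\nabla_y f_i$. Composing each factor's perturbation with the $\kappa$-Lipschitz bound on $y^*(x)$ and collecting powers of $\kappa = l/\mu$, the dominant contribution comes from differentiating the inverse Hessian term, which carries two factors of $1/\mu$ and one of $y^*$'s $\kappa$, giving the claimed order $L_\varphi = \mathcal{O}(l\kappa^3)$.

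The main obstacle is bookkeeping rather than conceptual: keeping track of which constants ($C_f$, $L_f$, $L_g$, $\rho_g$, all absorbed into $l$) and which powers of $1/\mu$ multiply each of the several perturbation terms, so as to correctly identify $\kappa^3$ as the worst case and not over- or under-count. The bound $\|A^{-1}-B^{-1}\|\le \|A^{-1}\|\|B^{-1}\|\|A-B\|$ for the Hessian-inverse perturbation, combined with the composition through the $\kappa$-Lipschitz $y^*$, is the step that produces the highest power of $\kappa$, so care is needed there. Since this is a known result (cited to \citet{ghadimi2018approximationmethodsbilevelprogramming}), I would present the derivation compactly and refer to that work for the routine constant-chasing.
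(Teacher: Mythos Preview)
Your plan is correct and follows the standard implicit-function-plus-perturbation argument; the paper itself does not give a proof of this proposition but simply cites it from \citet{ghadimi2018approximationmethodsbilevelprogramming}, so there is no paper proof to compare against. Your sketch is exactly the derivation one finds in that reference, including the identification of the inverse-Hessian perturbation as the $\kappa^3$-producing term.
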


\begin{proposition}\cite{pmlr-v202-kwon23c,chen2023nearoptimalnonconvexstronglyconvexbileveloptimization}
    \label{prop:first-order}
    Under Assumption \ref{assump:smooth}, if $\lambda \geq 2L_f/\mu$, it holds that:
    \begin{itemize}
        \item[1)] $\|\nabla \psi_\lambda^\ast(x)-\nabla \psi(x)\| \leq \mathcal{O}(\frac{l\kappa^3}{\lambda})$ and $\|\psi^\ast_\lambda(x)- \psi(x)\| \leq \mathcal{O}(\frac{l\kappa^2}{\lambda})$
        \item[2)] $\psi_\lambda^\ast(x)$ has a gradient of $\mathcal{O}(\kappa^3)$.
        \item[3)] $h_i(x,y) := \left( f_i(x,y) + \lambda g_i(x,y) \right)$ is $\frac{\lambda \mu}{2}$ strongly convex in $y$.
    \end{itemize}
\end{proposition}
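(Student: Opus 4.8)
The target is Proposition~\ref{prop:first-order}, which asserts three facts about the Lagrangian proxy $\psi_\lambda^\ast(x) = \min_y f(x,y) + \lambda(g(x,y)-g^\ast(x))$ under the smoothness and strong-convexity assumptions, provided $\lambda \geq 2L_f/\mu$. My plan is to treat the three items separately, exploiting the closed-form expressions for $y^\ast(x)$, $y_\lambda^\ast(x)$, and the gradients already displayed in \eqref{reform_grad}.

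\textbf{Part 3 (strong convexity of $h_i$).} This is the easiest and I would dispatch it first, since it is also used implicitly in the other two parts. We have $h_i(x,y) = f_i(x,y) + \lambda g_i(x,y)$. Since $g_i$ is $\mu$-strongly convex in $y$ and $f_i$ has $L_f$-Lipschitz gradients (hence $\nabla_y^2 f_i \succeq -L_f I$), we get $\nabla_y^2 h_i \succeq (\lambda\mu - L_f) I$. The condition $\lambda \geq 2L_f/\mu$ gives $\lambda\mu - L_f \geq \lambda\mu/2$, so $h_i$ is $\tfrac{\lambda\mu}{2}$-strongly convex in $y$. The same bound shows $g/\mu$-type constants for the averaged $h = \frac1m\sum h_i$.

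\textbf{Parts 1 and 2 (proxy error and gradient bound).} For these I would follow the perturbation argument of \citet{pmlr-v202-kwon23c}. The key object is $y_\lambda^\ast(x)$, the minimizer of $h(x,\cdot)$; since $h(x,y) = f(x,y)+\lambda g(x,y)$ and $\lambda g(x,\cdot)$ dominates, $y_\lambda^\ast(x)$ is a $O(1/\lambda)$ perturbation of $y^\ast(x)$. Concretely, writing the optimality condition $\nabla_y f(x,y_\lambda^\ast) + \lambda \nabla_y g(x,y_\lambda^\ast) = 0$ and $\nabla_y g(x,y^\ast)=0$, subtracting and using strong convexity of $g$ plus Lipschitzness of $\nabla_y f$ yields $\|y_\lambda^\ast(x) - y^\ast(x)\| \leq \tfrac{C_f}{\lambda\mu} = O(\kappa/\lambda)$. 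Then I would plug this into the closed form
\begin{align*}
\nabla\psi_\lambda^\ast(x) &= \nabla_x f(x,y_\lambda^\ast(x)) + \lambda\big(\nabla_x g(x,y_\lambda^\ast(x)) - \nabla_x g(x,y^\ast(x))\big),
\end{align*}
Taylor-expand $\lambda\big(\nabla_x g(x,y_\lambda^\ast) - \nabla_x g(x,y^\ast)\big) = \lambda \nabla_{xy}g(x,y^\ast)(y_\lambda^\ast - y^\ast) + O(\lambda \rho_g \|y_\lambda^\ast-y^\ast\|^2)$, and compare against the true hypergradient from Proposition~\ref{prop:liptschiz of psi}, using the identity $\lambda(y_\lambda^\ast - y^\ast) \approx -[\nabla_y^2 g]^{-1}\nabla_y f$ (itself obtained by a first-order expansion of the two optimality conditions). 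Matching terms, the leading pieces cancel and the residual is $O(l\kappa^3/\lambda)$ for the gradient and $O(l\kappa^2/\lambda)$ for the function value (the latter by integrating, or by directly bounding $\psi_\lambda^\ast(x) - \psi(x) = f(x,y_\lambda^\ast) - f(x,y^\ast) + \lambda(g(x,y_\lambda^\ast)-g^\ast(x))$ and noting $g(x,y_\lambda^\ast)-g^\ast(x) = O(\|y_\lambda^\ast-y^\ast\|^2) = O(\kappa^2/\lambda^2)$ by strong convexity, so the $\lambda$-weighted term is $O(\kappa^2/\lambda)$). For Part 2, the $O(\kappa^3)$ gradient bound on $\psi_\lambda^\ast$ follows from Part 1 combined with the $L_\varphi = O(l\kappa^3)$ bound on $\nabla\psi$ from Proposition~\ref{prop:liptschiz of psi}, or directly from the closed form using $\|\lambda\nabla_{xy}g \cdot (y_\lambda^\ast-y^\ast)\| = O(l\cdot\kappa) $ plus the other $O(l)$ terms — actually one needs to track the $\kappa$ powers carefully through $\nabla_{xy}g = O(l)$, $\|y_\lambda^\ast - y^\ast\| = O(\kappa/\lambda)$, times $\lambda$ gives $O(l\kappa)$, which is dominated by the $O(l\kappa^3)$ Lipschitz term inherited from $\nabla\psi$.

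\textbf{Main obstacle.} The delicate point is the cancellation in Part 1: one must show that the first-order term $\lambda\nabla_{xy}g(x,y^\ast)(y_\lambda^\ast-y^\ast)$ reproduces the Hessian-inverse term $-\nabla_{xy}g[\nabla_y^2 g]^{-1}\nabla_y f$ of the true hypergradient \emph{up to} $O(1/\lambda)$, which requires carefully expanding $y_\lambda^\ast - y^\ast$ to first order and controlling the second-order Taylor remainders (involving $\rho_f,\rho_g$) against the right powers of $\kappa$ and $\lambda$. Keeping the $\kappa$-dependence sharp (to land exactly at $\kappa^3/\lambda$ rather than a looser power) is the bookkeeping-heavy part. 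Since this is precisely the content of Theorem 3.3 / Lemma 3.1 in \citet{pmlr-v202-kwon23c} and the companion result in \citet{chen2023nearoptimalnonconvexstronglyconvexbileveloptimization}, I would cite those for the detailed constants and only reproduce the perturbation-of-minimizer step and the term-matching sketch here.
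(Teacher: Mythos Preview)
Your proposal is correct and in fact goes beyond what the paper itself does: the paper treats Proposition~\ref{prop:first-order} as a cited result from \citet{pmlr-v202-kwon23c} and \citet{chen2023nearoptimalnonconvexstronglyconvexbileveloptimization} and gives no proof of its own, whereas you sketch the underlying perturbation-of-minimizer argument (Part~3 directly, Parts~1--2 via the $\|y_\lambda^\ast - y^\ast\| = O(\kappa/\lambda)$ bound and Taylor-matching against the true hypergradient) and then cite the same references for the sharp constants. Since your plan both recovers the paper's approach (cite the sources) and supplies the key ideas behind those sources, there is nothing to correct.
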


\noindent\textbf{Features of Decentralized Learning}
\begin{proposition}
\label{gradient_tracking}
Let $s_i^k$ be the gradient tracker for the $i$-th client, tracking the gradient $\nabla r (\bar{d})$. For any $k > 0$, the following holds:
\begin{align*}
    \bar{s}^k &= \frac{1}{n} \sum_{i=1}^m \nabla r (d_i^k), \\
    \bar{d}^{k+1} &= \bar{d}^{k} - \eta \frac{1}{n} \sum_{i=1}^m \nabla r (d_i^k).
\end{align*}
\end{proposition}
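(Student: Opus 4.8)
The plan is to prove both identities by a single induction on the inner-loop index $k$, exploiting the fact that the doubly-stochastic mixing matrix annihilates the all-ones vector. The decisive observation is that the mixing terms in Algorithm \ref{alg:ref_compress_beer} are applied to the \emph{reference points} $\hat{\mathbf{d}}^k$ and $\hat{\mathbf{s}}^k$ rather than to the raw iterates, yet this makes no difference to the averaging argument: by Assumption \ref{assmp_graph} the matrix $\mathbf{W}$ is doubly stochastic, so $\mathbf{1}^\top(\mathbf{W}-\mathbf{I}) = 0$, and the consensus term vanishes upon averaging \emph{regardless of which vector it multiplies}. This is precisely the mechanism already recorded in \eqref{compress_prop2}, and it guarantees that compression never perturbs the network mean.

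First I would rewrite the per-client updates in stacked matrix form. Writing $\nabla\mathbf{r}^k := [\nabla r(d_1^k),\dots,\nabla r(d_m^k)]^\top$, the tracker update becomes $\mathbf{s}^{k+1} = \mathbf{s}^k + \gamma(\mathbf{W}-\mathbf{I})\hat{\mathbf{s}}^k + \nabla\mathbf{r}^{k+1} - \nabla\mathbf{r}^k$ and the model update becomes $\mathbf{d}^{k+1} = \mathbf{d}^k + \gamma(\mathbf{W}-\mathbf{I})\hat{\mathbf{d}}^k - \eta\,\mathbf{s}^k$. Left-multiplying each by $\tfrac{1}{m}\mathbf{1}^\top$ and using $\mathbf{1}^\top(\mathbf{W}-\mathbf{I})=0$ collapses both consensus terms, yielding the clean scalar-averaged recursions $\bar{s}^{k+1} = \bar{s}^k + \tfrac{1}{m}\sum_{i=1}^m\big(\nabla r(d_i^{k+1})-\nabla r(d_i^k)\big)$ and $\bar{d}^{k+1} = \bar{d}^k - \eta\,\bar{s}^k$ (here $n$ and $m$ both denote the number of clients).

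Next I would run the induction for the tracker identity. The base case follows from the initialization in Algorithm \ref{alg:C2DFB2}, where $s_i^0$ is set equal to the local gradient $\nabla r(d_i^0)$, so that $\bar{s}^0 = \tfrac{1}{m}\sum_{i=1}^m\nabla r(d_i^0)$. For the inductive step, I would substitute the hypothesis $\bar{s}^k = \tfrac{1}{m}\sum_{i=1}^m\nabla r(d_i^k)$ into the averaged tracker recursion; the $-\nabla r(d_i^k)$ differences telescope against the hypothesis, leaving $\bar{s}^{k+1} = \tfrac{1}{m}\sum_{i=1}^m\nabla r(d_i^{k+1})$, which is exactly the claim at index $k+1$. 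With the tracker identity established, the second claim is immediate: plugging $\bar{s}^k = \tfrac{1}{m}\sum_{i=1}^m\nabla r(d_i^k)$ into $\bar{d}^{k+1} = \bar{d}^k - \eta\,\bar{s}^k$ gives $\bar{d}^{k+1} = \bar{d}^k - \eta\,\tfrac{1}{m}\sum_{i=1}^m\nabla r(d_i^k)$.

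This proposition is essentially a bookkeeping lemma, so I do not anticipate a genuinely hard step; the only point that requires care is verifying that the reference-point and compression operators leave the mean invariant. The temptation is to worry that replacing $d_j^k$ by $\hat{d}_j^k$ inside the mixing step introduces compression error into the average, but the computation shows it does not: the error lives entirely in the term killed by $\mathbf{1}^\top(\mathbf{W}-\mathbf{I})$, so the averaged dynamics are identical to the uncompressed gradient-tracking scheme. Stating this invariance explicitly — and tying it back to \eqref{err_compensate} and \eqref{compress_prop2} — is the one place where I would be deliberately careful rather than merely routine.
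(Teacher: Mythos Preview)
Your argument is correct and is exactly the standard gradient-tracking invariance proof: average the stacked updates, kill the mixing term via $\mathbf{1}^\top(\mathbf{W}-\mathbf{I})=0$, and induct from the initialization $s_i^0=\nabla r(d_i^0)$. The paper itself states this proposition without proof, treating it as a known fact about gradient tracking, so there is nothing to compare against; your write-up fills that gap appropriately and your extra care about the reference points $\hat{\mathbf{d}}^k,\hat{\mathbf{s}}^k$ not perturbing the mean is well placed and consistent with the paper's own remark in \eqref{compress_prop2}.
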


\begin{proposition}
\label{prop:spectral_gap_proposition}
Let $W$ be a doubly stochastic matrix with a spectral gap $\rho$ and let $\gamma \in (0, 1)$. Define the matrix $\tilde{W} = I + \gamma (W - I)$. Then $\tilde{W}$ retains a spectral gap of at least $\gamma \rho$, meaning that the second largest eigenvalue $\lambda_2(\tilde{W})$ satisfies:
\begin{align*}
1 - \lambda_2(\tilde{W}) \geq \gamma \rho.
\end{align*}
\end{proposition}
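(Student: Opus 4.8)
The plan is to exploit the fact that $\tilde{W} = (1-\gamma)I + \gamma W$ is an affine function of $W$, so the two matrices are simultaneously diagonalizable and the eigenvalues transform in a completely explicit way. First I would invoke Assumption \ref{assump:graph}, under which $W$ is symmetric and doubly stochastic; hence $W$ is orthogonally diagonalizable with real eigenvalues $\lambda_m \le \cdots \le \lambda_2 \le \lambda_1 = 1$ and an orthonormal eigenbasis $\{v_i\}$. Applying $\tilde{W}$ to each eigenvector gives $\tilde{W} v_i = (1-\gamma)v_i + \gamma\lambda_i v_i$, so the eigenvalues of $\tilde{W}$ are exactly
\begin{align*}
\tilde\lambda_i = (1-\gamma) + \gamma\lambda_i = 1 - \gamma(1-\lambda_i), \qquad i = 1,\dots,m.
\end{align*}
In particular $\tilde\lambda_1 = 1$ (consistent with $\tilde{W}\mathbf{1} = \mathbf{1}$, since $\tilde{W}$ is again doubly stochastic), and since $\lambda \mapsto 1 - \gamma(1-\lambda)$ is strictly increasing for $\gamma > 0$, the ordering of the spectrum is preserved.

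Next I would translate the spectral-gap hypothesis into a bound on the subdominant eigenvalues. By definition $\nu := \max\{|\lambda_2|,|\lambda_m|\} = 1-\rho$, so every subdominant eigenvalue obeys $|\lambda_i| \le \nu$ for $i \ge 2$, i.e. $-\nu \le \lambda_i \le \nu$. Substituting these two extremes into the transformation yields the two-sided bound
\begin{align*}
1 - \gamma(1+\nu) \;\le\; \tilde\lambda_i \;\le\; 1 - \gamma(1-\nu) = 1 - \gamma\rho, \qquad i \ge 2.
\end{align*}
The upper estimate already gives $\tilde\lambda_i \le 1 - \gamma\rho$; what remains is to control the magnitude on the negative side of the interval.

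The main, and only genuinely delicate, step is to verify that the lower end of this interval does not produce an eigenvalue of larger magnitude than $1-\gamma\rho$. Since $\gamma\rho < 1$ we have $1 - \gamma\rho \ge 0$, so I need $|1 - \gamma(1+\nu)| \le 1 - \gamma\rho$; when the lower end is negative this reduces to $\gamma(1+\nu) - 1 \le 1 - \gamma(1-\nu)$, and upon expanding, the $\gamma\nu$ terms cancel, leaving precisely $\gamma \le 1$, which holds by hypothesis. Hence $|\tilde\lambda_i| \le 1 - \gamma\rho$ for every $i \ge 2$, so the subdominant eigenvalue of $\tilde{W}$ in magnitude, $\lambda_2(\tilde{W})$, satisfies $1 - \lambda_2(\tilde{W}) \ge \gamma\rho$, which is the claim. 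I expect the entire argument to be short; the one point requiring care is not to conflate the largest eigenvalue with the largest-in-magnitude eigenvalue, as it is the latter that governs the consensus contraction and is what forces the use of the constraint $\gamma < 1$.
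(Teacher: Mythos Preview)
Your argument is correct and rests on the same core idea as the paper's proof: both observe that $\tilde{W} = (1-\gamma)I + \gamma W$ shares the eigenvectors of $W$, so the eigenvalues transform as $\tilde\lambda_i = 1 - \gamma(1 - \lambda_i)$, and then read off the gap. The paper stops after computing $1 - \lambda_2(\tilde{W}) = \gamma(1 - \lambda_2(W)) \geq \gamma\rho$ for the second-largest eigenvalue in the usual ordering. You go one step further and establish $|\tilde\lambda_i| \le 1 - \gamma\rho$ for \emph{every} subdominant eigenvalue, using the hypothesis $\gamma \le 1$ to control the possibly negative end of the spectrum. That extra care is warranted: the paper's own definition of spectral gap is in terms of $\max\{|\lambda_2|,|\lambda_m|\}$, and the way the proposition is subsequently applied (to contract $\|\tilde W \mathbf{x} - \mathbf{1}\bar x\|$) requires exactly the magnitude bound you prove, so your version is the one that actually delivers what is needed downstream.
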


\begin{proof}
Since $\mathbf{W}$ is doubly stochastic, its largest eigenvalue is $\lambda_1(\mathbf{W}) = 1$ with the corresponding eigenvector $\mathbf{1}$. The eigenvalues of $\mathbf{W}$ are thus $1 = \lambda_1(\mathbf{W}) > \lambda_2(\mathbf{W}) \geq \cdots \geq \lambda_n(\mathbf{W})$.

When transforming $W$ into $\tilde{\mathbf{W}} = I + \gamma (\mathbf{W} - I)$, the eigenvalues of $\tilde{\mathbf{W}}$ are transformed according to:
\begin{align*}
\lambda_i(\tilde{\mathbf{W}}) = 1 - \gamma + \gamma \lambda_i(\mathbf{W}), \quad \forall i.
\end{align*}
This transformation shifts $\lambda_1(\mathbf{W}) = 1$ to $\lambda_1(\tilde{\mathbf{W}}) = 1$. For $i \geq 2$, we have:
\begin{align*}
\lambda_i(\tilde{\mathbf{W}}) &= 1 - \gamma + \gamma \lambda_i(\mathbf{W}) \\
&= 1 - \gamma (1 - \lambda_i(\mathbf{W})).
\end{align*}
Hence, the spectral gap of $\tilde{\mathbf{W}}$, which is the difference $1 - \lambda_2(\tilde{\mathbf{W}})$, becomes:
\begin{align*}
1 - \lambda_2(\tilde{\mathbf{W}}) &= \gamma (1 - \lambda_2(\mathbf{W})) \geq \gamma \rho.
\end{align*}
Therefore, the spectral gap of $\tilde{\mathbf{W}}$ is at least $\gamma \rho$, as required.
\end{proof}

\subsection{Lemmas for the Inner Loop}

For the inner loop, we design the following Lyapunov functions:
\begin{equation}
    \begin{aligned}
         \text{Model Optimization error: } \Omega^k_0 & := \|\bar{d}^k - d^\ast \|^2, \\
        \text{Model compression error: } \Omega^k_1 & := \|\mathbf{d}^k - \hat{\mathbf{d}}^k \|^2, \\
        \text{Model consensus error: } \Omega^k_2 & := \|\mathbf{d}^k - \textbf{1}\bar{d}^k\|^2, \\
        \text{Model compression error: } \Omega^k_3 & := \|\mathbf{s}_d^k - \hat{\mathbf{s}}_d^k \|^2, \\
        \text{Tracker consensus error: } \Omega^k_4 & := \|\mathbf{s}^k - \textbf{1}\bar{s}^k \|^2, 
    \end{aligned}
    \label{valuefunc_innerloop}
\end{equation}

The inner loop can be considered as a single-machine problem, specifically minimizing $\min_{d \in \mathbb{R}^{d_y}} \frac{1}{m}\sum_{i=1}^m r_i(d)$. . Based on the previous study by \citet{9789732}, the value function defined in equation \ref{valuefunc_innerloop} can be recursively bounded using the following lemma.
\begin{lemma}
\label{lem:inner recur}
 Under assumptions \ref{assump:smooth}-\ref{assump:graph}, let $\Omega^k_0, \Omega^k_1,  \Omega^k_2, \Omega^k_3, \Omega^k_4$ defined in Eq.\ref{valuefunc_outerloop}, we have the following recursions,
    \begin{align*}
    \Omega^{k+1}_0 & \leq (1-\eta\mu)\Omega^{k}_0 + 
    \frac{\eta L^2}{\mu m} \Omega^{k}_2,\\
    \Omega^{k+1}_1 & \leq (1 - \frac{\delta_c}{2} + \frac{6 \rho^\prime \gamma^2}{\delta_c})\Omega^{k}_1 + 4\rho^\prime L^2 \eta^2 m \Omega^{k}_0 + (6L^2\eta^2 + 2\gamma^2\rho^\prime)\Omega^{k}_2 + 2\eta^2 \Omega^{k}_4,\\
    \Omega^{k+1}_2 & \leq (1-\frac{\gamma\rho}{2})  \Omega^{k}_2 + \frac{2\eta_{out}^2}{\gamma\rho}\Omega^k_4 + \frac{2\gamma\rho^\prime}{\rho} \Omega^k_1, \\
    \Omega^{k+1}_3 & \leq (1 - \frac{\delta_c}{2} + \frac{6 \rho^\prime \gamma^2}{\delta_c})\Omega^{k}_3  +  12\rho^\prime L^2 \eta^2 m \Omega^{k}_0 + 6\delta\rho^\prime L^2 \gamma^2 \Omega^{k}_1 +  (12L^4\eta^2 + 2\gamma^2\rho^\prime L^2)\Omega^{k}_2 + (2\gamma\rho^\prime + 6 L^2\eta^2)\Omega^{k}_4,\\
    \Omega^{k+1}_4 & \leq (1-\frac{\gamma\rho}{2}+\frac{6\eta^2L^2\rho^\prime}{\gamma\rho}) \Omega^{k}_4 + \frac{12m\eta^4L^2}{\gamma\rho} \Omega^{k}_0 + \frac{6L^2\gamma\rho^\prime}{\rho} \Omega^{k}_1 + (\frac{12m\eta^4L^2}{\gamma\rho} + \frac{6L^2\gamma\rho^\prime}{\rho}) \Omega^{k}_2 + \frac{2\gamma\rho^\prime}{\rho} \Omega^{k}_3,\\
    \end{align*}
where $\rho^\prime = \| W - I \|^2 = \sigma_{\text{max}}(W - I)^2$.
\end{lemma}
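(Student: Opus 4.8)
\textbf{Proof proposal for Lemma~\ref{lem:inner recur}.}

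The plan is to treat the inner loop as a single decentralized optimization problem $\min_d \frac1m\sum_i r_i(d)$ solved by gradient tracking with contractive compression against a running reference point, and to derive the five coupled recursions one at a time, in the order $\Omega_0 \to \Omega_2 \to \Omega_1 \to \Omega_4 \to \Omega_3$ (each recursion for a later quantity will reuse the matrix identities and elementary inequalities established for the earlier ones). The two structural facts I would establish first are: (i) the averaging identities from Proposition~\ref{gradient_tracking}, namely $\bar s^k = \frac1m\sum_i \nabla r_i(d_i^k)$ and $\bar d^{k+1} = \bar d^k - \eta \bar s^k$, which follow because $\mathbf 1^\top(\mathbf W - \mathbf I) = 0$ kills all the mixing and all the compressed-residual terms in the averaged updates (this is exactly the content of \eqref{compress_prop2}); and (ii) the contracted-mixing spectral gap: by Proposition~\ref{prop:spectral_gap_proposition}, the effective mixing matrix $\tilde{\mathbf W} = \mathbf I + \gamma(\mathbf W - \mathbf I)$ contracts the orthogonal-to-$\mathbf 1$ subspace at rate $1 - \gamma\rho$, so $\|(\tilde{\mathbf W} - \frac1m\mathbf 1\mathbf 1^\top)\mathbf a\|^2 \le (1-\gamma\rho)^2\|\mathbf a - \mathbf 1\bar a\|^2 \le (1-\gamma\rho)\|\mathbf a - \mathbf 1\bar a\|^2$.

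For $\Omega_0^{k+1} = \|\bar d^{k+1} - d^\ast\|^2$ I would use $\bar d^{k+1} - d^\ast = \bar d^k - d^\ast - \eta\bar s^k$, write $\bar s^k = \frac1m\sum_i \nabla r_i(d_i^k)$, and split $\frac1m\sum_i\nabla r_i(d_i^k) = \nabla r(\bar d^k) + \frac1m\sum_i(\nabla r_i(d_i^k) - \nabla r_i(\bar d^k))$; then $\mu$-strong convexity and $L$-smoothness of $r$ give the descent term $(1-\eta\mu)\Omega_0^k$ while the deviation term is controlled by $L$-Lipschitzness of $\nabla r_i$ as $\frac{\eta L^2}{\mu m}\Omega_2^k$ after a Young's inequality with the right weight $\eta\mu$. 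For $\Omega_2^{k+1} = \|\mathbf d^{k+1} - \mathbf 1\bar d^{k+1}\|^2$, the key is that the actual update uses $\hat{\mathbf d}^k$ not $\mathbf d^k$ inside the mixing: write $\mathbf d^{k+1} - \mathbf 1\bar d^{k+1} = (\mathbf I - \frac1m\mathbf 1\mathbf 1^\top)(\mathbf d^k + \gamma(\mathbf W - \mathbf I)\hat{\mathbf d}^k - \eta \mathbf s^k)$, decompose $\hat{\mathbf d}^k = \mathbf d^k + (\hat{\mathbf d}^k - \mathbf d^k)$ so the mixing acts on $\mathbf d^k$ (giving the contraction $1-\gamma\rho$ via fact (ii)) plus an error term $\gamma(\mathbf W - \mathbf I)(\hat{\mathbf d}^k - \mathbf d^k)$ bounded by $\gamma^2\rho'\Omega_1^k$, and a tracker-drift term; balancing the cross terms with Young's inequality at scale $\gamma\rho$ produces $(1-\frac{\gamma\rho}{2})\Omega_2^k + \frac{2\eta^2}{\gamma\rho}\Omega_4^k + \frac{2\gamma\rho'}{\rho}\Omega_1^k$ (here I note the stated coefficient $\frac{2\eta_{out}^2}{\gamma\rho}$ should read $\frac{2\eta^2}{\gamma\rho}$ in the inner loop).

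The remaining two recursions for the compression errors $\Omega_1$ (on $\mathbf d$) and $\Omega_3$ (on $\mathbf s$) are where the real work is, and I expect $\Omega_3$ to be the main obstacle because the tracker residual $\mathbf s^{k+1} - \hat{\mathbf s}^k$ contains the gradient difference $\nabla r^{k+1} - \nabla r^k$, which by $L$-smoothness must be traced back through $\mathbf d^{k+1} - \mathbf d^k$ — and that difference unfolds, via the $\mathbf d$-update, into contributions from all of $\Omega_0,\Omega_1,\Omega_2,\Omega_4$, producing the $L^4\eta^2$ and $\eta^4 L^2$ cross terms. The template for both is the contractive-compressor identity $\mathbf d^{k+1} - \hat{\mathbf d}^{k+1} = (\mathbf d^{k+1} - \hat{\mathbf d}^k) - \mathcal Q(\mathbf d^{k+1} - \hat{\mathbf d}^k)$, so $\mathbb E\|\mathbf d^{k+1} - \hat{\mathbf d}^{k+1}\|^2 \le (1-\delta_c)\|\mathbf d^{k+1} - \hat{\mathbf d}^k\|^2$; then I expand $\mathbf d^{k+1} - \hat{\mathbf d}^k = (\mathbf d^k - \hat{\mathbf d}^k) + (\mathbf d^{k+1} - \mathbf d^k)$, use Young's inequality $\|a+b\|^2 \le (1+\frac{\delta_c}{2(1-\delta_c)})\|a\|^2 + (1+\frac{2(1-\delta_c)}{\delta_c})\|b\|^2$ to get the contraction factor $(1-\frac{\delta_c}{2})\Omega_1^k$ plus a $\frac{C}{\delta_c}\|\mathbf d^{k+1} - \mathbf d^k\|^2$ term, and finally bound $\|\mathbf d^{k+1} - \mathbf d^k\|^2 = \|\gamma(\mathbf W - \mathbf I)\hat{\mathbf d}^k - \eta\mathbf s^k\|^2$ by splitting $\mathbf s^k = \mathbf 1\bar s^k + (\mathbf s^k - \mathbf 1\bar s^k)$, bounding $\|\mathbf 1\bar s^k\|^2 = m\|\bar s^k\|^2 \le 2mL^2\Omega_0^k$ (by $\bar s^k = \frac1m\sum\nabla r_i(d_i^k)$, smoothness, and $\nabla r(d^\ast)=0$) plus $2mL^2\cdot\frac1m\Omega_2^k$ for the deviation, and $\|\gamma(\mathbf W-\mathbf I)\hat{\mathbf d}^k\|^2 \le 2\gamma^2\rho'(\Omega_2^k + \Omega_1^k)$ after re-centering. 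Collecting terms with the bookkeeping constants gives the stated coefficients; I would not grind through the exact constant-matching since the lemma is quoted from \citet{9789732} and only the structure (contraction rates $1-\delta_c/2$ and $1-\gamma\rho/2$, and which $\Omega_j$'s feed which) matters for the downstream Lyapunov argument. Throughout, all bounds on compression-dependent terms are in expectation (conditioned on the filtration up to step $k$), and I would state that convention once at the start.
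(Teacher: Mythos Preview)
Your proposal is correct and in fact goes further than the paper does: the paper does not prove Lemma~\ref{lem:inner recur} at all but simply states the recursions and attributes them to \citet{9789732}, whereas you supply the standard proof outline (averaging identities from $\mathbf 1^\top(\mathbf W-\mathbf I)=0$, mixing contraction via Proposition~\ref{prop:spectral_gap_proposition}, the contractive-compressor identity combined with Young's inequality at scale $\delta_c$, and the cascade from $\nabla r^{k+1}-\nabla r^k$ through $\mathbf d^{k+1}-\mathbf d^k$). Your catch of the $\eta_{out}$ versus $\eta$ typo in the $\Omega_2$ bound is also right.
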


In contrast to centralized fully first-order bilevel methods that optimize $y$ and $z$ using an averaged $x$,
\begin{align*}
    y_\lambda^\ast(\bar{x}^t) = \arg \min_{y \in \mathbb{R}^{d_y}} \{\frac{1}{m}\sum_{i=1}^m f_i(\bar{x}^t, y) + \lambda g_i(\bar{x}^t, y)\}
\end{align*}

in our case, only local information of $x$ is available. Therefore, the optimal point for our inner loop is given by:
\begin{align*}
    \tilde{y}_\lambda^\ast = \arg \min_{y \in \mathbb{R}^{d_y}} \{\frac{1}{m}\sum_{i=1}^m f_i(x_i^t, y) + \lambda g_i(x_i^t, y)\}
\end{align*}

This results in a deviation of the optimal point with respect to the outer consensus error. To address this, we examine the quantity $|\tilde{y}\lambda^\ast-y\lambda^\ast(\bar{x}^t)|^2$.
\begin{lemma}
\label{lem:inner-loop-opt-deviation}
Let $y_\lambda^\ast(\bar{x}^t)$, $\tilde{y}_\lambda^\ast$, $y^\ast(\bar{x}^t)$, $\tilde{y}^\ast$ defined respectively as
\begin{align*}
    y_\lambda^\ast(\bar{x}^t) &= \arg \min_{y \in \mathbb{R}^{d_y}} \{\frac{1}{m}\sum_{i=1}^m f_i(\bar{x}^t, y) + \lambda g_i(\bar{x}^t, y)\} \\
    \tilde{y}_\lambda^\ast &= \arg \min_{y \in \mathbb{R}^{d_y}} \{\frac{1}{m}\sum_{i=1}^m f_i(x_i^t, y) + \lambda g_i(x_i^t, y)\} \\
    y^\ast(\bar{x}^t) &= \arg \min_{y \in \mathbb{R}^{d_y}} \frac{1}{m}\sum_{i=1}^m g_i(\bar{x}, y) \\
    \tilde{y}^\ast &= \arg \min_{y \in \mathbb{R}^{d_y}} \frac{1}{m}\sum_{i=1}^m g_i(x_i^t, y)
\end{align*}
we can bound the difference between two optima as follows:
\begin{align*}
     \|\tilde{y}_\lambda^\ast-y_\lambda^\ast(\bar{x}^t)\|^2 &\leq \frac{16\kappa^2}{m}\|x-\mathbf{1}\bar{x}^t\|^2
     \\
      \|\tilde{y}^\ast-y^\ast(\bar{x}^t)\|^2 &\leq \frac{\kappa^2}{m}\|x-\mathbf{1}\bar{x}^t\|^2
\end{align*}
\end{lemma}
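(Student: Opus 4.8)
\textbf{Proof proposal for Lemma \ref{lem:inner-loop-opt-deviation}.}

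The plan is to exploit the strong convexity of the relevant lower-level objectives together with the Lipschitz continuity of their gradients in $x$, and then apply a standard perturbation bound for minimizers of strongly convex functions. Concretely, let $G_\lambda(x,y) = \frac{1}{m}\sum_{i=1}^m \bigl(f_i(x,y) + \lambda g_i(x,y)\bigr)$; by Proposition \ref{prop:first-order}(3) (or Lemma \ref{first-order}) each $h_i = f_i + \lambda g_i$ is $\tfrac{\lambda\mu}{2}$-strongly convex in $y$, so $G_\lambda(x,\cdot)$ is $\tfrac{\lambda\mu}{2}$-strongly convex. For a $\mu'$-strongly convex function, if $y_1^\ast$ minimizes $G_\lambda(x_1,\cdot)$ and $y_2^\ast$ minimizes $G_\lambda(x_2,\cdot)$, the first-order optimality conditions $\nabla_y G_\lambda(x_1,y_1^\ast)=0$, $\nabla_y G_\lambda(x_2,y_2^\ast)=0$ combined with strong convexity give $\|y_1^\ast - y_2^\ast\| \le \tfrac{1}{\mu'}\|\nabla_y G_\lambda(x_1,y_2^\ast) - \nabla_y G_\lambda(x_2,y_2^\ast)\|$.

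First I would instantiate this with $x_1 = x_i^t$ (for each coordinate block, so that the "mixed" minimizer $\tilde y_\lambda^\ast$ appears as the minimizer of the \emph{sum} $\frac{1}{m}\sum_i h_i(x_i^t,\cdot)$) and $x_2 = \bar x^t$. Here there is a small subtlety: $\tilde y_\lambda^\ast$ is the minimizer of $\frac{1}{m}\sum_i h_i(x_i^t, y)$, which is \emph{not} of the form $G_\lambda(x^\bullet,\cdot)$ for a single $x^\bullet$. So I would instead write the optimality condition directly: $\frac{1}{m}\sum_i \nabla_y h_i(x_i^t, \tilde y_\lambda^\ast) = 0$ and $\frac{1}{m}\sum_i \nabla_y h_i(\bar x^t, y_\lambda^\ast(\bar x^t)) = 0$. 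Subtracting and using $\tfrac{\lambda\mu}{2}$-strong convexity of $\frac{1}{m}\sum_i h_i(x_i^t,\cdot)$ evaluated at the two points $\tilde y_\lambda^\ast$ and $y_\lambda^\ast(\bar x^t)$:
\begin{align*}
\tfrac{\lambda\mu}{2}\|\tilde y_\lambda^\ast - y_\lambda^\ast(\bar x^t)\|
&\le \Bigl\|\tfrac{1}{m}\sum_i \nabla_y h_i(x_i^t, y_\lambda^\ast(\bar x^t)) - \tfrac{1}{m}\sum_i \nabla_y h_i(x_i^t, \tilde y_\lambda^\ast)\Bigr\| \\
&= \Bigl\|\tfrac{1}{m}\sum_i \nabla_y h_i(x_i^t, y_\lambda^\ast(\bar x^t)) - \tfrac{1}{m}\sum_i \nabla_y h_i(\bar x^t, y_\lambda^\ast(\bar x^t))\Bigr\|.
\end{align*}
Then I would bound the last term by Jensen / triangle inequality by $\tfrac{1}{m}\sum_i \|\nabla_y h_i(x_i^t,y_\lambda^\ast(\bar x^t)) - \nabla_y h_i(\bar x^t, y_\lambda^\ast(\bar x^t))\|$, and use the fact that $\nabla_y h_i = \nabla_y f_i + \lambda \nabla_y g_i$ has gradient Lipschitz constant at most $L_f + \lambda L_g \le (1+\lambda) l$ in $x$; for $\lambda$ large (the regime $\lambda \ge 2L_f/\mu$) this is $\le 2\lambda l$ in $x$. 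This yields $\tfrac{\lambda\mu}{2}\|\tilde y_\lambda^\ast - y_\lambda^\ast(\bar x^t)\| \le 2\lambda l \cdot \tfrac{1}{m}\sum_i \|x_i^t - \bar x^t\|$, hence $\|\tilde y_\lambda^\ast - y_\lambda^\ast(\bar x^t)\| \le \tfrac{4l}{\mu}\cdot\tfrac{1}{m}\sum_i\|x_i^t-\bar x^t\| = 4\kappa \cdot \tfrac{1}{m}\sum_i\|x_i^t-\bar x^t\|$. Squaring and applying Cauchy–Schwarz ($(\tfrac1m\sum_i a_i)^2 \le \tfrac1m\sum_i a_i^2$) converts $\bigl(\tfrac1m\sum_i\|x_i^t-\bar x^t\|\bigr)^2 \le \tfrac1m\|\mathbf{x}^t - \mathbf{1}\bar x^t\|^2$, giving $\|\tilde y_\lambda^\ast - y_\lambda^\ast(\bar x^t)\|^2 \le \tfrac{16\kappa^2}{m}\|\mathbf{x}^t - \mathbf{1}\bar x^t\|^2$, exactly the claimed bound. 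The second inequality is the same argument applied to $g$ alone: $\frac{1}{m}\sum_i g_i(\cdot,\cdot)$ is $\mu$-strongly convex in $y$ and each $\nabla_y g_i$ is $L_g \le l$-Lipschitz in $x$, so the factor becomes $l/\mu = \kappa$ rather than $4\kappa$, producing $\tfrac{\kappa^2}{m}$.

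The main obstacle, as noted above, is handling the fact that $\tilde y_\lambda^\ast$ minimizes an average of $h_i$ evaluated at \emph{different} points $x_i^t$, so one cannot directly cite a single-function minimizer-perturbation lemma; one must work with the aggregated first-order optimality condition and be careful that the strong-convexity constant of the average ($\tfrac{\lambda\mu}{2}$, since each summand is $\tfrac{\lambda\mu}{2}$-strongly convex) is used correctly. A secondary bookkeeping point is tracking the constant $2\lambda l$ versus $(L_f+\lambda L_g)$ for the $x$-Lipschitzness of $\nabla_y h_i$; under the standing assumption $\lambda \ge 2L_f/\mu$ and $l \ge L_g$, one has $L_f + \lambda L_g \le \tfrac{\lambda\mu}{2} + \lambda l \le 2\lambda l$, which is the bound that makes the final constant come out as $16\kappa^2$. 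Everything else is routine application of strong convexity, the triangle inequality, and Cauchy–Schwarz.
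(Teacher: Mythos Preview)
Your proposal is correct and follows essentially the same approach as the paper: both use the first-order optimality conditions at the two minimizers together with the $\tfrac{\lambda\mu}{2}$-strong convexity of $\tfrac{1}{m}\sum_i h_i(\cdot,y)$ in $y$ and the $\mathcal{O}(\lambda l)$-Lipschitzness of $\nabla_y h_i$ in $x$, then pass to the consensus error via Jensen/Cauchy--Schwarz. The only cosmetic difference is that the paper pivots on the quantity $\bigl\|\tfrac{1}{m}\sum_i \nabla h_i(\bar x^t,\tilde y_\lambda^\ast)\bigr\|$ (strong convexity at $\bar x^t$, Lipschitz step evaluated at $\tilde y_\lambda^\ast$), whereas you pivot on $\bigl\|\tfrac{1}{m}\sum_i \nabla h_i(x_i^t, y_\lambda^\ast(\bar x^t))\bigr\|$; the two are mirror images and yield the same constants.
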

\begin{proof}
    The defintion of optimal point gives us,
\begin{align*}
    \|\frac{1}{m}\sum_{i=1}^m \nabla h_i (\bar{x}^t , y_\lambda^\ast(\bar{x}^t))\|^2 = 0 , \quad
    \|\frac{1}{m}\sum_{i=1}^m \nabla h_i (x_i , \tilde{y}_\lambda^\ast)\|^2 = 0
\end{align*}
This boils to,
\begin{align*}
    \|\frac{1}{m}\sum_{i=1}^m \nabla h_i (\bar{x}^t , \tilde{y}_\lambda^\ast)\|^2 &=  \|\frac{1}{m}\sum_{i=1}^m \nabla h_i (\bar{x}^t , \tilde{y}_\lambda^\ast) - \nabla h_i (\bar{x}^t , y_\lambda^\ast(\bar{x}^t))\|^2 \geq \frac{\lambda^2\mu^2}{4}\|\tilde{y}_\lambda^\ast - y_\lambda^\ast(\bar{x}^t)\|^2 \\
    \|\frac{1}{m}\sum_{i=1}^m \nabla h_i (\bar{x}^t , \tilde{y}_\lambda^\ast)\|^2 &= \|\frac{1}{m}\sum_{i=1}^m (\nabla h_i (\bar{x}^t , \tilde{y}_\lambda^\ast) - \nabla h_i (x_i , \tilde{y}_\lambda^\ast)) \|^2 \leq \frac{4\lambda^2L^2}{m}\|\mathbf{x} - \mathbf{1}\bar{x}\|^2
\end{align*}
A similar conclusion could be reached on $z$ and this completes the proof.
\end{proof}

\subsection{Lemmas for the Outer Loop}

For the outer loop, we define the following Lyapunov functions:
\begin{equation}
    \begin{aligned}
    & \text{Model consensus error: } \Omega^t_1  := \|\mathbf{x}^t - \textbf{1}\bar{x}^t\|^2, \\
    & \text{Tracker consensus error: } \Omega^t_2  := \|\mathbf{s}_x^t - \textbf{1}\bar{s}_x^t \|^2, \\
    & \text{Value function: } \Omega^t \triangleq \psi(\bar{x}^t) + \frac{1}{m}\Omega^t_1 + \frac{\eta_{out}}{m}\Omega^t_2,
    \end{aligned}
    \label{valuefunc_outerloop}
\end{equation}

In the subsequent sections, we will recursively bound these errors. Before doing so, it is important to address the imprecision in the optimal value between consecutive iterations.

\begin{lemma}
\label{lem:err_2round_innerloop_optimum}
    At the $t$-th iteration, given the optimal value of the inner loop as 
    \begin{align*}
        (\tilde{y}_\lambda^\ast)^t &:= \arg\min_y \left\{\frac{1}{m}\sum_{i=1}^m h_i (x_i^t, y)\right\}, \\ 
        (\tilde{y}^\ast)^t &:= \arg\min_y \left\{\frac{1}{m}\sum_{i=1}^m g_i (x_i^t, y)\right\},
    \end{align*}
    we can bound the difference between two consecutive optima as follows:
    \begin{equation}
        \begin{aligned}
        \|(\tilde{y}_\lambda^\ast)^{t+1} - (\tilde{y}_\lambda^\ast)^{t} \|^2 &\leq \frac{16\kappa^2}{m}\|\mathbf{x}^{t+1} - \mathbf{x}^{t}\|^2, \\
        \| (\tilde{y}^\ast)^{t+1} - (\tilde{y}^\ast)^{t} \|^2 &\leq \frac{\kappa^2}{m} \|\mathbf{x}^{t+1} - \mathbf{x}^{t}\|^2.
        \end{aligned}
    \label{eq:err_opt}
    \end{equation}
\end{lemma}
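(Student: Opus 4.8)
The plan is to prove Lemma~\ref{lem:err_2round_innerloop_optimum} by essentially reusing the argument of Lemma~\ref{lem:inner-loop-opt-deviation}, but now comparing the inner-loop optima at two consecutive outer parameters $\mathbf{x}^{t+1}$ and $\mathbf{x}^{t}$ rather than comparing $\mathbf{x}^t$ against its own average $\mathbf{1}\bar{x}^t$. First I would write down the first-order optimality conditions for both optima: since $(\tilde{y}_\lambda^\ast)^{t}$ minimizes $\frac{1}{m}\sum_i h_i(x_i^t,y)$, we have $\frac{1}{m}\sum_i \nabla_y h_i(x_i^t,(\tilde{y}_\lambda^\ast)^t)=0$, and likewise $\frac{1}{m}\sum_i \nabla_y h_i(x_i^{t+1},(\tilde{y}_\lambda^\ast)^{t+1})=0$.

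The key estimate chains two facts. On one hand, by Proposition~\ref{prop:first-order}(3), $\frac{1}{m}\sum_i h_i(x,y)$ is $\frac{\lambda\mu}{2}$-strongly convex in $y$, so evaluating the gradient of the averaged objective at $x^{t+1}$ at the point $(\tilde{y}_\lambda^\ast)^t$ gives
\begin{align*}
\left\|\frac{1}{m}\sum_{i=1}^m \nabla_y h_i(x_i^{t+1},(\tilde{y}_\lambda^\ast)^t)\right\|^2
&= \left\|\frac{1}{m}\sum_{i=1}^m \big(\nabla_y h_i(x_i^{t+1},(\tilde{y}_\lambda^\ast)^t) - \nabla_y h_i(x_i^{t+1},(\tilde{y}_\lambda^\ast)^{t+1})\big)\right\|^2 \\
&\geq \frac{\lambda^2\mu^2}{4}\|(\tilde{y}_\lambda^\ast)^{t+1}-(\tilde{y}_\lambda^\ast)^t\|^2.
\end{align*}
On the other hand, using the optimality condition at $x^t$ and the $L$-Lipschitz (here $\lambda L_g$-Lipschitz, since $h_i=f_i+\lambda g_i$ and one absorbs constants into $l$) smoothness of $\nabla_y h_i$ in $x$, together with Jensen/Cauchy--Schwarz,
\begin{align*}
\left\|\frac{1}{m}\sum_{i=1}^m \nabla_y h_i(x_i^{t+1},(\tilde{y}_\lambda^\ast)^t)\right\|^2
&= \left\|\frac{1}{m}\sum_{i=1}^m \big(\nabla_y h_i(x_i^{t+1},(\tilde{y}_\lambda^\ast)^t) - \nabla_y h_i(x_i^{t},(\tilde{y}_\lambda^\ast)^{t})\big)\right\|^2 \\
&\leq \frac{1}{m}\sum_{i=1}^m (\lambda L_g)^2\|x_i^{t+1}-x_i^t\|^2 = \frac{(\lambda L_g)^2}{m}\|\mathbf{x}^{t+1}-\mathbf{x}^t\|^2.
\end{align*}
Combining the two bounds yields $\|(\tilde{y}_\lambda^\ast)^{t+1}-(\tilde{y}_\lambda^\ast)^t\|^2 \leq \frac{4L_g^2/\mu^2}{m}\|\mathbf{x}^{t+1}-\mathbf{x}^t\|^2$, and since $\kappa = l/\mu \geq L_g/\mu$ the factor $4L_g^2/\mu^2$ is at most $4\kappa^2$; a slightly looser grouping of constants (as in Lemma~\ref{lem:inner-loop-opt-deviation}) gives the stated $16\kappa^2/m$. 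The bound for $(\tilde{y}^\ast)^t$, whose defining objective $\frac{1}{m}\sum_i g_i$ is $\mu$-strongly convex with $L_g$-Lipschitz gradients, is identical but without the $\lambda$ factors and with the strong-convexity constant $\mu$ instead of $\lambda\mu/2$, producing the sharper constant $\kappa^2/m$.

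I do not expect any serious obstacle here; the statement is a direct transcription of Lemma~\ref{lem:inner-loop-opt-deviation} with $(\bar{x}^t, x_i^t)$ replaced by $(x_i^{t+1}, x_i^t)$, and the only mild subtlety is bookkeeping the constants so that both the $\kappa^2$ and $16\kappa^2$ versions come out with the claimed coefficients. One should just be careful that the smoothness of $\nabla_y h_i$ in the $x$ argument is the relevant Lipschitz property (following from Assumption~\ref{assump:smooth} on $\nabla f_i$ and $\nabla g_i$), and that the averaging over the $m$ coordinates is handled via $\|\frac{1}{m}\sum_i a_i\|^2 \leq \frac{1}{m}\sum_i\|a_i\|^2$, which is where the $1/m$ normalization on the right-hand side originates.
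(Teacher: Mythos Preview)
Your proposal is correct and follows essentially the same argument as the paper: write down the two first-order optimality conditions, evaluate the averaged gradient at a mixed point, bound it above via Lipschitz continuity in $x$ and below via strong convexity in $y$ (from Proposition~\ref{prop:first-order}(3)), and combine. The only cosmetic difference is that the paper pivots at $(x_i^t,(\tilde{y}_\lambda^\ast)^{t+1})$ while you pivot at $(x_i^{t+1},(\tilde{y}_\lambda^\ast)^{t})$; both yield the same bound, and your remark about the looser constant $16\kappa^2$ (coming from taking $2\lambda L_g$ as the Lipschitz constant of $\nabla h_i$) matches the paper's bookkeeping.
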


\begin{proof}
   First, we bound $|(\tilde{y}\lambda^\ast)^{t+1} - (\tilde{y}\lambda^\ast)^t |$. The definition of the optimal point gives us:
    \begin{align*}
        \left\|\frac{1}{m}\sum_{i=1}^m \nabla h_i (x_i^{t+1}, (\tilde{y}_\lambda^\ast)^{t+1})\right\|^2 &= 0, \quad
        \left\|\frac{1}{m}\sum_{i=1}^m \nabla h_i (x_i^t, (\tilde{y}_\lambda^\ast)^{t})\right\|^2 = 0.
    \end{align*}
    Using Proposition \ref{prop:first-order}, we obtain:
    \begin{align*}
        \left\|\frac{1}{m}\sum_{i=1}^m \nabla h_i (x_i^t, (\tilde{y}_\lambda^\ast)^{t+1})\right\|^2 
        &= \left\|\frac{1}{m}\sum_{i=1}^m \left(\nabla h_i (x_i^t, (\tilde{y}_\lambda^\ast)^{t+1}) -  \nabla h_i (x_i^{t+1}, (\tilde{y}_\lambda^\ast)^{t+1})\right)\right\|^2 \\
        &\leq \frac{4\lambda^2L^2}{m}\|\mathbf{x}^{t+1} - \mathbf{x}^{t}\|^2, \\
        \left\|\frac{1}{m}\sum_{i=1}^m \nabla h_i (x_i^t, (\tilde{y}_\lambda^\ast)^{t+1})\right\|^2 
        &= \left\|\frac{1}{m}\sum_{i=1}^m \left(\nabla h_i (x_i^t, (\tilde{y}_\lambda^\ast)^{t+1}) - \nabla h_i (x_i^t, (\tilde{y}_\lambda^\ast)^{t})\right)\right\|^2 \\
        &\geq \frac{\lambda^2\mu^2}{4} \|(\tilde{y}_\lambda^\ast)^{t+1} - (\tilde{y}_\lambda^\ast)^{t}\|^2,
    \end{align*}
    which provides the first result in \eqref{eq:err_opt}.Similarly, we can bound the error for $\tilde{y}^\ast$, completing the proof.
\end{proof}

\begin{lemma}
\label{lemm:outer_recur}
    Under assumptions \ref{assump:smooth}-\ref{assump:graph},let $\Omega^t_1$ and $\Omega^t_2$ be defined as in Eq.\ref{valuefunc_outerloop}. We have the following recursions:
    \begin{align*}
    \Omega^{t+1}_1 & \leq (1-\frac{\gamma\rho}{2})  \Omega^{t}_1 + \frac{6\eta_{out}^2}{\gamma\rho}\Omega^t_2, \\
    \Omega^{t+1}_2 &\leq (1-\frac{\gamma\rho}{2}+\frac{3726(\lambda L_g)^2}{\gamma \rho}\eta^2)\Omega^t_2 + \frac{3726(\lambda L_g)^2}{\gamma \rho}\gamma^2\rho^\prime\Omega^t_1  \\
    & \quad + \frac{3726(\lambda L_g)^2}{\gamma \rho}\eta^2m \|\bar{s}^t \|^2 + 5(\lambda L_g)^2 C_{yz} \alpha^K.
    \end{align*}
\end{lemma}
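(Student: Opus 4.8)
\textbf{Proof proposal for Lemma \ref{lemm:outer_recur}.}

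The plan is to derive the two recursions separately by expanding the outer-loop update rules from Algorithm \ref{alg:C2DFB2} and exploiting the contraction induced by the mixing matrix $\tilde{\mathbf{W}} = \mathbf{I} + \gamma(\mathbf{W}-\mathbf{I})$, whose spectral gap is at least $\gamma\rho$ by Proposition \ref{prop:spectral_gap_proposition}. For the model-consensus recursion $\Omega^{t+1}_1$, I would first write the stacked update as $\mathbf{x}^{t+1} = \tilde{\mathbf{W}}\mathbf{x}^t - \eta_{out}\mathbf{s}_x^t$, subtract the mean (using $\mathbf{1}\bar{x}^{t+1} = \mathbf{1}\bar{x}^t - \eta_{out}\mathbf{1}\bar{s}_x^t$ since $\mathbf{1}^\top(\mathbf{W}-\mathbf{I})=0$), so that $\mathbf{x}^{t+1}-\mathbf{1}\bar{x}^{t+1} = \tilde{\mathbf{W}}(\mathbf{x}^t-\mathbf{1}\bar{x}^t) - \eta_{out}(\mathbf{s}_x^t - \mathbf{1}\bar{s}_x^t)$. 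Then I apply $\|\tilde{\mathbf{W}}(\mathbf{x}^t-\mathbf{1}\bar{x}^t)\| \le (1-\gamma\rho)\|\mathbf{x}^t-\mathbf{1}\bar{x}^t\|$, followed by Young's inequality $\|a+b\|^2 \le (1+\beta)\|a\|^2 + (1+\beta^{-1})\|b\|^2$ with $\beta \asymp \gamma\rho$; choosing $\beta$ so that $(1+\beta)(1-\gamma\rho)^2 \le 1-\frac{\gamma\rho}{2}$ gives the $(1-\frac{\gamma\rho}{2})\Omega_1^t$ term and the cross term yields $\frac{6\eta_{out}^2}{\gamma\rho}\Omega_2^t$ after absorbing constants.

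For the tracker-consensus recursion $\Omega^{t+1}_2$, I would similarly write $\mathbf{s}_x^{t+1} = \tilde{\mathbf{W}}\mathbf{s}_x^t + \mathbf{u}_x^{t+1} - \mathbf{u}_x^t$ and, after centering, $\mathbf{s}_x^{t+1}-\mathbf{1}\bar{s}_x^{t+1} = \tilde{\mathbf{W}}(\mathbf{s}_x^t-\mathbf{1}\bar{s}_x^t) + (\mathbf{I}-\frac{1}{m}\mathbf{1}\mathbf{1}^\top)(\mathbf{u}_x^{t+1}-\mathbf{u}_x^t)$. The contraction on the first term is as before; the second term requires bounding $\|\mathbf{u}_x^{t+1}-\mathbf{u}_x^t\|^2$, which is where the Lipschitz structure of $h_i$ and $g_i$ (through the constant $\lambda L_g$) and the inner-loop accuracy enter. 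I would split $\mathbf{u}_x^{t+1}-\mathbf{u}_x^t$ into a part controlled by the change in $\mathbf{x}$ (giving $\|\mathbf{x}^{t+1}-\mathbf{x}^t\|^2$, itself decomposed via the update rule into $\gamma^2\rho^\prime\Omega_1^t$, $\eta^2\Omega_2^t$, and $\eta^2 m\|\bar{s}^t\|^2$ terms using $\rho^\prime=\|\mathbf{W}-\mathbf{I}\|^2$), and a part controlled by the change in the inner-loop outputs $(\mathbf{y}^K)^t, (\mathbf{z}^K)^t$. The latter is handled by Lemma \ref{lem:err_2round_innerloop_optimum} (bounding the drift of the true inner optima between rounds by $\|\mathbf{x}^{t+1}-\mathbf{x}^t\|^2$) together with Theorem \ref{inner_loop_convergence}/its corollary (the $\exp(-\mu K/8L_g) = \alpha^K$ residual), producing the $5(\lambda L_g)^2 C_{yz}\alpha^K$ term.

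The main obstacle I anticipate is the careful bookkeeping in controlling $\|\mathbf{u}_x^{t+1}-\mathbf{u}_x^t\|^2$: it couples all of the outer and inner error quantities at once, and one must repeatedly insert and subtract the true optima $(\tilde{y}_\lambda^\ast)^t$, $(\tilde{y}^\ast)^t$ (and their round-to-round drift) to separate the genuinely contractive pieces from the inner-loop bias, while tracking how the constant $\lambda L_g$ and the condition number $\kappa$ propagate through Assumption \ref{assump:smooth}, Lemma \ref{first-order}, and Lemma \ref{lem:err_2round_innerloop_optimum}. Keeping the numeric constant (here $3726$) consistent and ensuring the $\eta^2$-coefficient is small enough that the net rate stays $\le 1-\frac{\gamma\rho}{2}$ once $\eta_{out}$ is chosen as in Theorem \ref{outer_loop_convergence} is the delicate part; everything else is a routine combination of Young's inequality and the spectral-gap contraction.
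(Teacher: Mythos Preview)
Your proposal is correct and follows essentially the same route as the paper: for $\Omega_1^{t+1}$ the paper writes $\mathbf{x}^{t+1}-\mathbf{1}\bar{x}^{t+1}=\tilde{\mathbf{W}}(\mathbf{x}^t-\mathbf{1}\bar{x}^t)-\eta(\mathbf{s}^t-\mathbf{1}\bar{s}^t)$ and applies Young's inequality with parameter $\asymp\gamma\rho$ together with Proposition~\ref{prop:spectral_gap_proposition}, and for $\Omega_2^{t+1}$ it bounds $\|\mathbf{u}_x^{t+1}-\mathbf{u}_x^t\|^2$ by Lipschitzness into $\|\mathbf{x}^{t+1}-\mathbf{x}^t\|^2$, $\|\mathbf{y}^{t+1}-\mathbf{y}^t\|^2$, $\|\mathbf{z}^{t+1}-\mathbf{z}^t\|^2$ terms, then controls the latter two via Theorem~\ref{inner_loop_convergence} and Lemma~\ref{lem:err_2round_innerloop_optimum} and the former via the update rule exactly as you describe. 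The bookkeeping concern you flag is precisely where the numerical constant $3726$ comes from in the paper's computation.
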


This lemma indicates that both errors decrease under a proper setting of the hyperparameters.
\begin{proof}
For the first inequality, we start from the update rule for $x$:
\begin{equation}
\begin{aligned}
    \|\mathbf{x}^{t+1} - \textbf{1}\bar{x}^{t+1}\|^2 & = \|\mathbf{x}^t + \gamma(W-I)\mathbf{x}^{t} -\eta \mathbf{s}^t - \textbf{1}\bar{x}^{t} - \eta\textbf{1}\bar{s}^{t} \|^2 \\
    & = \|\widetilde{W}\mathbf{x}^t - \textbf{1}\bar{x}^{k} - \eta(\mathbf{s}^t - \textbf{1}\bar{s}^{t})\|^2 \\
    & \overset{(i)}{\leq} (1+a_1)(1-\gamma\rho)^2\|\mathbf{x}^t - \textbf{1}\bar{x}^{t}\|^2 +  (1+\frac{1}{a_1})2\eta^2\|\mathbf{s}^t - \textbf{1}\bar{s}^{t}\|^2 \\
    & \overset{(ii)}{\leq} (1- \frac{\gamma\rho}{2}) \Omega^{t}_1 + \frac{6\eta^2}{\gamma\rho}\Omega^t_2.
\end{aligned}
\end{equation}
    The inequality $(i)$ use the Young`s inequality and Proposition \ref{prop:spectral_gap_proposition}, $(ii)$ involves the choice of $a_1=\frac{\gamma}{2}$.  
\end{proof}
For $\|\textbf{s}^t - \textbf{1}\bar{s}^t\|^2$, similarly, 
we have,
\begin{equation}
    \begin{aligned}
        \|\textbf{s}^{t+1} - \textbf{1}\bar{s}^{t+1}\|^2 
        &\leq (1-\frac{\gamma\rho}{2}) \|\textbf{s}^t - \textbf{1}\bar{s}^t\|^2 + \frac{3}{\gamma \rho}(3(3\lambda L_g)^2\|\textbf{x}^{t+1} - \textbf{x}^t\|^2 
        \\ \quad &+ 3(2\lambda L_g)^2\|\textbf{y}^{t+1} - \textbf{y}^t\|^2 + 3(L_g)^2\|\textbf{z}^{t+1} - \textbf{z}^t\|^2).
    \end{aligned}
    \label{track_err}
\end{equation}

For $\|\textbf{y}^{t+1} - \textbf{y}^t\|^2$,
\begin{equation}
    \begin{aligned}
        \|\textbf{y}^{t+1} - \textbf{y}^t\|^2 &\leq C_y \alpha^K + 2m\|(\tilde{y}_\lambda^\ast)^{t+1} - (\tilde{y}_\lambda^\ast)^{t}\|^2 \\
        &\leq C_y \alpha^K + 32\kappa^2\|\mathbf{x}^{t+1} - \mathbf{x}^{t}\|^2,
    \end{aligned}
\end{equation}
where the last inequality use lemma \ref{lem:err_2round_innerloop_optimum}. 
Similarly, for $|\mathbf{z}^{t+1} - \mathbf{z}^t|^2$, we have:
\begin{equation}
    \begin{aligned}
         \|\textbf{z}^{t+1} - \textbf{z}^t\|^2 &\leq C_z \alpha^K + 2m\|(\tilde{y}^\ast)^{t+1} - (\tilde{y}^\ast)^{t}\|^2 \\
        &\leq C_z \alpha^K + 2\kappa^2\|\mathbf{x}^{t+1} - \mathbf{x}^{t}\|^2
    \end{aligned}
\end{equation}

The second item could be bounded by
\begin{equation}
    \begin{aligned}
        \|\mathbf{x}^{t+1} - \mathbf{x}^{t}\|^2 &= \|\gamma(W-I)\mathbf{x}^{t} - \eta \mathbf{s}^t\|^2 \\ 
        &=\| \gamma(W-I)(\mathbf{x}^{t} - \mathbf{1}\bar{x}^t) - \eta \mathbf{s}^t\|^2 \\ 
        & \leq 3\gamma^2\rho^\prime\|\mathbf{x}^t - \textbf{1}\bar{x}^t\|^2 + 3\eta^2 \|\mathbf{s}^t - \textbf{1} \bar{s}^t \|^2 + 3\eta^2m \|\bar{s}^t \|^2,
    \end{aligned}
    \label{iter_err2}
\end{equation}
where $\rho^\prime = \| W - I \|^2 = \sigma_{\text{max}}(W - I)^2$.

Combining \eqref{track_err}-\eqref{iter_err2}, 
\begin{equation}
    \begin{aligned}
        \Omega^{t+1}_2 &\leq (1-\frac{\gamma\rho}{2}+\frac{3726(\lambda L_g)^2}{\gamma \rho}\eta^2)\Omega^t_2 + \frac{3726(\lambda L_g)^2}{\gamma \rho}\gamma^2\rho^\prime\Omega^t_1  \\
    & \quad + \frac{3726(\lambda L_g)^2}{\gamma \rho}\eta^2m \|\bar{s}^t \|^2 + 5(\lambda L_g)^2 C_{yz} \alpha^K.
    \end{aligned}
    \label{track_err2}
\end{equation}
This finishes the proof.

\subsection{Proof of Theorem 1}
\begin{theorem}
    \label{inner_loop_convergence}
    If Assumptions \ref{assump:smooth} and \ref{assump:graph} hold, for the inner loop of K steps, at iteration $t$, there exist $\gamma_{in} \propto \delta_c \rho, \eta_{in} \propto \frac{\delta_c \rho^2}{\kappa \lambda L_g}$ such that
    \begin{align*}
        &\|(\mathbf{y}^K)^t - \mathbf{1}\tilde{y}_\lambda^\ast\|^2 \leq C_y \exp(-\frac{\mu K}{8L_g})~~~\text{and}\\
        &\|(\mathbf{z}^K)^t - \mathbf{1}\tilde{y}^\ast\|^2 \leq C_z \exp(-\frac{\mu K}{2L_g}),
    \end{align*}
    where $C_y$ and $C_z$ are positive constants. 
\end{theorem}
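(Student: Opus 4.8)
The plan is to treat the inner loop as a decentralized optimization of the strongly convex objective $\frac{1}{m}\sum_i r_i(d)$ (with $r = h$ or $g$) where the iterates are perturbed by compression, and to run the coupled recursions from Lemma~\ref{lem:inner recur} through a single Lyapunov function. Concretely, I would form the aggregate potential $\Omega^k = \Omega_0^k + c_1\Omega_1^k + c_2\Omega_2^k + c_3\Omega_3^k + c_4\Omega_4^k$ for suitable positive weights $c_1,\dots,c_4$, substitute the five recursions of Lemma~\ref{lem:inner recur}, and collect the coefficient multiplying each $\Omega_j^k$ on the right-hand side. The goal is to choose $\gamma_{in}$, $\eta_{in}$, and the weights so that every such coefficient is bounded by $1 - \Theta(\eta_{in}\mu)$; the dominant decay comes from the $(1-\eta\mu)$ factor in the $\Omega_0$ recursion, while the consensus/compression coordinates contract at rates $\gamma\rho$ and $\delta_c$ that must be made large relative to $\eta\mu$.

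The key steps, in order: (1) Use Proposition~\ref{prop:spectral_gap_proposition} and the definitions so that $\gamma_{in} = \Theta(\delta_c\rho)$ makes both the $\Omega_1,\Omega_3$ contraction factor $1-\tfrac{\delta_c}{2}+\tfrac{6\rho'\gamma^2}{\delta_c}$ equal to $1-\Theta(\delta_c)$ (the cross term $\rho'\gamma^2/\delta_c = \Theta(\delta_c\rho^2\rho')$ is absorbed since $\rho' = \|W-I\|^2 = O(1)$ and $\rho\le 1$), and makes the $\Omega_2,\Omega_4$ factor $1-\tfrac{\gamma\rho}{2}+\dots = 1-\Theta(\delta_c\rho^2)$. (2) Choose $\eta_{in} = \Theta\!\big(\tfrac{\delta_c\rho^2}{\kappa\lambda L_g}\big)$, i.e.\ $\eta L = \Theta(\delta_c\rho^2/\kappa)$ with $L = \lambda L_g$, so that every off-diagonal ``leakage'' term of the form $\eta^2 L^2 \cdot(\text{stuff})$, $\eta^4 L^2\cdot(\text{stuff})$, $\eta^2 L^2 m\,\Omega_0$, etc., is small compared to the diagonal contraction of the block it leaks into; this is the calculation that pins down the proportionality constants. (3) With these choices, pick the weights $c_j$ geometrically decreasing (roughly $c_1\sim c_3 \sim 1/\delta_c$ times the relevant $L$ factors, $c_2\sim c_4$ chosen to dominate the $\eta^2$-level couplings into $\Omega_0$), verify the resulting linear map on $(\Omega_0,\dots,\Omega_4)$ has all row sums $\le 1-\tfrac{\eta\mu}{2}$, and conclude $\Omega^{k}\le (1-\tfrac{\eta\mu}{2})^k\Omega^0$. (4) Since $\Omega_0^0$, $\Omega_2^0$ and $\|\mathbf d^0-\hat{\mathbf d}^0\|^2$ are all finite (initialization sets $\hat d = d$, $z^0 = y^0$), bound $\Omega^0$ by a constant, translate $(1-\tfrac{\eta\mu}{2})^k \le \exp(-\tfrac{\eta\mu k}{2})$, and recall $\eta = \Theta(1/(\kappa L_g))\cdot(\text{const}) $ gives $\tfrac{\eta\mu}{2} = \Theta(\mu/L_g)$; the stated $\mu K/(8L_g)$ (resp.\ $\mu K/(2L_g)$) absorbs the constant. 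Finally, $\|\mathbf y^K - \mathbf 1\tilde y_\lambda^*\|^2 \le 2m\Omega_0^K + 2\Omega_2^K \le C_y\exp(-\mu K/(8L_g))$, and because $g$ has the better constant $\lambda\mu$ vs.\ $\tfrac{\lambda\mu}{2}$ in its strong convexity (Proposition~\ref{prop:first-order}, part 3, used through $L_r$), the $z$-iteration gets the faster rate $\mu K/(2L_g)$.

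The main obstacle I anticipate is step~(3): verifying that a single set of weights $c_1,\dots,c_4$ and a single pair $(\gamma_{in},\eta_{in})$ simultaneously controls \emph{all five} rows, because $\Omega_1$ and $\Omega_3$ couple into $\Omega_0$ through $\eta^2 m$ terms, $\Omega_4$ couples into $\Omega_3$, and $\Omega_3$ into $\Omega_4$, so the dependency graph is nearly fully connected and a naive weight choice can leave one coefficient slightly above $1$. The resolution is the standard ``small-gain'' bookkeeping: make $\eta$ small enough (which is exactly why $\eta_{in}$ carries the extra $\rho^2/\kappa$ and $1/(\lambda L_g)$ factors) that the product of any cycle of off-diagonal gains in the $5\times 5$ matrix is $o(1)$, then a perturbation argument around the decoupled diagonal gives the claim; I would present this as choosing the $c_j$ in decreasing order of ``time-scale'' ($\Omega_0$ slowest, then the $\rho$-blocks, then the $\delta_c$-blocks) so each block only needs to dominate couplings from strictly faster blocks. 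A secondary nuisance is tracking the $\rho' = \|W-I\|^2$ factors (up to $4$) and the condition-number factors $\kappa$ through $L = \lambda L_g$ versus $\mu$; these only affect the constants $C_y, C_z$ and the proportionality constants in $\gamma_{in},\eta_{in}$, not the exponential rate, so I would carry them symbolically and collect them at the end.
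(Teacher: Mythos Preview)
Your proposal is correct and follows essentially the same route as the paper. The paper's proof is extremely terse: it invokes Lemma~\ref{lem:inner recur}, packages the five recursions into a nonnegative matrix $\mathbf A$, and cites prior work to assert the existence of a positive vector $\boldsymbol\epsilon$ with $\mathbf A\boldsymbol\epsilon \le (1-\eta\mu)\boldsymbol\epsilon$, from which componentwise linear contraction follows. Your weighted Lyapunov sum $\Omega^k = \sum_j c_j\Omega_j^k$ with $c^{\top}\mathbf A \le (1-\Theta(\eta\mu))\,c^{\top}$ is the dual formulation of exactly the same small-gain argument, and your discussion of how to pick $\gamma_{in}\propto\delta_c\rho$ and $\eta_{in}\propto \delta_c\rho^2/(\kappa\lambda L_g)$ so that the off-diagonal couplings are dominated is precisely the bookkeeping the cited reference carries out.

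One small correction in your step~(4): the reason the $z$-iteration gets the faster exponent is not that $g$ is ``$\lambda\mu$''-strongly convex versus $h$ being $\lambda\mu/2$-strongly convex. Rather, $g$ is $\mu$-strongly convex with $L_g$-Lipschitz gradient, while $h$ is $\lambda\mu/2$-strongly convex with $2\lambda L_g$-Lipschitz gradient (Proposition~\ref{prop:first-order}), so the relevant ratio $\mu_r/L_r$ is $\mu/L_g$ for $g$ and $\mu/(4L_g)$ for $h$; this factor of four is what produces the $8$ versus $2$ in the exponents. This only affects constants, not the structure of your argument.
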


Theorem \ref{inner_loop_convergence} is established in Theorem 1 of \citet{9789732}. The proof outlines the existence of a positive number satisfying the inequality $\mathbf{A}\mathbf{\epsilon} \leq (1 - \eta\mu) \mathbf{\epsilon}$, where $\mathbf{\epsilon} = [\epsilon_1, \epsilon_2, L^2 \epsilon_3, \epsilon_4, L^2 \epsilon_5]$ and $\mathbf{A}$ is the matrix of coefficients from Lemma \ref{lem:inner recur}. Assuming that $C_y$ and $C_z$ are the upper bounds of the initial values at iteration $t$, it follows that $\mathbf{y}$ and $\mathbf{z}$ converge to their optimal values at a linear rate.

\subsection{Proof of Theorem 2}
\begin{theorem}
    \label{outer_loop_convergence}
    Under Assumptions \ref{assump:smooth} and \ref{assump:graph}, if we set 
    \begin{align*}
         \lambda\propto \mathcal{O}(l\kappa^3\epsilon^{-1}) \quad \eta_{out} \propto  \mathcal{O}(\gamma l^{-4} \kappa^{-6}\epsilon^{2}) \quad \gamma_{out} \propto  \mathcal{O}(\rho^2),
    \end{align*}
    Algorithm needs $O(l^{4}\kappa^{6}\rho^{-2}\epsilon^{-4}log(\epsilon^{-4}))$ first-order orcale calls to reach the $\epsilon$-stationary point.
\end{theorem}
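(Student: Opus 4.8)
\textbf{Proof proposal for Theorem~\ref{outer_loop_convergence}.}
The plan is to run a descent argument on the outer-loop value function $\Omega^t = \psi(\bar{x}^t) + \frac{1}{m}\Omega^t_1 + \frac{\eta_{out}}{m}\Omega^t_2$ defined in \eqref{valuefunc_outerloop}, and show it decreases by a controlled amount per iteration up to error terms coming from the inner-loop inexactness. First I would expand $\psi(\bar{x}^{t+1})$ using the $L_\varphi = \mathcal{O}(l\kappa^3)$ smoothness of $\psi_\lambda$ from Lemma~\ref{first-order}(2) together with Proposition~\ref{gradient_tracking}, which gives $\bar{x}^{t+1} = \bar{x}^t - \eta_{out}\,\bar{u}^t$ where $\bar{u}^t = \frac{1}{m}\sum_i (u_i)^t_x$ is the average of the local hypergradient estimates. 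The standard descent-lemma step then produces a term $-\frac{\eta_{out}}{2}\|\nabla\psi_\lambda(\bar{x}^t)\|^2$, a term $+\frac{\eta_{out}}{2}\|\bar{u}^t - \nabla\psi_\lambda(\bar{x}^t)\|^2$ capturing the gradient estimation bias, and a $+\frac{L_\varphi\eta_{out}^2}{2}\|\bar{u}^t\|^2$ curvature term. Here I would also invoke Lemma~\ref{first-order}(1) to swap $\nabla\psi_\lambda$ for $\nabla\psi$ at the cost of $\mathcal{O}(l\kappa^3/\lambda)^2 = \mathcal{O}(\epsilon^2)$ under the choice $\lambda \propto l\kappa^3\epsilon^{-1}$.

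Next I would control $\|\bar{u}^t - \nabla\psi_\lambda(\bar{x}^t)\|^2$. By \eqref{reform_grad} this difference decomposes into: (a) the gap between local models $x_i^t$ and $\bar{x}^t$, which is $\Omega^t_1/m$ up to Lipschitz constants; (b) the gap between the inner iterates $(\mathbf{y}^K)^t, (\mathbf{z}^K)^t$ and the true minimizers $\tilde{y}^\ast_\lambda, \tilde{y}^\ast$, which Theorem~\ref{inner_loop_convergence} bounds by $C_{yz}\exp(-\mu K/(8L_g))$; and (c) the gap between the local-parameter optima $\tilde{y}^\ast_\lambda,\tilde{y}^\ast$ and the consensus-point optima $y^\ast_\lambda(\bar{x}^t), y^\ast(\bar{x}^t)$, which Lemma~\ref{lipstchiz} again bounds by $\mathcal{O}(\kappa^2)\Omega^t_1/m$. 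So $\|\bar{u}^t - \nabla\psi_\lambda(\bar{x}^t)\|^2 \lesssim (\lambda L_g\kappa)^2 \Omega^t_1/m + (\lambda L_g)^2 C_{yz}\exp(-\mu K/(8L_g))$. I would also note $\|\bar{u}^t\|^2 \le 2\|\nabla\psi_\lambda(\bar{x}^t)\|^2 + 2\|\bar{u}^t - \nabla\psi_\lambda(\bar{x}^t)\|^2$, so the $\eta_{out}^2$ curvature term is absorbed once $\eta_{out} \le 1/(2L_\varphi)$.

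Then I would assemble the telescoping inequality. Adding $\frac{1}{m}$ times the $\Omega^{t+1}_1$ recursion and $\frac{\eta_{out}}{m}$ times the $\Omega^{t+1}_2$ recursion from Lemma~\ref{outer_loop_error} to the descent inequality, the $-\frac{\gamma\rho}{2}$ contraction factors on $\Omega_1,\Omega_2$ must dominate the positive cross-terms they generate; this is exactly where the choices $\gamma_{out}\propto\rho^2$ and $\eta_{out}\propto\gamma l^{-4}\kappa^{-6}\epsilon^2$ enter, forcing $\frac{3726(\lambda L_g)^2}{\gamma\rho}\eta^2 \le \frac{\gamma\rho}{4}$ and making the $\gamma^2\rho'\Omega_1$ feedback term small relative to the $\Omega_1$ contraction, while the $\frac{3726(\lambda L_g)^2}{\gamma\rho}\eta^2 m\|\bar s^t\|^2$ term is tied back to $\|\nabla\psi_\lambda(\bar x^t)\|^2 + (\text{errors})$ via $\bar s^t = \bar u^t$. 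After cancellation one gets
\begin{align*}
\Omega^{t+1} \le \Omega^t - \frac{\eta_{out}}{4}\|\nabla\psi(\bar{x}^t)\|^2 + \mathcal{O}(\eta_{out}\epsilon^2) + \mathcal{O}\!\left(\eta_{out}(\lambda L_g)^2 C_{yz}\exp(-\tfrac{\mu K}{8L_g})\right).
\end{align*}
Summing over $t = 0,\dots,T-1$, dividing by $\eta_{out}T$, and using that $\Omega^0 - \Omega^T = \mathcal{O}(1)$ (the $\Omega_1,\Omega_2$ pieces start near zero and $\psi$ is bounded below by Proposition~\ref{prop:liptschiz of psi}), gives $\frac{1}{T}\sum_t\|\nabla\psi(\bar x^t)\|^2 = \mathcal{O}\big(\frac{1}{\eta_{out}T}\big) + \mathcal{O}(\epsilon^2) + \mathcal{O}((\lambda L_g)^2 C_{yz}\exp(-\tfrac{\mu K}{8L_g}))$. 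Choosing $K = \mathcal{O}(\log(\epsilon^{-4}))$ kills the last term (making it $\mathcal{O}(\epsilon^2)$ even after the $\lambda^2 = \mathcal{O}(\epsilon^{-2})$ blowup, since the $\log$ absorbs polynomial factors), and requiring $\frac{1}{\eta_{out}T} = \mathcal{O}(\epsilon^2)$ gives $T = \mathcal{O}(\eta_{out}^{-1}\epsilon^{-2}) = \mathcal{O}(l^4\kappa^6\rho^{-2}\epsilon^{-4})$; multiplying by the $K = \mathcal{O}(\log(\epsilon^{-4}))$ inner oracle calls per outer step yields the claimed $\mathcal{O}(l^4\kappa^6\rho^{-2}\epsilon^{-4}\log(\epsilon^{-4}))$ first-order oracle complexity. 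The main obstacle I anticipate is the bookkeeping in the assembly step: verifying that a single pair of constants $(\gamma_{out},\eta_{out})$ with the stated scalings simultaneously makes all three coupled recursions contractive — i.e.\ that the $3\times 3$ (or larger) linear system governing $(\psi,\Omega_1,\Omega_2)$ has spectral radius below $1$ — and carefully tracking how the $\lambda = \Theta(\epsilon^{-1})$ dependence propagates through $L_r = 2\lambda L_g$ into the inner-loop constants $C_y, C_z$ so that the final $\log(\epsilon^{-4})$ truly suffices.
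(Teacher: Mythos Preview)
Your proposal is correct and follows essentially the same route as the paper: descent lemma on $\psi(\bar x^t)$, split the gradient error via Lemma~\ref{first-order}(1), Lemma~\ref{lipstchiz}, and Theorem~\ref{inner_loop_convergence}, then combine with the $\Omega_1,\Omega_2$ recursions of Lemma~\ref{outer_loop_error} inside the value function $\Omega^t$ and telescope. The only cosmetic difference is that the paper keeps the term $-(\tfrac{\eta_{out}}{2}-\tfrac{L_\psi\eta_{out}^2}{2})\|\bar s_x^t\|^2$ from the descent step as a negative sink to absorb the $\tfrac{3726(\lambda L_g)^2}{\gamma\rho}\eta^2 m\|\bar s^t\|^2$ contribution from the $\Omega_2$ recursion directly, rather than first bounding $\|\bar u^t\|^2\le 2\|\nabla\psi_\lambda(\bar x^t)\|^2+2\|\text{error}\|^2$ as you do; either bookkeeping works under the stated step-size scalings.
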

 Based on above results, we can obtain the final communication complexities of our algorithms $O(l^{4}\kappa^{6}\rho^{-2}\epsilon^{-4}log(\epsilon^{-4}))$.

\begin{proof}
Lets begin with the descent lemma of $\Psi(x)$ under assumption \ref{assump:smooth}, 
\begin{equation}
\begin{aligned}
    \psi(\bar{x}^{t+1}) &\leq \psi(\bar{x}^t) - \nabla \psi(\bar{x}^t)^T(\bar{x}^{t+1}-\bar{x}^t) + \frac{L_\psi}{2}\|\bar{x}^{t+1}-\bar{x}^t\|^2\\
    & = \psi(\bar{x}^t) - \eta_{out}\nabla \psi(\bar{x}^t)^T \bar{s}_x^t + \frac{L_\psi}{2}\|\bar{x}^{t+1}-\bar{x}^t\|\\
    & = \psi(\bar{x}^t) - \frac{\eta_{out}}{2} \|\nabla\psi(\bar{x}^t)\|^2 - (\frac{\eta_{out}}{2}-\frac{L_\psi\eta_{out}^2}{2})\|\bar{s}_x^t\|^2 + \frac{\eta_{out}}{2}\|\nabla\psi(\bar{x}^t)-\bar{s}_x^t\|^2\\
    & \leq \psi(\bar{x}^t) - \frac{\eta_{out}}{2} \|\nabla\psi(\bar{x}^t)\|^2 - (\frac{\eta_{out}}{2}-\frac{L_\psi\eta_{out}^2}{2})\|\bar{s}_x^t\|^2 \\ &\quad + \eta_{out} \|\nabla\psi(\bar{x}^t) - \nabla\psi_\lambda(\bar{x}^t)\|^2 +  \eta_{out} \|\nabla\psi_\lambda(\bar{x}^t)-\frac{1}{m}\sum_{i=1}^{m}\widetilde{\hat{\nabla}_x\psi_i(x_i^t)})\|^2
    \\ &\overset{(i)}{\leq} \psi(\bar{x}^t) - \frac{\eta_{out}}{2} \|\nabla\psi(\bar{x}^t)\|^2 - (\frac{\eta_{out}}{2}-\frac{L_\psi\eta_{out}^2}{2})\|\bar{s}_x^t\|^2 + \mathbf{\textit{O}}(\eta_{out}l^2\kappa^6\lambda^{-2}) \\ &\quad +  \eta_{out} \|\nabla\psi_\lambda(\bar{x}^t)-\frac{1}{m}\sum_{i=1}^{m}\widetilde{\hat{\nabla}_x\psi_i(x_i^t)})\|^2
    \\ &\overset{(ii)} {\leq} \psi(\bar{x}^t) - \frac{\eta_{out}}{2} \|\nabla\psi(\bar{x}^t)\|^2 - (\frac{\eta_{out}}{2}-\frac{L_\psi\eta_{out}^2}{2})\|\bar{s}_x^t\|^2 + \mathbf{\textit{O}}(\eta_{out}l^2\kappa^6\lambda^{-2}) \\ &\quad +  \frac{2\eta_{out}}{m} L_\lambda \|\mathbf{x}^t - \textbf{1}\bar{x}^t\|^2 + 2 \eta_{out} \|\frac{1}{m}\sum_{i=1}^{m} (\nabla\psi_\lambda(x_i^t) -\widetilde{\hat{\nabla}_x\psi_i(x_i^t)}) \|^2, 
\end{aligned}
\label{desl}
\end{equation}
where inequality $(i)$ and $(ii)$ is a result of (1) and (2) from proposition \ref{prop:liptschiz of psi}. 
Note that, 
\begin{equation}
\begin{aligned}
    \nabla\psi_\lambda(x_i^t) &= \nabla_x f_i(x_i,y_\lambda^\ast(\bar{x})) + \lambda\left(\nabla_xg_i(x_i,y_\lambda^\ast(\bar{x}))-\nabla_xg_i(x_i,y^\ast(\bar{x}))\right)\\ 
    \widetilde{\hat{\nabla}_x\psi_i(x_i^t)} & = \nabla_x f_i(x_i,y_i^K)+ \lambda\left(\nabla_x g_i(x_i, y_i^K)- \nabla_x g_i(x_i, z_i^K)\right)
\end{aligned}
\label{eq8}
\end{equation}

\eqref{desl} can be converted to
\begin{equation}
\begin{aligned}
    \psi(\bar{x}^{t+1}) &\leq \psi(\bar{x}^t) - \frac{\eta_{out}}{2} \|\nabla\psi(\bar{x}^t)\|^2 - (\frac{\eta_{out}}{2}-\frac{L_\psi\eta_{out}^2}{2})\|\bar{s}_x^t\|^2 + \mathbf{\textit{O}}(\eta_{out}l^2\kappa^6\lambda^{-2}) \\ &\quad +  \frac{2\eta_{out}}{m} L_\lambda \|\mathbf{x}^t - \textbf{1}\bar{x}^t\|^2 + 2 \eta_{out} \|\frac{1}{m}\sum_{i=1}^{m} (\nabla\psi_\lambda(x_i^t) -\widetilde{\hat{\nabla}_x\psi_i(x_i^t)}) \|^2\\
    &\leq \psi(\bar{x}^t) - \frac{\eta_{out}}{2} \|\nabla\psi(\bar{x}^t)\|^2 - (\frac{\eta_{out}}{2}-\frac{L_\psi\eta_{out}^2}{2})\|\bar{s}_x^t\|^2 + \mathbf{\textit{O}}(\eta_{out}l^2\kappa^6\lambda^{-2}) \\ &\quad +  \frac{2\eta_{out}}{m} L_\lambda \|\mathbf{x}^t - \textbf{1}\bar{x}^t\|^2 + 8\eta_{out} \frac{\lambda^2L_g^2}{m}\|(\textbf{y}^K)^t - \mathbf{1} y_\lambda^\ast(\bar{x}^t)\|^2 + 2\eta_{out} \frac{\lambda^2L_g^2}{m}\|(\textbf{z}^K)^t - \mathbf{1} z_\lambda^\ast(\bar{x}^t)\|^2
\end{aligned}
\label{dstl}
\end{equation}

For the last two item, proposition \ref{lem:inner-loop-opt-deviation} leads to,
\begin{equation}
    \begin{aligned}
        \|(\textbf{y}^K)^t - \mathbf{1} y_\lambda^\ast(\bar{x}^t)\|^2 &\leq 2\|(\textbf{y}^K)^t - \mathbf{1}\tilde{y}_\lambda^\ast\|^2 +32\kappa^2\|x-\mathbf{1}\bar{x}^t\|^2
    \end{aligned}
    \label{optgapy}
\end{equation}

Similarly, for  $\|(\textbf{z}^K)^t - \mathbf{1} z_\lambda^\ast(\bar{x}^t)\|$, 
\begin{equation}
    \begin{aligned}
        \|(\textbf{z}^K)^t - \mathbf{1} z_\lambda^\ast(\bar{x}^t)\|^2 &\leq 2\|(\textbf{z}^K)^t - \mathbf{1}\tilde{z}_\lambda^\ast\|^2 +2\kappa^2\|x-\mathbf{1}\bar{x}^t\|^2
    \end{aligned}
    \label{optgapz}
\end{equation}

Plug \eqref{optgapy} and \eqref{optgapz} into \eqref{dstl}, it comes into
\begin{equation}
    \begin{aligned}
        \psi(\bar{x}^{t+1}) &\leq \psi(\bar{x}^t) - \frac{\eta_{out}}{2} \|\nabla\psi(\bar{x}^t)\|^2 - (\frac{\eta_{out}}{2}-\frac{L_\psi\eta_{out}^2}{2})\|\bar{s}_x^t\|^2 + \mathbf{\textit{O}}(\eta_{out}l^2\kappa^6\lambda^{-2}) \\ &\quad +  \frac{2\eta_{out}}{m} L_\lambda \|\mathbf{x}^t - \textbf{1}\bar{x}^t\|^2 + 8\eta_{out} \frac{\lambda^2L_g^2}{m}\|(\textbf{y}^K)^t - \mathbf{1} y_\lambda^\ast(\bar{x}^t)\|^2 + 2\eta_{out} \frac{\lambda^2L_g^2}{m}\|(\textbf{z}^K)^t - \mathbf{1} z_\lambda^\ast(\bar{x}^t)\|^2\\
        & \leq \psi(\bar{x}^t) - \frac{\eta_{out}}{2} \|\nabla\psi(\bar{x}^t)\|^2 - (\frac{\eta_{out}}{2}-\frac{L_\psi\eta_{out}^2}{2})\|\bar{s}_x^t\|^2 + \mathbf{\textit{O}}(\eta_{out}l^2\kappa^6\lambda^{-2}) \\ &\quad + 262\eta_{out} \frac{\lambda^2L_g\kappa^3}{m} \|\textbf{x}^t- \mathbf{1}\bar{x}^t\|^2 + 16\eta_{out} \frac{\lambda^2L_g^2}{m}\|(\textbf{y}^K)^t - \mathbf{1}\tilde{y}_\lambda^\ast\|^2 + 2\eta_{out} \frac{\lambda^2L_g^2}{m}\|(\textbf{z}^K)^t - \mathbf{1}\tilde{z}_\lambda^\ast\|^2
    \end{aligned}
\end{equation}

Based on Theorem \ref{inner_loop_convergence},combining with lemma \ref{lemm:outer_recur}, now we could reach the point:
\begin{equation}
\label{recur_value}
    \begin{aligned}
         \psi(\bar{x}^{t+1}) &\leq \psi(\bar{x}^t) - \frac{\eta_{out}}{2} \|\nabla\psi(\bar{x}^t)\|^2 - (\frac{\eta_{out}}{2}-\frac{L_\psi\eta_{out}^2}{2})\|\bar{s}_x^t\|^2 + \mathbf{\textit{O}}(\eta_{out}l^2\kappa^6\lambda^{-2}) \\ &\quad + 262\eta_{out} \frac{\lambda^2L_g\kappa^3}{m} \|\textbf{x}^t- \mathbf{1}\bar{x}^t\|^2 + 18\eta_{out} \frac{\lambda^2L_g^2}{m}C_{yz} \alpha^K \\
     \Omega^{t+1}_1 & \leq (1-\frac{\gamma\rho}{2}) \Omega^t_1 + \frac{6\eta_{out}^2}{\gamma\rho} \Omega^t_2 \\
     \Omega^{t+1}_2 &\leq (1-\frac{\gamma\rho}{2}+\frac{3726(\lambda L_g)^2}{\gamma \rho}\eta^2)\Omega^t_2 + \frac{3726(\lambda L_g)^2}{\gamma \rho}\gamma^2\rho^\prime \Omega^t_1 \\ & \quad+ \frac{3726(\lambda L_g)^2}{\gamma \rho}\eta^2m \|\bar{s}^t \|^2 + 5(\lambda L_g)^2 C_{yz} \alpha^K,
    \end{aligned}
\end{equation}
where  $C_{yz} = \max\{C_y, C_z\}$

Finally, given the value function,
\begin{equation}
    \Omega_t = \psi(\bar{x}^t) + \frac{1}{m}\Omega^t_1 + \frac{\eta_{out}}{m}\Omega^t_2,
    \label{valuefunc}
\end{equation}

Combining \eqref{recur_value} and \eqref{valuefunc}, telescope over one iteration,
\begin{equation}
\begin{aligned}
    &\Omega_{t+1} \\
    &\leq  \Omega_{t} - \frac{\eta_{out}}{2} \|\nabla\psi(\bar{x}^t)\|^2 - (\frac{\eta_{out}}{2}-\frac{L_\psi\eta_{out}^2}{2})\|\bar{s}_x^t\|^2 + \mathbf{\textit{O}}(\eta_{out}l^2\kappa^6\lambda^{-2}) \\ &\quad +  262\eta_{out} \frac{\lambda^2L_g\kappa^3}{m} \Omega^t_1 + 18\eta_{out} \frac{\lambda^2L_g^2}{m}C_{yz} \alpha^K \\
    &\quad + \frac{1}{m} \left (-\frac{\gamma\rho}{2} \Omega^t_1 + \frac{6\eta_{out}^2}{\gamma\rho} \Omega^t_2 \right ) \\
    &\quad +  \frac{\eta_{out}}{m} \bigg ((-\frac{\gamma\rho}{2}+\frac{3726(\lambda L_g)^2}{\gamma \rho}\eta^2) \Omega^t_2 + \frac{3726(\lambda L_g)^2}{\gamma \rho}\gamma^2\rho^\prime \Omega^t_1 \\ & \quad+ \frac{3726(\lambda L_g)^2}{\gamma \rho}\eta^2m \|\bar{s}^t \|^2 + 5(\lambda L_g)^2 C_{yz} \alpha^K \bigg ) \\
\end{aligned}
\end{equation}

\begin{align*}
     &\leq \Omega_{t} - \frac{\eta_{out}}{2} \|\nabla\psi(\bar{x}^t)\|^2 - (\frac{\eta_{out}}{2}-\frac{L_\psi\eta_{out}^2}{2} - \frac{3726(\lambda L_g)^2\eta_{out}^2}{\gamma \rho})\|\bar{s}_x^t\|^2 \\ &\quad - \frac{L_g^2}{m}(\frac{\gamma\rho}{2} - 262\eta\lambda^2 \kappa^2 - \frac{\eta\gamma\rho^\prime 3726\lambda^2}{ \rho})\Omega^t_1 \\ &\quad - \frac{\eta_{out}}{m}(\frac{\gamma\rho}{2} - \frac{3726(\lambda L_g)^2}{\gamma \rho}\eta^2 - \frac{6\eta_{out}L_g^2}{\gamma\rho} ) \Omega^t_2 + \frac{23\eta_{out}\lambda^2 L_g^2}{m} C_{yz}\alpha^K + \mathbf{\textit{O}}(\eta_{out}l^2\kappa^6\lambda^{-2})
\end{align*}

if $K \geq \textit{O}(\log(\epsilon^{-4}))$ such that
\begin{equation}
    \frac{23\eta_{out}\lambda^2 L_g^2}{m} C_{yz}\alpha^K \leq \mathbf{\textit{O}}(\epsilon^2)
\end{equation}

Consider the set of parameter
\begin{equation}
    \lambda\propto l\kappa^3\epsilon^{-1} \quad \eta\propto \gamma l^{-4} \kappa^{-6}\epsilon^{2} \quad \gamma\propto \rho^2
\end{equation}
This leads to the following inequality,
\begin{equation}
\begin{aligned}
    \frac{\eta_{out}}{2} \|\nabla\psi(\bar{x}^t)\|^2 &\leq \Omega_{t} - \Omega_{t+1} + \eta_{out}\textit{O}(\epsilon^2)
\end{aligned}
\end{equation}

Telescope over $T$,
\begin{equation}
    \frac{1}{T}\sum_{t=0}^{T-1} \|\nabla\psi(\bar{x}^t)\|^2 \leq \frac{2}{\eta_{out}T}\Omega_{0} + \textit{O}(\epsilon^2)
\end{equation}

Therefore, CDFB needs $O(l^{4}\kappa^{6}\rho^{-2}\epsilon^{-4}log(\epsilon^{-4}))$ first-order orcale calls and $O(l^{4}\kappa^{6}\rho^{-2}\epsilon^{-4}log(\epsilon^{-4}))$ communication rounds to reach the $\epsilon$-stationary point.

\end{proof}

\section{Additional experiments} \label{app:exp}
\subsection{Coefficient Tunning on 20 Newsgroup}
In this experiment, we employ features from the 20 Newsgroup dataset transformed using MinMax scaling and targets for classification. The model operates within a feature space defined by the dataset. Ten clients collaborate across various networks to address the challenges posed by this unique dataset, which is partitioned into both i.i.d. (random split) and non-i.i.d. settings (where h\% portion of data from specific classes is distributed to node $i$).

For the $\text{C}^2$DFB method, we execute 1001 epochs across 15 inner loops. The learning rate for both loops is set at 1, accompanied by a Lagrangian multiplier of 10. The mixing step size is established at 0.5, and compression during communication is predetermined as top-k, retaining 20\% of the parameter portion. For MADSBO, we configure $\alpha = \beta = \gamma = 10$, with 15 iterations per inner loop and a total of 2001 epochs. The moving average constant is set at 0.3. In the case of MDBO, we set $\alpha = \beta_1 = \beta_2 = 5$, $\gamma$ as 0.3 and predefined Lipstchiz continous parameter as 15. 

We conduct a sensitivity analysis on various parameters for this task, as illustrated in Figure \ref{fig:sensitive}. The results indicate that an excessively large number of inner loops and a high sigma value do not lead to a significant increase in performance. This suggests that the $\text{C}^2$DFB algorithm requires only a few inner loops to achieve optimal performance in practical applications.

\begin{figure*}[t]
  \centering
  \includegraphics[width=\linewidth, height=3cm]{./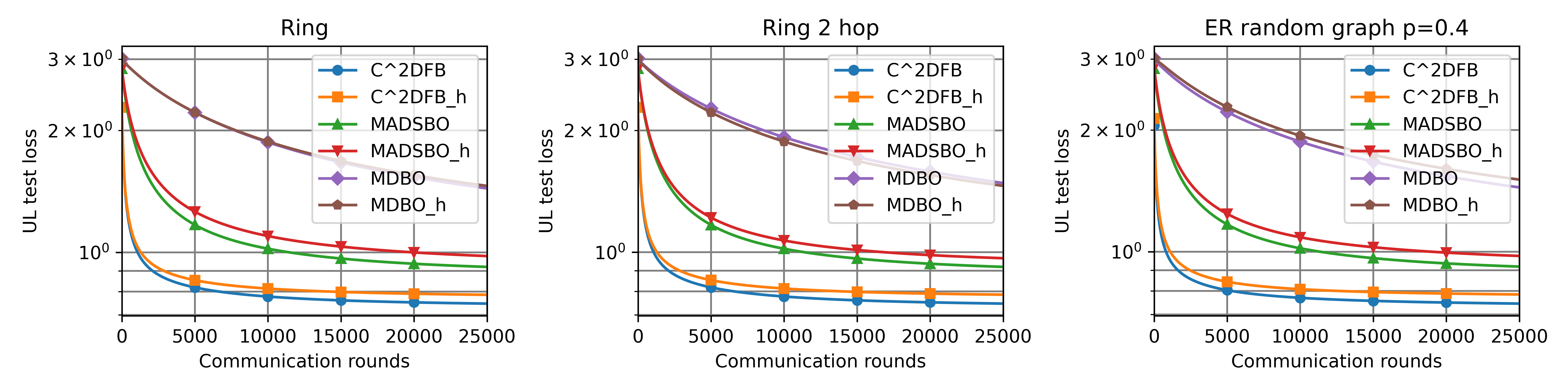}
  \caption{Comparison of test loss against communication round for $\text{C}^2$DFB, MADSBO and MDBO under three topology on Coefficient Tuning task. The 'h' notation represents a heterogeneous data distribution across 10 clients, with a heterogeneity level set to 0.8 in the experiment.}
  \label{fig:lossi2reg}
\end{figure*}

\begin{figure*}[t]
  \centering
  \includegraphics[width=\linewidth, height=3cm]{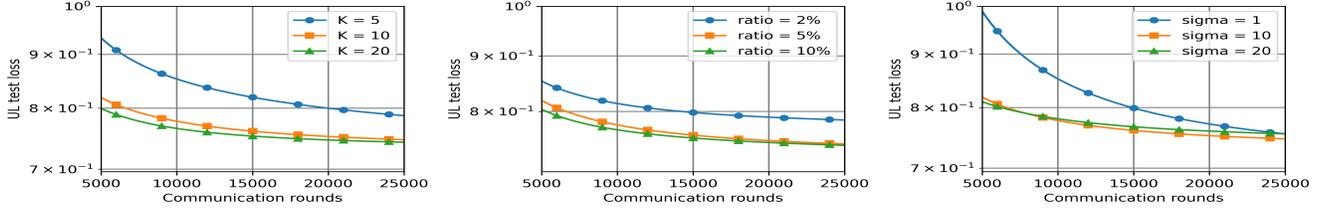}
  \caption{Sensitive studies of $\text{C}^2$DFB, (1) varying the number of inner loops \( K \) (left), (2) varying the compression ratio (middle), and (3) varying the multiplier $\sigma$(right).}
  \label{fig:sensitive}
\end{figure*}

\subsection{Hyper-Representation Learning}
The experiment utilizes the MNIST dataset, where a normalization transform converts images into tensors and standardizes them with a mean of 0.1307 and a standard deviation of 0.3081. We employ a three-layer Multi-Layer Perceptron (MLP) model for image classification. Specifically, the outer optimization targets the hidden units, comprising 81,902 parameters, while the inner optimization focuses on the classification head, which includes approximately 640 parameters. During training, data are randomly split into 10 datasets in the i.i.d. setting and split according to varying probabilities for corresponding classes in the non-i.i.d. setting.

For our $\text{C}^2$DFB approach, the inner learning rate is set at 1 and the outer learning rate at 0.8. Training extends over 80 epochs, with each epoch consisting of 8 iterations for data processing and parameter updates. The mixing step is established at 0.3, and the Lagrangian multiplier is set at 10. During transmission of the inner loop parameters, a compression ratio is maintained at approximately 30\% of the total 640 parameters. For MADSBO, the learning rate is dynamically adjusted during the course of training to enhance performance. Initially, for the first 10 epochs, the learning rate is set at 20 for a warm start, and subsequently adjusted to 100 for the remaining epochs. The outer loop consists of 100 iterations, with 10 inner loops per iteration. The mixing step is consistent with our algorithmic framework. For the naive version of $\text{C}^2$DFB, the hyperparameters are aligned with our $\text{C}^2$DFB to facilitate a direct comparison.

\begin{figure*}[h]
  \centering
  \includegraphics[width=\linewidth, height=3cm]{./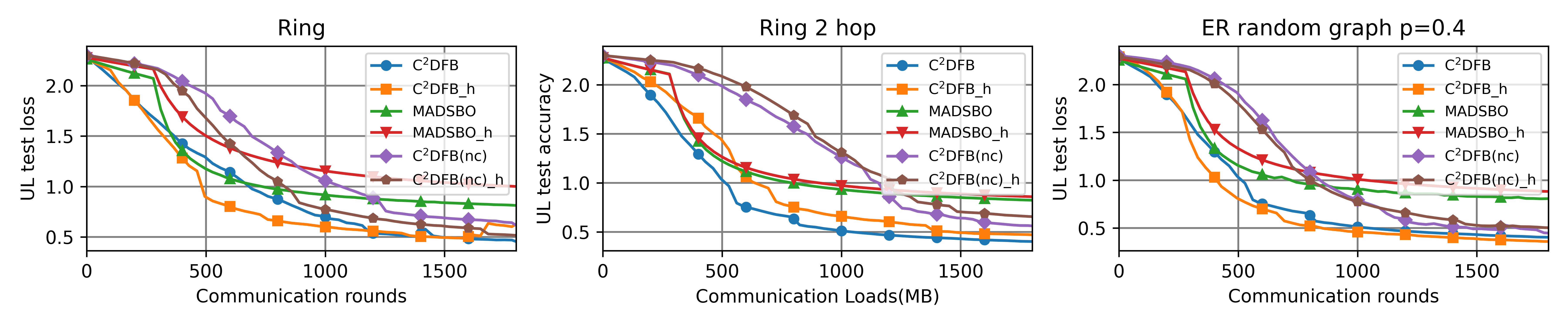}
  \caption{Comparison of test loss against communication round for $\text{C}^2$DFB, MADSBO and  $\text{C}^2$DFB(nc) under three topology on Coefficient Tuning task. The 'h' notation represents a heterogeneous data distribution across 10 clients, with a heterogeneity level set to 0.8 in the experiment.}
  \label{fig:losshyper}
\end{figure*}

% since remaining space is limited, I will put this in the supplementary material

\end{document}